\newcommand\BibTeX{{\rmfamily B\kern-.05em \textsc{i\kern-.025em b}\kern-.08em
T\kern-.1667em\lower.7ex\hbox{E}\kern-.125emX}}
\newtheorem{theorem}{Theorem}
\newtheorem{lemma}{Lemma}
\newtheorem{remark}{Remark}
\DeclareMathOperator\erfc{erfc}
\newcommand{\ie}{{\em i.e.}}
\newcommand{\eg}{{\em e.g.}}
\begin{document}

\title{FSMI: Fast computation of Shannon Mutual Information for information-theoretic mapping}

\author{Zhengdong Zhang\affilnum{1}, Trevor Henderson\affilnum{1},  Sertac Karaman\affilnum{2}, Vivienne Sze\affilnum{1}}

\affiliation{\affilnum{1}Department of Electrical Engineering and Computer Science, MIT, Cambridge, MA, USA\\
\affilnum{2}Department of Aeronautics and Astronautics, MIT, Cambridge, MA, USA}

\corrauth{Zhengdong Zhang, 
Massachusetts Institute of Technology,
Cambridge, MA, US}

\email{zhangzd@mit.edu}


\begin{abstract}

Exploration tasks are embedded in many robotics applications, such as search and rescue and space exploration. 
Information-based exploration algorithms aim to find the most informative trajectories by maximizing an information-theoretic metric, such as the mutual information between the map and potential future measurements. 
Unfortunately, most existing information-based exploration algorithms are plagued by the computational difficulty of evaluating the Shannon mutual information metric. 
In this paper, we consider the fundamental problem of evaluating Shannon mutual information between the map and a range measurement. 
First, we consider 2D environments. We propose a novel algorithm, called the Fast Shannon Mutual Information (FSMI). The key insight behind the algorithm is that a certain integral can be computed analytically, leading to substantial computational savings.  
Second, we consider 3D environments, represented by efficient data structures, \eg, an OctoMap, such that the measurements are compressed by Run-Length Encoding (RLE). We propose a novel algorithm, called FSMI-RLE, that efficiently evaluates the Shannon mutual information when the measurements are compressed using RLE.
For both the FSMI and the FSMI-RLE, we also propose variants that make different assumptions on the sensor noise distribution for the purpose of further computational savings. 
We evaluate the proposed algorithms in extensive experiments. In particular, we show that the proposed algorithms outperform existing algorithms that compute Shannon mutual information as well as other algorithms that compute the Cauchy-Schwarz Quadratic mutual information (CSQMI). In addition, we demonstrate the computation of Shannon mutual information on a 3D map for the first time.

 \end{abstract}
\maketitle


\section{Introduction}

\begin{table*}
    \centering
    \begin{tabular}{|c|c|c|c|c|}
    \hline
\makecell{\bf Time complexity of\\ \bf proposed algorithms}  & \makecell{No encoding of \\ measurement (2D)} & \makecell{Time\\Complexity} & \makecell{Run-length encoding \\ of measurement (3D) } & \makecell{Time\\Complexity}  \\\hline
         \makecell{Exact method assuming \\Gaussian sensor noise}  & \makecell{{\tt FSMI}\\{\small\em (Section~\ref{sec:fsmi_algorithm})}} & $O(n^2)$  & \makecell{{\tt FSMI-RLE}\\{\small\em (Section~\ref{sec:exact_fsmi_rle})}} & $O(n_r^2)$ \\ \hline
         \makecell{Approximate method via\\ Gaussian truncation} & \makecell{ {\tt\ Approx-FSMI} \\ {\small\em (Section~\ref{sec:fsmi_approx})} }  & $O(n\,\Delta)$ & \makecell{ {\tt Approx-FSMI-RLE} \\ {\small\em (Section~\ref{sec:fsmi_rle_approx})} } &  $O(n_r \,\Delta)$ \\ \hline
         \makecell{Exact method assuming\\ uniform sensor noise} & \makecell{{\tt Uniform-FSMI} \\ {\small\em (Section~\ref{sec:fsmi_uniform})}} & $O(n)$ & \makecell{ {\tt Uniform-FSMI-RLE} \\ {\small\em (Section~\ref{sec:fsmi_rle_uniform})} } &  $O(n_r \, H)$ \\ \hline
    \end{tabular}
    \vspace{0.1in}
    
    \caption{
    The time complexity of all six algorithms proposed in this paper are presented in the table, where $n$ is the number of cells that the measurement intersects, $\Delta$ is the truncation length of Gaussian truncation, $n_r$ is the number of classes of cells in a run-length-encoding compression of the measurement, and $H$ is the width of the uniform distribution as measured by the number of cells that the uniform distribution intersects.
    The algorithms are classified depending on two factors: {\em (i)} their assumptions of sensor noise, {\em (ii)} their encoding of measurements. The {\tt FSMI} algorithm is an exact method that assumes Gaussian sensor noise, similar to existing algorithms in the literature. The {\tt Approx-FSMI} algorithm is an approximate method that approximates Guassian sensor noise by truncating the Gaussian distribution. The {\tt Uniform-FSMI} algorithm approximates by assuming uniform sensor noise. 
    The {\tt FSMI-RLE}, {\tt Approx-FSMI-RLE}, and {\tt Uniform-FSMI-RLE} are their versions that represent measurements in run-length encoding, which is essential when working with large-scale maps for three-dimensional environments. 
    The table shows how various approximations (going down in the table) and run-length encoding (going right in the table) reduce time complexity. 
    }

    
    \label{tab:algorithm_summary}
\end{table*}

Robot exploration tasks are embedded and essential in several applications of robotics, including disaster response and space exploration. 
%
%
The problem has received a large amount of attention over the past few decades, resulting in a rich literature.

On the one hand, {\em geometry-based frontier exploration algorithms} approach this problem with heuristics that typically navigate the robot to the frontier of the well known portion of the environment~\citep{yamauchi1997frontier}.
Researchers investigated various objective functions
\citep{burgard2005coordinated, gonzalez2002navigation}, and \cite{holz2011comparative} surveys their performance in practical scenarios.  
These heuristics are very efficient from a computational point of view. However, they lack any rigorous reasoning about information, which makes them relatively inefficient in terms of the path spanned by the robot while exploring the environment~\citep{Elfes1996robot, cassandra1996acting, moorehead2001autonomous, bourgault2002information}. In addition, it is hard to extend the geometry that they rely on to three-dimensional environments~\citep{shen2012stochastic}.

On the other hand, {\em information-based mapping and exploration} techniques consider paths that aim to maximize principled information-theoretic metrics to actively maximize the information collected by the robot. %
The Shannon mutual information between a perspective scan and the occupancy grid is used for exploration in~\cite{bourgault2002information}.  \cite{visser2008balancing} introduces an objective function that balances the mutual information and the moving cost. \cite{charrow2014approximate} developed an approximated mutual information representation to efficient multi-robot control. This mutual information metric is also widely used in many other related applications. \cite{kollar2008efficient} proposes an information theoretic objective for SLAM. \cite{marchant2014bayesian} uses it to perform continuous path planning.
\cite{julian2014mutual} established a rigorous theory and algorithms for evaluating Shannon mutual information between a measurement and the map.
While information-based mapping algorithms using Shannon mutual information (MI) provide guarantees on the exploration of the environment, the evaluation of Shannon MI, {\em e.g.}, by the algorithm provided by~\cite{julian2014mutual}, is computational demanding. The run time of the algorithm scales quadratically with the spatial resolution of the occupancy grid and linearly with the numerical integration resolution of the range measurement due to the absence of an analytical solution. 
It has been pointed out that the speed at which mutual information is evaluated can limit the planning frequency, which in turn limits the velocity of the robot and the exploration speed of the environment~\citep{nelson2015information}.

Towards designing algorithms that are computationally more efficient,~\cite{charrow2015csqmi} proposed the use of an alternative information metric, called Cauchy-Schwarz Quadratic Mutual Information (CSQMI). They show that the integrations in CSQMI can be computed analytically. Additionally, they show a close approximation of CSQMI can be evaluated in time that scales linearly with respect to the spatial resolution of the occupancy grid. It is reported by~\cite{charrow2015csqmi} that CSQMI can be computed substantially faster than Shannon MI, and it behaves similarly to Shannon MI in experiments. Several other works adopted CSQMI as the information metric for exploration.
%
For instance,~\cite{charrow2015information} propose a hybrid method with global and local trajectory optimization based on CSQMI. \cite{nelson2015information} propose an adaptive occupancy grid compression algorithm and uses CSQMI to design the planner on the compressed map. \cite{Tabib2016Computationally} combine CSQMI with trajectory optimization and compression to build an energy-efficient cave exploration system with a drone.



In this paper, we focus on the fundamental problem of computing the Shannon mutual information metric between a range measurement and the map. 
We propose a new class of algorithms for efficient evaluation of this metric. 

First, we propose the Fast Shannon Mutual Information method, also called the {\tt FSMI} algorithm, which evaluates Shannon mutual information exactly for Gaussian distributed sensor noise characteristics as commonly assumed in the literature. The key idea behind the {\tt FSMI} algorithm is to analytically evaluate a certain integral, which leads to substantial computational savings. Second, we propose two variants of this algorithm, called the {\tt Approx-FSMI} and {\tt Uniform-FSMI}, which approximate the evaluation of the same metric by truncating the Gaussian distribution (which was first introduced by~\cite{charrow2015csqmi}) and by assuming a uniform sensor noise distribution, respectively. Third, we propose the variants of these algorithms, called {\tt FSMI-RLE}, {\tt Approx-FSMI-RLE}, and {\tt Uniform-FSMI-RLE}, that can handle measurements represented in run-length encoding. The run-length encoding is a certain kind of compression, which is particularly efficient for working with three-dimensional maps. 
The time complexity of the proposed algorithms are presented in Table~\ref{tab:algorithm_summary}.
For instance, the time complexity of the {\tt FSMI} algorithm is $O(n^2)$, where $n$ is the number of cells in the map that the range measurement intersects. 
In contrast, the time complexity of the existing algorithm for computing Shannon mutual information is $O(n^2 \, \lambda_z)$, where $\lambda_z$ is resolution for a certain numerical integral. 
The {\tt FSMI} algorithm avoids this numerical integral, and various approximations of sensor noise characteristics and encodings of the measurement vector provide even more efficiency. 

We demonstrate this theoretical computational efficiency in experiments. In particular, we present a comparison of various methods for computing mutual information in a computational study, in simulations, and in experiments involving a ground robot equipped with a planar laser scanner. In addition, we present mapping in a three-dimensional environment involving a ground robot equipped with a three-dimensional Velodyne VLP-16 laser scanner.

\section{Preliminaries}
\label{sec:preliminary}

This section is devoted to the introduction of preliminaries and our notation, which we use throughout the paper.
We briefly review the occupancy grid in Section~\ref{sec:preliminary:map}, the Shannon mutual information metric and its computation in Section~\ref{sec:preliminary:shannon}, and the Cauchy-Schwarz quadratic mutual information metric and its computation in Section~\ref{sec:preliminary:csqmi}. 
Finally, in Section~\ref{sec:preliminary:problem}, we formulate the single-beam mutual-information evaluation problem considered in this paper. 

\subsection{The Occupancy Grid}\label{sec:preliminary:map}

We use the occupancy grid to model the environment. 
Following standard convention, we assume that all the occupancy cells are independent and that a Bayesian filter is used to update the occupancy probabilities.

The occupancy grid is denoted by the random variables $M = \{M_1, \dots, M_{K}\}$, where $K$ is the number of cells and $M_i\in\{0, 1\}$ is the binary random variable that indicates the occupancy of $i$-th cell. In this case, $M_i = 0$ indicates an empty cell, $M_i = 1$ indicates an occupied cell. The realization of the random variable $M_i$ is denoted by $m_i$.

The robot is equipped with a range measurement sensor. 
Let the perspective range measurement be denoted by the random variable $Z$. The realization of the random variable $Z$ is denoted by $z$. 
The perspective range measurement at time $t$ is denoted by $Z_t$, and its realization is denoted by $z_t$. The measurements obtained up to time $t$ is denoted by $z_{1:t}$.
Typically, the robot acquires multiple range measurements at the same time, \eg, measuring range in various directions. The sequence of range measurements obtained at time $t$ is denoted by $z_t = (z_t^1,\ldots,z_t^{n_z})$ where $n_z$ is the number of beams in a scan.
Unless explicitly stated otherwise, we assume that the noise distribution of the sensor is a Gaussian with standard deviation $\sigma$ regardless of the travel distance of the beam.

The state of the robot is denoted by $x$. In this paper, the state variable is the pose of the robot. 
We denote the state at time $t$ by $x_t$. We denote the sequence of states from the initial time through time $t$ by $x_{1:t}$. 
The measurements obtained by the robot are a stochastic function of the map, the sensor model, and the state of the robot at that time as shown in Figure~\ref{fig:preliminaries:occupancy_map}.

\begin{figure}
    \centering
    \includegraphics[width=0.9\columnwidth]{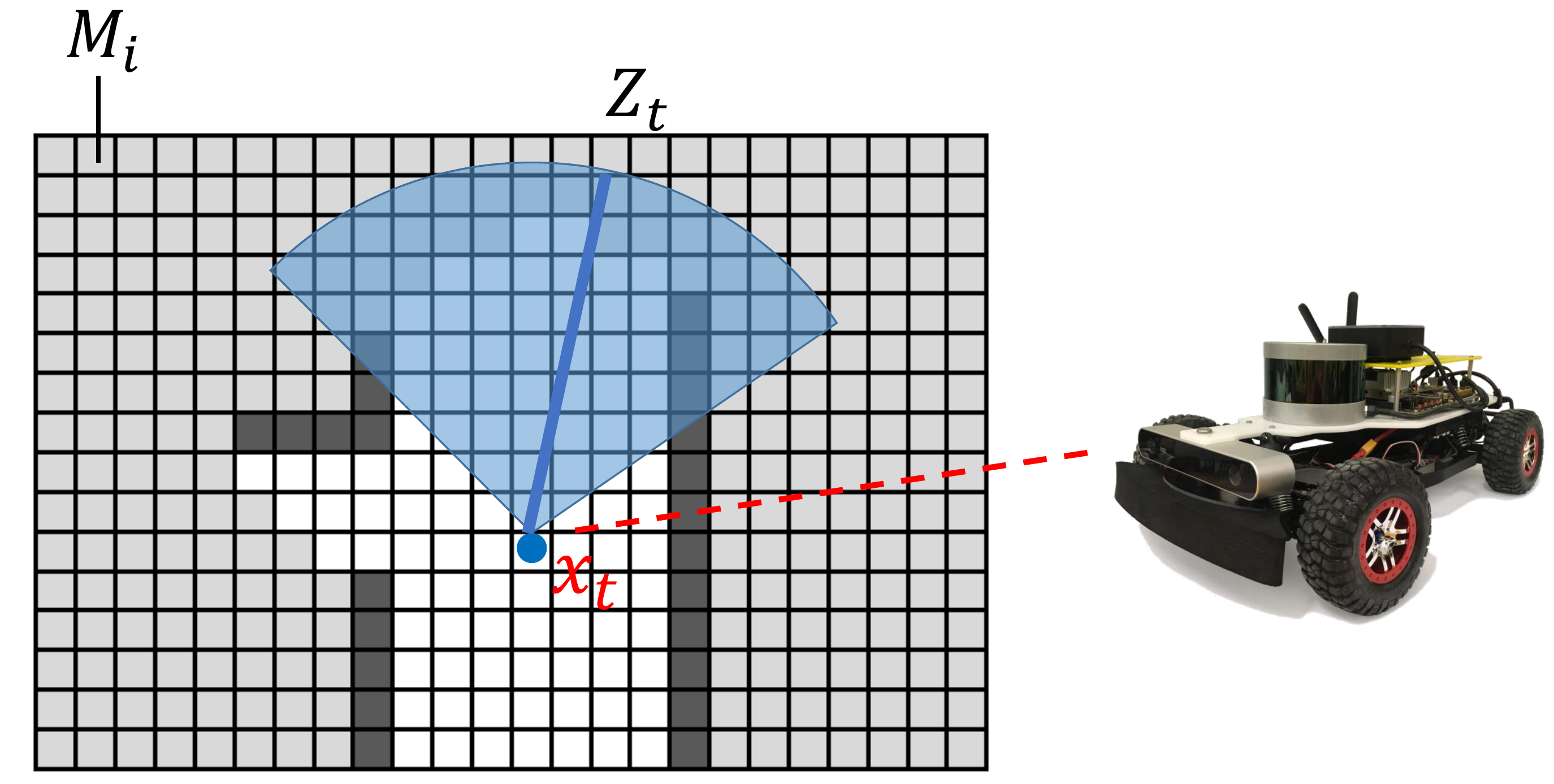}
    \caption{A 2D environment is represented by an occupancy grid. Each cell in the map is associated with a binary random variable $M_i$ indicating the occupancy of the cell. At time $t$, a vehicle can scan at location $x_t$ with perspective range measurement being $Z_t$.}
    \label{fig:preliminaries:occupancy_map}
\end{figure}

We make the the standard independence assumption among the occupancy grid cells, \ie, 
\begin{align*}
&P(M_1 = m_1, \ldots, M_K=m_k|z_{1:t}, x_{1:t}) \\ &\qquad = \prod_{1\le i\le K} P(M_i = m_i|z_{1:t}, x_{1:t}),
\end{align*}
where $P(\cdot)$ is the probability function. We assume that the robot has no prior information on the environment, that is, $P(M_i=1)=P(M_i=0)=0.5$ for all $M_i \in M$. Once a measurement is obtained, the standard Bayesian filter can be used to update the occupancy grid according to
\begin{equation}
\begin{split}
\label{eqn:bayesian_filter}
\frac{P(M_i = 1|z_{1:t}, x_{1:t})}{P(M_i = 0|z_{1:t}, x_{1:t})} = &\frac{P(M_i = 1|z_{1:(t - 1)}, x_{1:(t - 1)})}{P(M_i  = 0|z_{1:(t - 1)}, x_{1:(t - 1)})} \\
& \frac{P(M_i = 1|z_t, x_t)}{P(M_i = 0|z_t, x_t)}. 
\end{split}
\end{equation}

We denote the probability of occupancy of the $i$-th cell by 
$$o_i = P(M_i = 1 | z_{1:t}, x_{1:t}).$$
Additionally, we denote the odds ratio of a cell by
$$r_i = o_i / (1 - o_i).$$
The Bayesian filter given by Equation~\eqref{eqn:bayesian_filter} essentially updates $r_i$ for the cells related to the acquired range measurements.

\subsection{The Shannon Mutual Information Metric} \label{sec:preliminary:shannon}

The Shannon mutual information metric between the map $M$ and the measurement $Z$ is defined as follows:
\begin{equation}
\begin{split}
I(M;Z) = \sum_{m\in\{0, 1\}^K } \int_{z\ge 0} P(Z=z, M = m)&\\
\log{\frac{P(Z=z, M = m)}{P(Z=z)P(M = m)}} dz.&
\end{split}
\end{equation}

In this section, we review the algorithm proposed by~\cite{julian2014mutual} that computes the Shannon mutual information for single range measurement, such as a LiDAR beam. Throughout the paper, we use the word beam and the phrase range measurement interchangeably to describe this measurement.
For notational simplicity, we omit the conditional probability terms $x_{1:t}$ and $z_{1:t}$. Moreover, we use $M'=(M_1, ..., M_{n})$ to represent the cells that this single range measurement intersects. Note that a beam tells no information about the cells in $M$ that it does not intersect. Therefore, $I(M'; Z) = I(M; Z)$. We further assume that cells in $M'$ are listed in ascending order by their distance from the sensor.
In addition, let $\delta_i(z)$ approximate the odds ratio inverse sensing model~\citep{thrun2005probabilistic} for the cell $M_i$:
\begin{equation}
\label{eqn:inverse_sensing}
\delta_i(z) = \left\{\begin{array}{ll}\delta_{occ} & \textrm{$z$ indicates $M_i$ is occupied} \\ \delta_{emp} & \textrm{$z$ indicates $M_i$ is empty} \\ 1 & \textrm{otherwise}\end{array}\right.,
\end{equation}
where $\delta_{occ} > 1$ and $\delta_{emp} < 1$ are hyper parameters. As shown by~\cite{julian2014mutual}, The mutual information between the beam and a single cell $M_i$ is
\begin{equation}
\label{eqn:mi_original}
I(M_i; Z) = \int_{z\ge 0} P(Z = z)f(\delta_i(z), r_i) dz,
\end{equation}
where $P(Z=z)$ is the measurement prior
\begin{align}
\label{eqn:measurement_prior}
\begin{split}
&P(Z = z) \\& \qquad= P(e_0)P(Z = z|e_0) + \sum_{j=1}^n P(e_j) P(Z = z|e_j).
\end{split}
\end{align}
and $f(\delta_i(z), r_i)$ is the following function:
\begin{equation}
    f(\delta, r) = \log\left(\frac{r+1}{r+\delta^{-1}}\right) - \frac{\log\delta}{r\delta+1}.
\end{equation}
Here with a slight abuse of notation, we define $P(e_j)$ to denote the probability that the $j$-th cell is the first occupied cell on the beam and all the cells before the $j$-th cell are empty. Similarly, we let $P(e_0)$ represent the probability that all cells are empty:
\begin{equation}
\label{eqn:Pej}
    \begin{split}
P(e_0) & = \prod_{l = 1}^n (1 - o_l); \\
P(e_j) & = o_j\prod_{l < j}(1-o_l), \qquad \mbox {for all }j \ge 1. 
    \end{split}
\end{equation}

The function $P(Z = z|e_j)$, as a function of $z$, denotes the probability distribution of the range measurement, if the beam passes through the occupancy cells before the $j$-th cell and is blocked by the $j$-th cell. It is determined by the distance between the $j$-th cell and the sensor as well as the previously discussed sensor noise model. For notational simplicity, we abbreviate $P(Z = z|e_j)$ as $P(z|e_j)$ in the discussion that follows, particularly in Section~\ref{sec:fsmi_correctness} where we prove the correctness of the main result.

Since Equation~\eqref{eqn:mi_original} does not have a known analytical solution, \cite{julian2014mutual} evaluate it numerically by discretizing $z$:
\begin{equation}
I(M_i; Z) = \sum_z P(Z = z)f(\delta_i(z), r_i) \lambda_z^{-1},
\end{equation}
where $\lambda_z$ is the resolution for numerical integration.
The mutual information then is computed as $\sum_{i = 1}^n I(M_i; Z)$. The time complexity of this algorithm is $O(n^2\lambda_z)$~\citep{julian2014mutual}.

\subsection{The Cauchy-Schwarz Quadratic Mutual Information (CSQMI) Metric} \label{sec:preliminary:csqmi}
Let $P(Z, M)$ be the joint probability distribution of $Z$ and $M$, and $P(Z), P(M)$ be the probability distribution of $Z$ and $M$ respectively. The CSQMI~\citep{principe2010information, charrow2015csqmi} between $M$ and $Z$ is defined as
\begin{equation}
I_{CS}(M; Z) = D_{CS}\left(P(Z, M), P(Z)P(M)\right),
\end{equation}
where $D_{CS}(\cdot, \cdot)$ is the Cauchy-Schwarz divergence between two probability distributions $P_1$ and $P_2$, defined as follows:
\begin{equation}
\begin{split}
    &D_{CS}(P_1(X), P_2(X))=\\&-\frac{1}{2}\log\frac{\left(\int_x P_1(X=x)P_2(X=x) dx\right)^2}
    {\left(\int_x P_1(X=x)^2 dx\right)\left(\int_x P_2(X=x)^2 dx\right)}.
\end{split}
\end{equation}

Both the Shannon mutual information and CSQMI measure the difference between $P(M, Z)$ and $P(M)P(Z)$; \cite{charrow2015csqmi} show that the mutual information map and the CSQMI map of the same occupancy grid look visually similar in practical robotics applications. 

\cite{charrow2015csqmi} has shown that CSQMI for a beam that intersects with $n$ cells can be evaluated analytically in $O(n^2)$ time. This is lower than the $O(n^2\lambda_z)$ complexity of Shannon mutual information, because numerical integration is avoided in CSQMI. In this paper, FSMI is compared against CSQMI. For completeness, we reiterate the formula to compute CSQMI by~\citep{charrow2015csqmi}:
\begin{align}
\nonumber I_{CS}(M; Z) &=  \log\sum_{l=0}^Cw_l\mathcal{N}(0, 2\sigma^2)\\
\nonumber& \hspace{-0.3in} +\log \Big(\prod_{i=1}^{n}(o_i^2 + (1 - o_i)^2)\\ 
\nonumber& \qquad \sum_{j=0}^{n}\sum_{l=0}^{n} P(e_j)P(e_l)\mathcal{N}(\mu_l - \mu_j, 2\sigma^2)\Big)\\
&\hspace{-0.5in} -2\log{\sum_{j=0}^{n}\sum_{l=0}^{n}P(e_j)w_l \mathcal{N}(\mu_l - \mu_j, 2\sigma^2)},
\label{eqn:csqmi_exact}
\end{align}
where $\mathcal{N}(x, \sigma^2)$ is the probability density function at $x$ of a normal distribution of zero mean and standard derivation of $\sigma$, $\mu_l$ is the distance from the center of the $l$-th cell to the range sensor on the robot and 
$$
w_l = P(e_l)^2\prod_{j < l}(o_j^2 + (1-o_j)^2).
$$

\cite{charrow2015csqmi} proposed a close approximation to CSQMI that truncates the tails of a Gaussian distribution. This enables CSQMI to be computed approximately in $O(n)$ time. Specifically, each double sum in Equation~\eqref{eqn:csqmi_exact} can be approximated as follows:
\begin{equation}
\sum_{j=0}^{n}\sum_{l=j-\Delta}^{j+\Delta} \alpha_{j, l}\mathcal{N}(\mu_l - \mu_j, 2\sigma^2),
\end{equation}
where $\alpha_{j, l}$ represents the corresponding coefficient in the double sum, and $\Delta$ is a small constant such as $\Delta = 3$.

\subsection{Problem Formulation}\label{sec:preliminary:problem}
A typical information-theoretic exploration strategy is to generate a set of potential trajectories, evaluate mutual information along each of trajectory, and choose the one with the highest mutual information per travel cost~\citep{nelson2015information,charrow2015csqmi,charrow2015information,Tabib2016Computationally}.

In order to evaluate the mutual information along a trajectory, the trajectory is typically discretized in the state space, and mutual information is computed at each state and then summed, as shown in Figure~\ref{fig:preliminaries:path_sample}. To avoid double counting, 
cells that have already contributed to the total mutual information are marked and not used in future computations~\citep{charrow2015csqmi}.
\begin{figure}
    \centering
    \includegraphics[width=0.7\columnwidth]{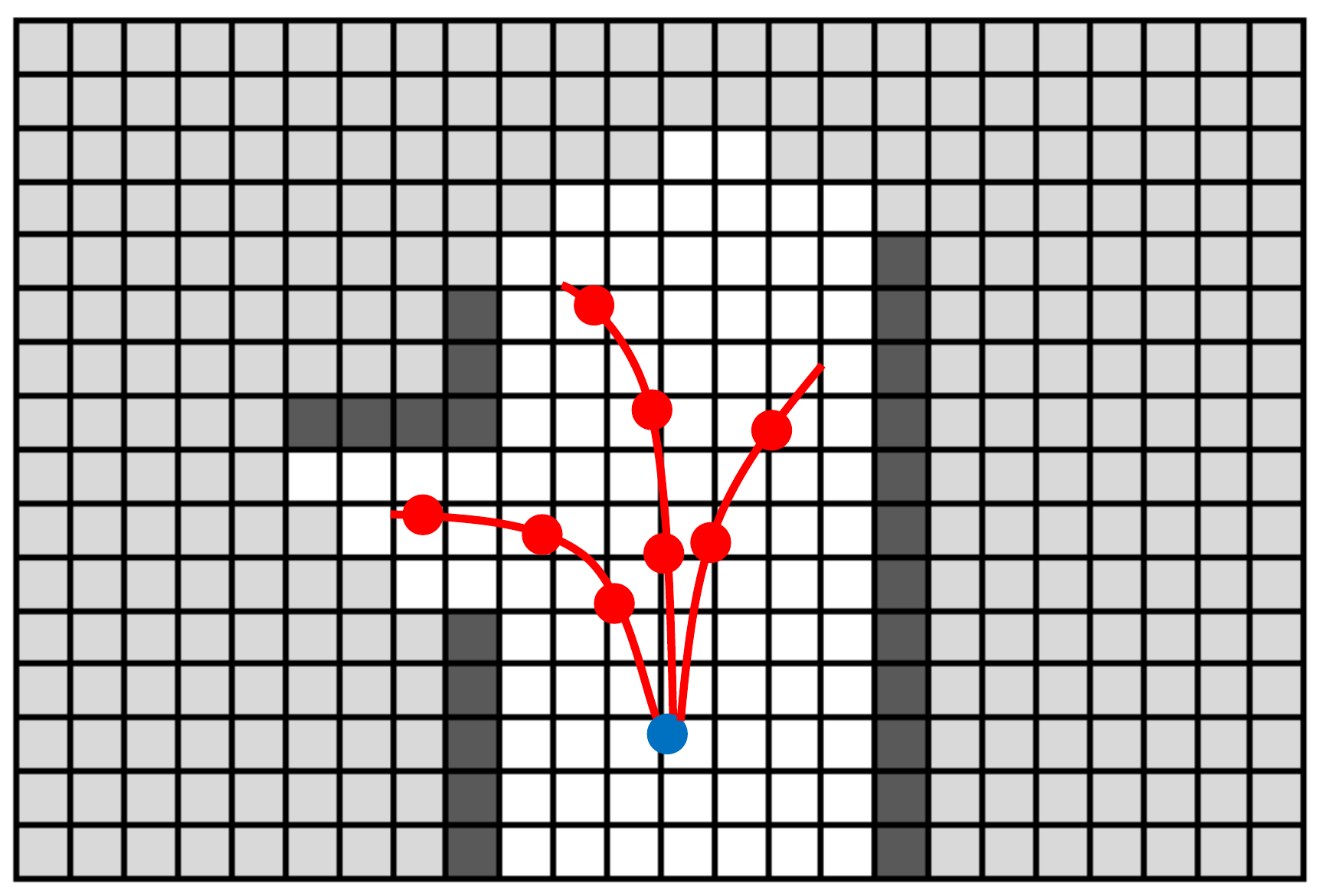}
    \caption{Each candidate trajectory (marked by red) is discretized into a set of states. Shannon mutual information is evaluated between the perspective scan measurements at each state and the map, then summed up along the trajectory. }
    \label{fig:preliminaries:path_sample}
\end{figure}
It has been shown in previous research~\citep{julian2014mutual} that the fundamental problem of information-based exploration using range sensing is to efficiently evaluate the mutual information between the map and single range measurement. 
This problem is solved several times in typical mutual-information-based exploration algorithms. We call this the {\em single-beam mutual-information computation} problem.

Consider a range measurement sensor, \eg, using a LiDAR beam emanating from the sensor.
Again, we let $M'=(M_1, \ldots, M_{n})$ denote the vector of binary random variables representing all the occupancy cells that the beam intersects. Let the random variable $Z$ denote the corresponding range measurement.
We wish to measure the dependence between these random variables using Shannon mutual information. The mutual information between the range measurement $Z$ and a single cell $M_i$ can be computed using Equation~\eqref{eqn:mi_original}. 
We also follow the standard assumption that the mutual information between the range measurement $Z$ and all of the cells in $M$ is the summation of the mutual information between the beam and each individual cell:
\begin{align}
\begin{split}
\label{eqn:mi_sum}
I(M'; Z) & = \sum_{i=1}^n I(M_i; Z)\\
&\hspace{-.5in} = \sum_{i=1}^n \int_{z\ge 0} P(Z = z)f(\delta_i(z), r_i) dz,\\
&\hspace{-.5in} = \sum_{i=1}^n \int_{z\ge 0} P(Z = z) \Big( \log\left(\frac{r+1}{r+\delta^{-1}}\right) - \frac{\log\delta}{r\delta+1} \Big) dz,
\end{split}
\end{align}
where $P(Z = z)$ is the measurement prior defined in Equation~\eqref{eqn:measurement_prior}.

This paper focuses on efficient algorithms and efficient approximations for computing the quantity defined in Equation~\eqref{eqn:mi_sum}. We first discuss the algorithms in 2D. Then, we propose an extension of the algorithms to the 3D mapping problem that further reduces the computation complexity using the structure of a 3D map.


\section{The Fast Shannon Mutual Information (FSMI) Algorithm for 2D Mapping}
\label{sec:fast_mi}

This section is devoted to the presentation of the FSMI algorithm for mapping in the 2D plane. 
The FSMI algorithm is presented in Section~\ref{sec:fsmi_algorithm}.
The correctness of the algorithm is proved in Section~\ref{sec:fsmi_correctness}, and its computational complexity is analyzed in Section~\ref{sec:fsmi_complexity}. Efficient, practical implementations of the FSMI enabled by tabulation and approximation techniques are discussed in Section~\ref{sec:fsmi_approx}. Finally, efficient implementations for the case when the sensor noise follows a uniform distribution, instead of a Gaussian distribution, are discussed in Section~\ref{sec:fsmi_uniform}. 

\subsection{The FSMI Algorithm}
\label{sec:fsmi_algorithm}
In this section, we present the Fast Shannon Mutual Information (FSMI) algorithm. The key idea behind the FSMI algorithm is the following: Instead of performing the summation  in Equation~\eqref{eqn:mi_sum} directly, FSMI computes $I(M; Z)$ (same as $I(M'; Z)$) holistically and analytically evaluates one of the resulting integrals, which leads to substantial computational savings.

Algorithm~\ref{alg:i(m, z)} summarizes this procedure with subroutines in Algorithms~\ref{alg:pei} and \ref{alg:c_k}. 
To describe the algorithms, let us first present our notation. 
Let $l_i$ be the distance from the beam's origin to the $i$-th cell where $l_1 = 0$ and $l_i$ is monotonically increasing, as shown in the middle of Figure~\ref{fig:fz_constant}.
Let us denote the center of the $i$-th cell by $\mu_i = (l_i + l_{i + 1}) / 2$. 
Just as in Equation~\eqref{eqn:Pej}, $P(e_j)$ is used to denote the probability that the $j$-th cell is the first non-empty cell in $M'$, \ie, 
\begin{align*}
P(e_j) = o_j\prod_{i < j} (1-o_i).
\end{align*} 
Let $P(e_0)$ be the probability that all cells are empty. $\Phi(\cdot)$ denotes the standard normal CDF.
We also define 
\begin{align}
\label{eqn:C_k}
C_k = f(\delta_{occ}, r_k) + \sum_{i < k} f(\delta_{emp}, r_i)
\end{align} 
where $\delta_{occ}$ and $\delta_{emp}$ are from the inverse sensor model defined in Equation~\eqref{eqn:inverse_sensing} and 
\begin{align}
\label{eqn:G_jk}
G_{j, k} = \int_{l_{k}}^{l_{k + 1}} P(z|e_j) dz
\end{align}.

In Algorithm~\ref{alg:i(m, z)}, the total mutual information is initialized to zero (Line \ref{line:alg_fsmi_I_to_0}). Then, Algorithm~\ref{alg:pei} and Algorithm~\ref{alg:c_k} are used to compute $P(e_j)$ for all $0\le j\le n$ (Line \ref{line:alg_fsmi_pej}) and $C_k$ for all $1\le k\le n$ (Line \ref{line:alg_fsmi_Ck}), respectively. Then, Algorithm~\ref{alg:i(m, z)} enumerates $j, k$ through $0$ to $n$, and for each pair $(k, j)$ it computes $G_{k, j}$ and accumulates $P(e_j)C_kG_{k, j}$ into the total mutual information (Lines \ref{line:alg_fsmi_for_start}-\ref{line:alg_fsmi_for_end}).

\begin{algorithm}[!tb]
\caption{The FSMI algorithm}\label{alg:i(m, z)}
\begin{algorithmic}[1]
\Require $\sigma$ and $l_i, o_i$ for $1\le i\le n$.
\State \label{line:alg_fsmi_I_to_0} $I\leftarrow 0$
\State \label{line:alg_fsmi_pej} Compute $P(e_j)$ for $0\leq j \leq n$ with Algorithm~\ref{alg:pei}.
\State \label{line:alg_fsmi_Ck} Compute $C_k$ for $1 \leq k \leq n$ with Algorithm~\ref{alg:c_k}.
\For{\label{line:alg_fsmi_for_start}$j=0$ \textbf{to} $n$}
    \For{$k = 0$ \textbf{to} $n$}
        \State $G_{k, j} \leftarrow \Phi((l_{k + 1} - \mu_j) / \sigma_j) - \Phi((l_{k} - \mu_j)/ \sigma_j)$
        \State\label{line:alg_fsmi_for_end} $I\leftarrow I + P(e_j)C_k G_{k, j}$
    \EndFor
\EndFor
\State \Return $I$
\end{algorithmic}
\end{algorithm}
\begin{algorithm}[!tb]
\caption{Evaluate $P(e_j)$ for $0 \leq j \leq n$}\label{alg:pei}
\begin{algorithmic}[1]
\Require $o_i$ for $1 \leq i \leq n$
\State $E_0\leftarrow 1$
\For{$j=1$ \textbf{to} $n$}
    \State $E_j \leftarrow E_{j - 1} (1 - o_j)$
    \State $P(e_j)\leftarrow E_{j - 1}o_j$
\EndFor
\State $P(e_0) = E_{n}$
\State \Return $P(e_j)$ for $0\le j\le n$
\end{algorithmic}
\end{algorithm}
\begin{algorithm}[!tb]
\caption{Evaluate $C_k$ for $1 \leq k \leq n$}\label{alg:c_k}
\begin{algorithmic}[1]
\Require $\delta_{emp}, \delta_{occ}$ and $r_i$ for $1 \leq i \leq n$
\State $q_0=0$
\For{$k=1$ \textbf{to} $n$}
    \State $q_k = q_{k - 1} + f(\delta_{emp}, r_k)$
    \State $C_k = q_{k - 1} + f(\delta_{occ}, r_k)$
\EndFor
\State \Return $C_k$ for $1\le k\le n$
\end{algorithmic}
\end{algorithm}

\subsection{Correctness of the FSMI algorithm}
\label{sec:fsmi_correctness}

The following theorem states the correctness of the FSMI algorithm, that is, the FSMI algorithm indeed returns $I(M';Z)$, \ie, the Shannon mutual information between the measurement $Z$ and $M'$, the part of the map the beam intersects with. 
\begin{theorem}[Correctness of FSMI]
\label{theorem:exact_shannon_mi}

The Shannon mutual information between the range measurement of the beam and all of the cells in $M'$ is
\begin{equation}
\label{eqn:fast_exact_mi}
I(M'; Z) = \sum_{j = 0}^{n}\sum_{k = 1}^{n} P(e_j)C_k G_{k, j}
\end{equation}
\end{theorem}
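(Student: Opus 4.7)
The plan is to start from Equation~\eqref{eqn:mi_sum}, which expresses $I(M';Z)$ as a single sum over cells of integrals of $P(Z=z)\,f(\delta_i(z), r_i)$, and to transform it into the double sum of Equation~\eqref{eqn:fast_exact_mi} by (i) exchanging the order of summation and integration, and (ii) partitioning the integration domain so that the inverse-sensor-model function $\delta_i(z)$ becomes piecewise constant.

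First, I would substitute the expansion $P(Z=z) = \sum_{j=0}^{n} P(e_j)\,P(z\mid e_j)$ from Equation~\eqref{eqn:measurement_prior} into Equation~\eqref{eqn:mi_sum} and pull the (finite) sum over $j$ outside the integral, obtaining
\[
I(M';Z) = \sum_{i=1}^{n} \sum_{j=0}^{n} P(e_j) \int_{z \ge 0} P(z\mid e_j)\, f(\delta_i(z), r_i)\, dz.
\]
Next, I would partition $[0,\infty)$ into the intervals $[l_k, l_{k+1})$ for $k=1,\dots,n$. The key observation is that for $z \in [l_k, l_{k+1})$, the inverse sensor model in Equation~\eqref{eqn:inverse_sensing} takes the value $\delta_{occ}$ at $i=k$, the value $\delta_{emp}$ at each $i<k$, and the value $1$ at each $i>k$. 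Because $f(1, r) = \log(1) - 0 = 0$ directly from the definition of $f$, summing over $i$ on this interval collapses to
\[
\sum_{i=1}^{n} f(\delta_i(z), r_i) \;=\; f(\delta_{occ}, r_k) + \sum_{i<k} f(\delta_{emp}, r_i) \;=\; C_k,
\]
which is exactly the definition in Equation~\eqref{eqn:C_k} and, crucially, is independent of $z$ throughout $[l_k, l_{k+1})$. Swapping the now-constant sum over $i$ out of the integral on each piece and recognising $G_{k,j} = \int_{l_k}^{l_{k+1}} P(z\mid e_j)\, dz$ from Equation~\eqref{eqn:G_jk} yields exactly the double sum of Equation~\eqref{eqn:fast_exact_mi}.

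The main obstacle I expect is bookkeeping at the boundary rather than any deep step: specifically, justifying that nothing is lost on the tail $z \ge l_{n+1}$ (whatever happens beyond the last cell either declares all traversed cells empty, which means every $f(\delta_i(z), r_i)$ evaluates to $f(\delta_{emp}, r_i)$ and can be folded into a $C_{n+1}$-style term, or is absent by truncation of the sensor model), and verifying that the $j=0$ ``all empty'' branch with its distinct conditional density $P(z\mid e_0)$ is paired with the correct $C_k$ coefficients. Once these boundary conventions are pinned down to match those implicit in the algorithm's range $1 \le k \le n$, the remainder is a clean Fubini-style interchange, and the identity reduces to algebraic recognition of $C_k$ and $G_{k,j}$.
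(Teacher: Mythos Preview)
Your proposal is correct and follows essentially the same route as the paper: the paper isolates the observation that $\sum_{i=1}^n f(\delta_i(z),r_i)=C_k$ on $[l_k,l_{k+1})$ as a separate lemma (Lemma~\ref{lemma:F_z_constant}) and then performs exactly the expansion of $P(Z=z)$, swap of sums, and partition of the integral that you describe. Your boundary worries about the tail $z\ge l_{n+1}$ and the $j=0$ branch are handled in the paper only implicitly (it simply writes $\int_z=\sum_{k=1}^n\int_{l_k}^{l_{k+1}}$), so your caution there is warranted but does not indicate any divergence in method.
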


To prove this theorem, we need the following intermediate result regarding the structure of $f(\delta, r)$.

\begin{figure}[!tb]
\centering 
\includegraphics[width=0.85\columnwidth]{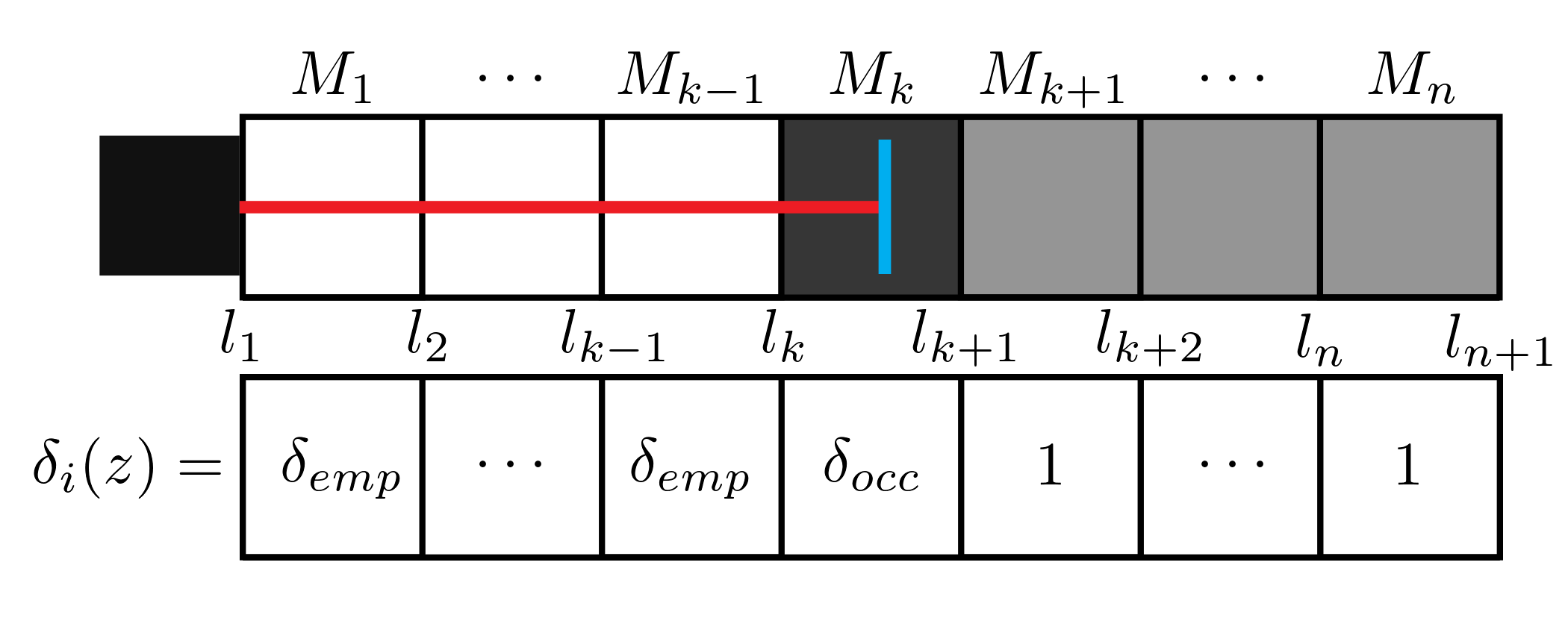}
\caption{Illustration of the key idea behind the proof of Lemma~\ref{lemma:F_z_constant}. The sensor beam (red) hits an obstacle (blue) in cell $M_k$. The value of the odds ratio inverse sensor model is shown at the bottom of the figure.}
\label{fig:fz_constant}
\end{figure}

Let $F(z) = \sum_{i = 1}^{n} f(\delta_i(z), r_i)$. The following lemma states that $F(z)$ is a piecewise-constant function.
\begin{lemma}[Piecewise Constant Summation]
\label{lemma:F_z_constant}
 The function $F(z)$ is piecewise constant. In particular, if $z$ lies in the $k$-th cell,
i.e.  $l_k\le z < l_{k + 1}$, then
$F(z) = C_k$ where $C_k = f(\delta_{occ}, r_k) + \sum_{i < k} f(\delta_{emp}, r_i)$.
\end{lemma}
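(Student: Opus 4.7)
The plan is to show that the claim reduces to a case-by-case inspection of the inverse sensor model together with a single algebraic identity, namely that $f(1,r)=0$.

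First I would fix an arbitrary $z$ with $l_k \le z < l_{k+1}$, so that the measurement falls inside the $k$-th cell. By the definition of the inverse sensor model in Equation~\eqref{eqn:inverse_sensing}, a measurement lying in cell $k$ signals that (i) the beam passed through cells $1,\dots,k-1$ without being blocked, indicating those cells are empty; (ii) the beam terminated in cell $k$, indicating that cell is occupied; and (iii) the beam provides no information about the cells beyond it, namely $M_{k+1},\dots,M_n$. Reading this off, I would conclude that $\delta_i(z)=\delta_{emp}$ for $i<k$, $\delta_k(z)=\delta_{occ}$, and $\delta_i(z)=1$ for $i>k$.

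Second, I would record the small algebraic fact that $f(1,r)=0$ for every $r$, since
\begin{equation*}
f(1,r) = \log\!\left(\frac{r+1}{r+1}\right) - \frac{\log 1}{r+1} = 0.
\end{equation*}
This kills every contribution from the cells beyond the $k$-th one in the sum defining $F(z)$. What remains is exactly
\begin{equation*}
F(z) = f(\delta_{occ}, r_k) + \sum_{i<k} f(\delta_{emp}, r_i) = C_k,
\end{equation*}
which is independent of $z$ on the interval $[l_k, l_{k+1})$. Since $k$ was arbitrary, $F$ is constant on each such interval and hence piecewise constant on $[0, l_{n+1})$.

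There is no real obstacle here; the step that deserves the most care is simply the translation between the qualitative description of the inverse sensor model (``$z$ indicates $M_i$ is occupied/empty/unknown'') and the precise partition of the indices into $i<k$, $i=k$, and $i>k$ when $z\in[l_k,l_{k+1})$. Once that is nailed down and the identity $f(1,r)=0$ is invoked, the result falls out immediately by summation.
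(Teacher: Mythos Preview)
Your proposal is correct and follows essentially the same approach as the paper's own proof: a case split on the index $i$ relative to $k$ to determine $\delta_i(z)$, together with the observation that $f(1,r)=0$ eliminates the terms with $i>k$. The only difference is cosmetic---you spell out the algebraic verification of $f(1,r)=0$ explicitly, whereas the paper simply asserts it.
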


\begin{proof}
For $i < k$, a measurement of $z$ implies that the beam has passed through $M_i$. Therefore $M_i$ should be empty and $\delta_i(z) = \delta_{emp}$.
By definition, 
$\delta_k(z) = \delta_{occ}$. 
A measurement of $z$ also indicates that the beam stops at cell $k$ which gives no information about cells $M_j$ for $j > k$, so $f(\delta_j(z), r_j) = f(1, r_j) = 0$.
Therefore, each term of $F(z)$ is constant for $l_k \leq z \leq l_k +1$ and the sum is equal to the desired $C_k$, proving the lemma.\qed
\end{proof}

Lemma~\ref{lemma:F_z_constant} shows that the function $F(z)$ changes its value only at the cell boundaries, as $z$ increases. (See Figure~\ref{fig:fz_constant} for an illustration.)
If we compute the mutual information between a range measurement and all the cells it can intersect at once, we can take advantage of this property to turn the integration in Equation~\eqref{eqn:mi_original} into a sum.

Using Lemma~\ref{lemma:F_z_constant}, we prove Theorem~\ref{theorem:exact_shannon_mi} as follows:

\begin{proof}{(Theorem~\ref{theorem:exact_shannon_mi})}
We begin with the total Shannon mutual information for one range measurement as stated in Equation~\eqref{eqn:mi_sum}.
We substitute the definition of Shannon mutual information provided in Equation~\eqref{eqn:mi_original}, and rearrange to reveal the sum described in Lemma~\ref{lemma:F_z_constant}.
We arrive at the following expression:
\begin{equation}
\label{eqn:mi_deriv_1}
\begin{split}
&I(M'; Z) = \sum_{i=1}^{n} I(M_i; Z) = \sum_{i=1}^{n} \int_z P(z)f(\delta_i(z), r)dz\\
&=\sum_{i=1}^{n} \int_z \sum_{j=0}^{n} P(e_j)P(z|e_j)f(\delta_i(z), r)dz\\
&=\sum_{j=0}^{n} P(e_j)\int_z P(z|e_j)\left(\sum_{i=1}^{n}f(\delta_i(z), r_i)\right)dz\\
&=\sum_{j=0}^{n} P(e_j)\int_z P(z|e_j)F(z) dz.
\end{split}
\end{equation}
Inspired by the result of Lemma~\ref{lemma:F_z_constant}, we divide the integration over $z$ into a sum of multiple integration intervals across each cell boundary.
This allows us to isolate the described term that is constant across each cell as follows:
\begin{equation}
\label{eqn:mi_deriv_2}
\begin{split}
I(M'; Z) &= \sum_{j=0}^{n} P(e_j)\sum_{k = 1}^{n}\int_{l_k}^{l_{k+1}} P(z|e_j)C_k dz\\
&=\sum_{j=0}^{n}\sum_{k = 1}^{n} P(e_j)C_k \int_{l_{k}}^{l_{k + 1}} P(z|e_j) dz\\
&=\sum_{j=0}^{n}\sum_{k = 1}^{n} P(e_j)C_k G_{k, j}.\\
\end{split}
\end{equation}
This completes the proof.\qed
\end{proof}

\subsection{Computational Complexity of the FSMI Algorithm}
\label{sec:fsmi_complexity}
We study the time complexity of Algorithm \ref{alg:i(m, z)} with respect to, $n$, the number of cells that a single range measurement intersects. The result is stated in the following theorem:
\begin{theorem}
\label{theorem:fsmi_complexity}
The time complexity of Algorithm~\ref{alg:i(m, z)} is $O(n^2)$.
\end{theorem}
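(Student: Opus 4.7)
The proof is a straightforward line-by-line accounting of the work done in Algorithms~\ref{alg:i(m, z)}, \ref{alg:pei}, and \ref{alg:c_k}, so the plan is simply to bound the cost of each phase and add. The only subtlety worth flagging explicitly is that evaluating $G_{k,j}$ requires the standard normal CDF $\Phi(\cdot)$; I would state the standing assumption (used throughout the paper) that $\Phi(\cdot)$ is evaluated in $O(1)$ time, e.g.\ by a precomputed lookup table or a hardware-supported \texttt{erfc} call. Given that, every remaining operation in the algorithms is an elementary arithmetic operation.

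First I would analyze Algorithm~\ref{alg:pei}. It maintains the running product $E_j = \prod_{i \le j}(1-o_i)$ and derives each $P(e_j) = E_{j-1} o_j$ in constant time per iteration, so the entire loop terminates in $O(n)$ time. Next I would analyze Algorithm~\ref{alg:c_k}, which maintains a running sum $q_k = \sum_{i\le k} f(\delta_{emp}, r_i)$ and outputs $C_k = q_{k-1} + f(\delta_{occ}, r_k)$; again this is constant work per iteration, giving $O(n)$ total. Each evaluation of $f(\delta, r)$ is $O(1)$ since the formula involves only a logarithm, a reciprocal, and a handful of arithmetic operations.

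Then I would turn to the main routine. The initialization on Line~\ref{line:alg_fsmi_I_to_0} is $O(1)$, and the two precomputation calls on Lines~\ref{line:alg_fsmi_pej}--\ref{line:alg_fsmi_Ck} together contribute $O(n)$ by the two preceding paragraphs. The double loop on Lines~\ref{line:alg_fsmi_for_start}--\ref{line:alg_fsmi_for_end} iterates over $(j,k) \in \{0,\dots,n\}\times\{0,\dots,n\}$, and each iteration performs: two evaluations of $\Phi$, a subtraction to form $G_{k,j}$, and a multiply-accumulate step, all of which are $O(1)$ given the look-up assumption. Hence the nested loop runs in $O(n^2)$ time.

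Summing the contributions gives $O(1) + O(n) + O(n) + O(n^2) = O(n^2)$, which establishes the claim. The only place the argument could be challenged is the $O(1)$ evaluation cost of $\Phi$; I would make this assumption explicit at the start of the proof, noting that it is the same convention adopted in the CSQMI analysis of \cite{charrow2015csqmi} and so the comparison of complexities in Table~\ref{tab:algorithm_summary} remains apples-to-apples.
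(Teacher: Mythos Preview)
Your proposal is correct and follows essentially the same approach as the paper: the paper's proof also reduces to three intermediate lemmas stating that Algorithm~\ref{alg:pei} and Algorithm~\ref{alg:c_k} each run in $O(n)$, and that $G_{k,j}$ is $O(1)$ given a look-up table for $\Phi(\cdot)$, after which the double loop immediately yields $O(n^2)$. Your write-up is slightly more explicit about the per-iteration arithmetic and about the $\Phi$ assumption, but the structure and content are the same.
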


The proof of Theorem~\ref{theorem:fsmi_complexity} is straightforward with the following intermediate results:

\begin{lemma}
$P(e_j), 0\leq j \leq n$ can be computed altogether in $O(n)$ with Algorithm~\ref{alg:pei}.
\end{lemma}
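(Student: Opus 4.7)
The plan is to verify two things about Algorithm~\ref{alg:pei}: that it produces the values $P(e_j)$ as defined in Equation~\eqref{eqn:Pej}, and that it does so using $O(n)$ arithmetic operations.

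First I would establish the invariant that, after the $j$-th iteration of the loop, $E_j = \prod_{i=1}^{j}(1-o_i)$. This is an immediate induction on $j$: the base case $E_0 = 1$ agrees with the empty product, and the update rule $E_j \leftarrow E_{j-1}(1-o_j)$ propagates the invariant. Given this, the assignment $P(e_j) \leftarrow E_{j-1}\, o_j$ inside the loop yields $o_j \prod_{i<j}(1-o_i)$, which is exactly the definition of $P(e_j)$ in Equation~\eqref{eqn:Pej} for $j\ge 1$. The final assignment $P(e_0) \leftarrow E_n = \prod_{i=1}^{n}(1-o_i)$ matches the definition of $P(e_0)$ as the probability that all cells along the beam are empty.

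For the complexity claim, I would simply count operations: the loop executes $n$ iterations, each performing a constant number of multiplications and assignments, and the initialization and post-loop assignment contribute $O(1)$. Hence the total running time is $O(n)$, and the algorithm returns all $n+1$ values $P(e_0), P(e_1), \ldots, P(e_n)$ simultaneously. There is no real obstacle here; the only subtlety worth flagging is the indexing convention that $P(e_j)$ for $j\ge 1$ depends on $E_{j-1}$ (not $E_j$), which is why the product used in $P(e_j)$ ranges over $i<j$ rather than $i\le j$, and why $P(e_0)$ must be assigned after the loop terminates rather than inside it.
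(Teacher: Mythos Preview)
Your proposal is correct. The paper does not actually supply a proof for this lemma: it is stated without argument as one of several ``intermediate results'' from which Theorem~\ref{theorem:fsmi_complexity} is said to follow straightforwardly. Your invariant-based correctness argument and loop-iteration count are exactly the kind of verification the paper implicitly expects the reader to supply, and you have filled that gap cleanly; there is nothing to compare against.
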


\begin{lemma}
$C_k, 1\le k \le n$ can be computed altogether with Algorithm~\ref{alg:c_k}.
\end{lemma}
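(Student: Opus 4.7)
The plan is to verify Algorithm~\ref{alg:c_k} by establishing a loop invariant for the auxiliary variable $q_k$ and then matching the resulting expression for $C_k$ against Equation~\eqref{eqn:C_k}. The correctness argument is a one-variable induction, and the complexity argument is an operation count; together these establish the claim and, as implicitly required for Theorem~\ref{theorem:fsmi_complexity}, an $O(n)$ time bound.

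First, I would prove by induction on $k$ that after iteration $k$ of the main loop, $q_k = \sum_{i=1}^{k} f(\delta_{emp}, r_i)$. The base case $q_0 = 0$ is fixed by initialization, and the inductive step is immediate from the update rule $q_k \leftarrow q_{k-1} + f(\delta_{emp}, r_k)$. With the invariant in hand, the assignment $C_k \leftarrow q_{k-1} + f(\delta_{occ}, r_k)$ evaluates to
\begin{equation*}
C_k = \sum_{i < k} f(\delta_{emp}, r_i) + f(\delta_{occ}, r_k),
\end{equation*}
which coincides with the defining Equation~\eqref{eqn:C_k}.

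Second, for the time complexity, each iteration performs a constant number of additions and at most two evaluations of $f$, and the loop runs $n$ times, so the full sequence $C_1,\ldots,C_n$ is produced in $O(n)$ time. Reusing the running sum $q_{k-1}$ across iterations is precisely what removes the naive $O(n^2)$ cost of recomputing the prefix sum for each $C_k$ from scratch.

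The only subtlety worth flagging, and really the only place a verification error could creep in, is the off-by-one in the two assignments inside the loop: $C_k$ must be built from $q_{k-1}$ (so as to exclude the $k$-th empty-cell term) rather than from $q_k$. I would make this indexing explicit in the inductive step so that the identification with $\sum_{i < k} f(\delta_{emp}, r_i)$ is unambiguous. Beyond that, the proof is essentially routine algorithm verification and presents no substantial obstacle.
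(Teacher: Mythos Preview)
Your proposal is correct. The paper does not actually supply a proof for this lemma; it simply lists it among several ``straightforward'' intermediate results supporting Theorem~\ref{theorem:fsmi_complexity}, so your loop-invariant verification and $O(n)$ operation count are the natural elaboration of what the paper leaves implicit.
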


\begin{lemma}
The standard normal CDF, $\Phi(\cdot)$, can be evaluated with a look-up table. $G_{k, j}$ can be evaluated in $O(1)$ as follows:
\begin{equation}
G_{k, j} = \Phi\left(\frac{l_{k + 1} - \mu_j}{\sigma} \right) - \Phi\left(\frac{l_k - \mu_j}{\sigma} \right)
\end{equation} 
\end{lemma}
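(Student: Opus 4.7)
The plan is to unpack the definition of $G_{k,j}$ from Equation~\eqref{eqn:G_jk} and apply the Gaussian sensor noise model stated in Section~\ref{sec:preliminary:map}. Recall that $P(z \mid e_j)$ denotes the conditional measurement distribution given that the beam is blocked by the $j$-th cell, and under the standard assumption of Gaussian sensor noise with standard deviation $\sigma$ centered at the cell center $\mu_j = (l_j + l_{j+1})/2$, we have $P(z \mid e_j) = \frac{1}{\sigma}\phi\!\left(\frac{z - \mu_j}{\sigma}\right)$, where $\phi$ is the standard normal PDF. So the first step is simply to write this identity down explicitly.

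Next, I would perform the change of variables $u = (z - \mu_j)/\sigma$, so that $du = dz / \sigma$ and the $1/\sigma$ factor cancels. This transforms the integral
\begin{equation*}
G_{k,j} = \int_{l_k}^{l_{k+1}} \frac{1}{\sigma}\phi\!\left(\frac{z - \mu_j}{\sigma}\right) dz
\end{equation*}
into $\int_{(l_k - \mu_j)/\sigma}^{(l_{k+1} - \mu_j)/\sigma} \phi(u)\, du$, which by the fundamental theorem of calculus equals $\Phi\!\left(\frac{l_{k+1} - \mu_j}{\sigma}\right) - \Phi\!\left(\frac{l_k - \mu_j}{\sigma}\right)$. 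This yields the claimed formula.

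For the complexity claim, I would observe that once a look-up table for $\Phi$ has been precomputed (one-time cost independent of $n$ and the map), each evaluation of $\Phi$ reduces to an array lookup (possibly with linear interpolation between two adjacent table entries), which is $O(1)$. Computing each of the two arguments $(l_{k+1} - \mu_j)/\sigma$ and $(l_k - \mu_j)/\sigma$ requires only $O(1)$ arithmetic operations, and so does the final subtraction. Thus the total cost of evaluating $G_{k,j}$ is $O(1)$.

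There is no real obstacle here; the lemma is essentially just an explicit evaluation of a Gaussian integral combined with the standard observation that tabulating $\Phi$ makes CDF lookups constant-time. The only minor subtlety worth flagging in the write-up is to make explicit that $\mu_j$ refers to the center of the $j$-th cell (so that $P(z\mid e_j)$ is a Gaussian centered at $\mu_j$), which is already established in Section~\ref{sec:fsmi_algorithm}, and that the precomputation cost of the $\Phi$ look-up table is amortized and independent of the per-beam complexity.
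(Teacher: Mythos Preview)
Your proposal is correct. The paper itself does not supply a proof of this lemma; it simply states the formula and the $O(1)$ claim as a self-evident intermediate step toward the complexity theorem. Your argument (write $P(z\mid e_j)$ as the Gaussian density centered at $\mu_j$, change variables, apply the fundamental theorem of calculus, then note that two table lookups and $O(1)$ arithmetic suffice) is exactly the natural justification and fills in the details the paper omits.
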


Note that, unlike the algorithm proposed in~\citep{julian2014mutual}, the FSMI algorithm does not perform any numerical integration.
As a result, the complexity of FSMI outperforms the algorithm in~\citep{julian2014mutual} by a factor of $\lambda_z$, the integration resolution.

\begin{remark}
FSMI has the same time complexity as the exact version of CSQMI.
\end{remark}

Furthermore, since we assume the noise distribution has constant $\sigma$, we can directly precompute $\Phi(\frac{\cdot}{\sigma})$ to avoid one division operation per query.

Note that in addition to tabulating $\Phi(\cdot)$, we also build look-up tables for $f(\delta_{occ}, r_i)$ and $f(\delta_{emp}, r_i)$ for all $i$ rather than evaluate them based on their definition as that can be computationally expensive. Specifically, we precompute $f(\delta_{occ}, r_i)$ and $f(\delta_{emp}, r_i)$ for a discrete set of values of $r_i$ and store the results in a look-up table. We set $\delta_{emp}\delta_{occ} = 1$, following~\cite{julian2014mutual}; Since $f(\delta_{emp}, r_i) = f(\delta_{occ}, 1 / r_i)$, a single look-up table suffices.

%

\subsection{Efficient Implementations via Gaussian Truncation}
\label{sec:fsmi_approx}

In~\citep{charrow2015csqmi}, the authors propose to approximate the CSQMI metric by setting the tail of the Gaussian noise distribution to zero. 
We apply their technique to FSMI. Specifically, let $\Delta$ be the truncation width. We approximate Equation~\eqref{eqn:fast_exact_mi} as follows:
\begin{equation}
\label{eqn:fsmi_approx}
I(M'; Z) = \sum_{j = 0}^{n}\sum_{k = j - \Delta}^{j + \Delta} P(e_j) C_k G_{k, j}.
\end{equation}

We refer to the variation of FSMI that applies the approximation described above as {\em Approx-FSMI}. We refer to the corresponding CSQMI algorithm with the same approximation as {\em Approx-CSQMI}. 

The complexity of evaluating Approx-FSMI is $O(n\Delta)$
This is the same complexity as Approx-CSQMI in~\citep{charrow2015csqmi}. 
Similar to CSQMI, the value of $\Delta$ can be as small as $3$ in practical problem instances~\citep{charrow2015csqmi}.

Even though both algorithms have the same asymptotic complexity, we argue that the Approx-FSMI algorithm can be implemented so that it requires fewer multiplications when compared to the Approx-CSQMI algorithm. 
Intuitively, this is because our computation in Equation~\eqref{eqn:fsmi_approx} has only one double summation while the approximate version of CSQMI in Equation~\eqref{eqn:csqmi_exact} has two similarly structured double summations of the same size. 
In this comparison, we omit other operations, {\em e.g.}, additions, because they are significantly cheaper than multiplications on both general purpose CPUs and FPGAs~\citep{rabaey2002digital, hennessy2011computer}. Other operations, such as $\log(\cdot)$, occur only a constant number of times in Equation~\eqref{eqn:fsmi_approx} and the approximate version of Equation~\eqref{eqn:csqmi_exact}.

\begin{theorem}[Number of Multiplications]
\label{theorem:num_mul}
Evaluating Approx-FSMI and Approx-CSQMI require $(\Delta + 3)n$ and $(2\Delta + 9)n$ multiplications, respectively.
\end{theorem}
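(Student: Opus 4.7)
The plan is to tally the arithmetic multiplications performed by a careful implementation of each algorithm, isolating the one-time precomputations (such as the sequences $\{P(e_j)\}$, $\{C_k\}$, and $\{w_l\}$) from the main double-sum evaluations, and then collecting the leading-order terms in $n$. Two observations drive the count. First, for cells of equal width the quantities $G_{k,j}$ and $\mathcal{N}(\mu_l - \mu_j, 2\sigma^2)$ depend only on the index difference $k-j$ (respectively $l-j$) and are even in it; a one-dimensional lookup table therefore delivers each query at no multiplicative cost. Second, the factor $P(e_j)$ is independent of the inner index and can be pulled outside the inner sum, so that incorporating it costs only one multiplication per outer index.

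For Approx-FSMI I would argue as follows. The sequence $\{P(e_j)\}_{j=0}^{n}$ can be produced by a light variant of Algorithm~\ref{alg:pei} that uses a single multiplication per index (setting $P(e_j) = o_j E_{j-1}$ and then $E_j = E_{j-1} - P(e_j)$), so this precomputation costs $n$ multiplications. Algorithm~\ref{alg:c_k} uses only additions, so the $\{C_k\}$ cost zero multiplications. Writing $g(d) = G_{j+d,\, j}$ and invoking the identity $g(-d) = g(d)$, the main sum in Equation~\eqref{eqn:fsmi_approx} becomes
\begin{equation*}
\sum_{j=0}^{n} P(e_j)\,\Bigl( C_j\, g(0) + \sum_{d=1}^{\Delta}\bigl(C_{j-d}+C_{j+d}\bigr)\,g(d)\Bigr),
\end{equation*}
whose bracketed inner expression costs $\Delta+1$ multiplications per $j$ and which adds one more for the outer $P(e_j)$ factor. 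Summing over the $n+1$ outer indices and adding the $P(e_j)$ precomputation gives the leading-order count $(\Delta + 3)\,n$.

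For Approx-CSQMI I would run the same exercise over the truncated version of Equation~\eqref{eqn:csqmi_exact}. The relevant one-time precomputations are $\{P(e_j)\}$, $\{o_i^2+(1-o_i)^2\}$, its running product $\{\prod_{i<l}(o_i^2+(1-o_i)^2)\}$, $\{P(e_l)^2\}$, and $\{w_l\}$, each contributing $O(n)$ multiplications. Two truncated double sums dominate the rest. The first, $\sum P(e_j)P(e_l)\mathcal{N}(\mu_l-\mu_j,2\sigma^2)$, has a summand symmetric in $(j, l)$ with an even Gaussian kernel, so the halving trick used for FSMI applies; the second, $\sum P(e_j)\,w_l\,\mathcal{N}(\mu_l-\mu_j,2\sigma^2)$, is \emph{not} symmetric, since $P(e_j)\neq w_j$ in general, so its inner loop must retain all $2\Delta+1$ terms per $j$ and thus costs roughly twice as much as the first. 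Adding the two double-sum costs to the $O(n)$ precomputations and to the $O(1)$ scalar multiplications that combine the running product with the first double sum and apply the $-2$ factor in front of the third log term gives the leading-order count $(2\Delta + 9)\,n$.

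The principal obstacle is bookkeeping rather than analysis: every auxiliary quantity has to be counted exactly once (in CSQMI, $\{P(e_j)\}$, $\{w_l\}$, and the running products are shared between the two double sums and the first log term), and the boundary truncations of the window $[j-\Delta, j+\Delta]$ at the indices $0$ and $n$ must be checked not to alter the leading coefficient since near the boundary some inner sums have fewer than $2\Delta+1$ terms. Once these details are laid out carefully, the two counts claimed in the theorem follow directly from adding the individual contributions.
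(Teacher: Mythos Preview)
The paper itself contains no proof of this theorem: the text immediately following the statement is an editing artifact (``Corrected between measurement UNTITLEDSAVEDSAVED Type your title'') rather than an argument. So there is nothing to compare against, and your proposal must stand on its own merits.

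Your overall plan---tabulate the Gaussian factors by the index difference, factor $P(e_j)$ out of the inner sum, and count multiplications---is the right one, and your Approx-FSMI count is clean and correct: $n$ for the $P(e_j)$ recursion, $\Delta+1$ per $j$ for the paired inner sum $C_j g(0)+\sum_{d=1}^{\Delta}(C_{j-d}+C_{j+d})g(d)$, and one for the outer $P(e_j)$, giving $(\Delta+3)n$ at leading order.

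The CSQMI sketch, however, has a genuine gap. You claim that the second double sum $\sum_j P(e_j)\sum_l w_l\,\mathcal{N}(\mu_l-\mu_j,2\sigma^2)$ ``must retain all $2\Delta+1$ terms per $j$'' because $P(e_j)\neq w_j$ breaks the $(j,l)$ symmetry. But the pairing trick you used for FSMI depends only on the \emph{evenness of the kernel} $\mathcal{N}(\cdot)$, not on any symmetry between the two coefficient sequences. For fixed $j$ one has
\[
\sum_{l=j-\Delta}^{j+\Delta} w_l\,\mathcal{N}(\mu_l-\mu_j,2\sigma^2)
= w_j\,\mathcal{N}(0,2\sigma^2)+\sum_{d=1}^{\Delta}\bigl(w_{j-d}+w_{j+d}\bigr)\,\mathcal{N}(d\cdot w,2\sigma^2),
\]
which costs exactly the same $\Delta+1$ multiplications as the first double sum or the FSMI inner loop. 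The $(j,l)$ symmetry of the first CSQMI sum gives no additional saving on top of this, so your ``roughly twice as much'' conclusion does not follow. With both double sums at $(\Delta+2)n$, the difference between the two methods must be attributed elsewhere: the CSQMI-specific precomputations $o_i^2+(1-o_i)^2$, the running product $\prod_{i<l}(o_i^2+(1-o_i)^2)$, and $w_l=P(e_l)^2\prod_{i<l}(\cdots)$, which together contribute several further multiples of $n$. To reach the claimed $(2\Delta+9)n$ you will need to redo the bookkeeping with both double sums treated symmetrically and the precomputation costs itemized explicitly; the asymmetry argument as written does not carry the weight you assign to it.
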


Corrected
between measurement
UNTITLEDSAVEDSAVED
Type your title

\subsection{Shannon Mutual Information Under Uniform Measurement Noise Model}

\label{sec:fsmi_uniform}

Recall that the asymptotic time complexity of Approx-FSMI is $O(n\Delta)$. Here, we show that when the sensor noise is modeled by a uniform distribution rather than a Gaussian distribution, under reasonable technical assumptions, the Shannon mutual information can be evaluated in $O(n)$, independently of $\Delta$.

\begin{theorem}[Uniform-FSMI]

\label{theorem:uniform_fsmi}

Suppose that cells have constant width. Again, let the boundary of the $i$-th cell being $l_i$ and $l_{i+1}$ and for all $i$, $l_{i + 1} - l_i = \Delta L$, which is a constant. Suppose that the sensor noise model is uniform and that the limits are quantized onto cell boundaries:

\begin{equation}
P(Z | e_i) \sim U[l_i - H\Delta L,\; l_{i + 1} + H\Delta L],
\end{equation}
for $H\in\mathbb{Z}^+$ is a constant for the beam.

Let $D_i = \sum_{j\le i} C_j$ for $1\le i\le n$ and $D_i = 0$ otherwise. Then the Shannon mutual information between the beam and all the cells it intersects is

\begin{equation}
I(M'; Z) 
= \sum_{j=0}^n P(e_j) \frac{D_{j + H} - D_{j - H - 1}}{2H + 1}
\end{equation}

\end{theorem}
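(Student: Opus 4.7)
The plan is to specialize the exact FSMI identity from Theorem~\ref{theorem:exact_shannon_mi} to the uniform noise model. Once $G_{k,j}$ is evaluated in closed form, the resulting double sum collapses into a single sum via a telescoping of consecutive $C_k$ values, which is exactly what the prefix variables $D_i$ are designed to capture.

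First, I would start from
\begin{equation*}
I(M'; Z) = \sum_{j=0}^{n} \sum_{k=1}^{n} P(e_j)\, C_k\, G_{k,j},
\end{equation*}
with $G_{k,j} = \int_{l_k}^{l_{k+1}} P(z \mid e_j)\, dz$. Under the uniform hypothesis, $P(z \mid e_j)$ is the constant density $1/[(2H+1)\Delta L]$ on $[l_j - H\Delta L,\, l_{j+1} + H\Delta L]$ and zero outside. Because all cells have equal width $\Delta L$ and because the two support endpoints coincide with cell boundaries by the quantization assumption, that support is exactly the union of the $2H+1$ consecutive cells $M_{j-H}, \ldots, M_{j+H}$. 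Hence $G_{k,j} = 1/(2H+1)$ whenever $j-H \leq k \leq j+H$ and $G_{k,j} = 0$ otherwise.

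Substituting into the double sum gives
\begin{equation*}
I(M'; Z) = \sum_{j=0}^{n} \frac{P(e_j)}{2H+1} \sum_{k=j-H}^{j+H} C_k,
\end{equation*}
and the inner sum is a window of $2H+1$ consecutive $C_k$ values that telescopes as a difference of prefix sums. With $D_i$ defined as in the theorem statement, one reads off $\sum_{k=j-H}^{j+H} C_k = D_{j+H} - D_{j-H-1}$, which delivers the claimed formula.

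The only delicate step is the boundary indexing, and I expect this to be the main place to be careful in the write-up rather than a genuine analytic obstacle. When $j - H < 1$ the lower end of the window extends below the first intersected cell, and the convention $D_{j-H-1} = 0$ correctly truncates the missing tail since the corresponding $C_k$ values are interpreted as zero; the upper-boundary case $j + H > n$ is handled analogously, under the standard assumption that cells past $M_n$ contribute nothing to the mutual information of a beam that already terminated within the intersected window. Apart from this bookkeeping, the derivation is a one-step specialization of Theorem~\ref{theorem:exact_shannon_mi}, carrying essentially no extra analytic content once the piecewise-constant structure guaranteed by Lemma~\ref{lemma:F_z_constant} is combined with the piecewise-constant noise density.
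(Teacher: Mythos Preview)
Your proposal is correct and follows essentially the same route as the paper: start from the FSMI double sum of Theorem~\ref{theorem:exact_shannon_mi}, evaluate $G_{k,j}$ in closed form under the uniform noise model to obtain the $(2H{+}1)$-cell window, and collapse the inner sum via the prefix sums $D_i$. If anything, your treatment of the boundary indexing is more explicit than the paper's, which simply relies on the convention $D_i=0$ outside $1\le i\le n$ without further comment.
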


\begin{proof}

This proof follows the proof of 

Theorem \ref{theorem:exact_shannon_mi} until Eq. ~\eqref{eqn:mi_deriv_1}. 

There, we plug in the PDF of the uniform distribution:


\begin{equation}
\label{eqn:uniform}
\begin{split}
I(M'; Z) 
&=\sum_{j=0}^{n}\sum_{k = 1}^{n} P(e_j)C_k \int_{l_{k}}^{l_{k + 1}} P(z|e_j) dz\\
&=\sum_{j=0}^{n}P(e_j)\sum_{k = j - H}^{j + H}\frac{C_k}{2H+1}\\
&= \sum_{j=0}^n P(e_j) \frac{D_{j + H} - D_{j - H - 1}}{2H + 1}
\end{split}
\end{equation}







This completes the proof.\qed

\end{proof}

\begin{remark}

If the sensor does not strictly follow a uniform distribution, we can approximate it with a uniform distribution by matching the mean and variance of the two distributions. For example, if $P(z | e_i)\sim \mathcal{N}(\mu_i, \sigma)$


we can set
$H = \text{round}\left(\sqrt{3}\sigma  - 1 / 2\right)$.
\end{remark}

The algorithm to compute Uniform-FSMI is summarized in Algorithm~\ref{alg:mi_uniform}. Its complexity is stated in the following theorem:

\begin{theorem}[Time complexity of the Uniform-FSMI algorithm]
The time complexity of Algorithm~\ref{alg:mi_uniform} is $O(n)$.
\end{theorem}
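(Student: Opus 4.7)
The plan is to bound the running time of Algorithm~\ref{alg:mi_uniform} by decomposing it into four phases and showing that each phase runs in $O(n)$ time, so that the total work is $O(n)$. The four phases, in order of execution, are: (i) computing $P(e_j)$ for $0 \le j \le n$; (ii) computing $C_k$ for $1 \le k \le n$; (iii) computing the prefix sums $D_i$ for $1 \le i \le n$ (and setting $D_i = 0$ for $i \le 0$); and (iv) evaluating the outer sum $\sum_{j=0}^{n} P(e_j)\,(D_{j+H} - D_{j-H-1})/(2H+1)$.

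For phase (i), I would invoke the lemma already established in Section~\ref{sec:fsmi_complexity} stating that Algorithm~\ref{alg:pei} computes all $P(e_j)$ in $O(n)$ via a single left-to-right pass maintaining the running product $E_{j-1}$. For phase (ii), the analogous lemma for Algorithm~\ref{alg:c_k} gives $O(n)$ by a single pass maintaining the cumulative sum $q_k$ of $f(\delta_{emp}, r_i)$ terms. For phase (iii), computing the prefix sums $D_i = \sum_{j \le i} C_j$ reduces to the standard recurrence $D_i = D_{i-1} + C_i$ with $D_0 = 0$, which is clearly $O(n)$.

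The essential observation, and the step that would otherwise threaten the $O(n)$ bound, is phase (iv). The proof of Theorem~\ref{theorem:uniform_fsmi} rewrites the inner windowed sum $\sum_{k=j-H}^{j+H} C_k$ as $D_{j+H} - D_{j-H-1}$; because the $D_i$ are already tabulated, each term $P(e_j)(D_{j+H} - D_{j-H-1})/(2H+1)$ is evaluated with $O(1)$ table lookups, one subtraction, one multiplication, and one division by the precomputed constant $2H+1$. Summing over $j = 0, \ldots, n$ therefore costs $O(n)$ total, independent of the window half-width $H$.

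The main conceptual point to emphasize is precisely this prefix-sum trick: a naive implementation of the windowed sum inside the outer loop would give $O(nH)$, matching the $O(n\Delta)$ cost of Approx-FSMI, whereas the cumulative tabulation of $C_k$ decouples the running time from $H$. Adding up the four phases yields $O(n) + O(n) + O(n) + O(n) = O(n)$, which completes the proof. No phase involves numerical integration or nested loops over the cells, so there are no hidden factors to account for.
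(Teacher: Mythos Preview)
Your proposal is correct and follows essentially the same approach as the paper's own proof: decompose Algorithm~\ref{alg:mi_uniform} into the four phases (computing $P(e_j)$, computing $C_k$, computing the prefix sums $D_k$, and the final accumulation loop), observe that each runs in $O(n)$, and sum. The paper's proof is terser and does not explicitly highlight the prefix-sum trick that decouples the running time from $H$, but the structure and substance are identical.
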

\begin{proof}
Line~\ref{line:mi_uniform:pej} computes all of $P(e_j)$ in $O(n)$. Line~\ref{line:mi_uniform:C_k} computes all of $C_k$ in $O(n)$ as well. The for-loop from Line~\ref{line:mi_uniform:D_k:start} to Line~\ref{line:mi_uniform:D_k:end} finishes in $O(n)$ time. The last for-loop from Line~\ref{line:mi_uniform:I_start} to Line~\ref{line:mi_uniform:I_end} computes the Shannon mutual information in $O(n)$. Therefore, the total time complexity of Algorithm~\ref{alg:mi_uniform} is $O(n)$. \qed
\end{proof}

The time complexity of the Uniform-FSMI algorithm outperforms Approx-FSMI and Approx-CSQMI by a factor of $\Delta$. In the experiments presented in Section~\ref{sec:experiments}, we demonstrate how this translates to an additional speedup of the Shannon mutual information computation in practice.

\begin{algorithm}

\caption{The Uniform-FSMI algorithm}\label{alg:mi_uniform}

\begin{algorithmic}[1]

\Require $H$ and $r_i$ for $1\le i\le n$.

\State $I\leftarrow 0$

\State Compute $P(e_j)$ for $0\leq j \leq n$ with Algorithm \ref{alg:pei}. \label{line:mi_uniform:pej}

\State Compute $C_k$ for $1 \leq k \leq n$ with Algorithm \ref{alg:c_k}. \label{line:mi_uniform:C_k}

\\

$D_k \leftarrow 0$

\For{$k = 1$ \textbf{to} $n$}\label{line:mi_uniform:D_k:start}

    \State $D_k \leftarrow D_{k - 1} + C_k$ 

\EndFor \label{line:mi_uniform:D_k:end}

\For{$j=0$ \textbf{to} $n$}\label{line:mi_uniform:I_start}

    \State $I\leftarrow I + P(e_j) \frac{D_{\min(n, j + H)} - D_{\max(0, j - H - 1)}}
    {2H+1}$

\EndFor \label{line:mi_uniform:I_end}

\State \Return $I$

\end{algorithmic}

\end{algorithm}

ALL ALERTS


\section{Fast Shannon Mutual Information for 3D Environment using Run-Length Encoding}
\label{sec:fsmi_3d}

The algorithms and analysis provided in Section~\ref{sec:fast_mi} focused on two-dimensional environments. In this section, we are concerned with algorithms and representations that can handle three-dimensional environments. 
A natural extension of the two-dimensional occupancy map is a three-dimensional voxel map~\citep{roth1989building, moravec1996robot}. 
Indeed, the algorithms and analysis presented in Section~\ref{sec:fast_mi} readily extend to three-dimensional mapping problems using the voxel map. 
However, in most applications, the memory requirements for the voxel map exceed what is typically available on embedded computers today, which can be mounted on robots of smaller form factors. Hence, due to the size of the map, the algorithms based on voxel maps will be relatively inefficient when used as-is for three-dimensional mapping of large-scale environments.

Fortunately, in most applications, the three-dimensional space that a robot navigates has a special structure: the empty spaces are typically large and continuous. 
The OctoMap data structure in~\cite{octo} takes advantage of this structure. It compresses the map by using voxels of varying sizes. See Figure~\ref{fig:octomap_raytracing} for an illustration. Large homogeneous regions that would contain thousands of small cells can be represented with a single cell in an OctoMap representation. Thus, OctoMap representations have become widely used in mapping and exploration in three-dimensional environments~\citep{endres20143, whelan2015real, burri2015real}. 

To the best of our knowledge, there exists no prior work that studies the computation of mutual information between range measurements and the environment represented by an OctoMap. 

For efficient computation, we represent each measurement using the run-length encoding (RLE)~\citep{robinson1967results}. Consider a measurement beam, as shown in Figure~\ref{fig:octomap_raytracing}. Suppose we project this beam onto equally-sized ``virtual cells,'' as shown in Figure~\ref{fig:octomap_rle}. Then, the RLE encoding is a sequence of numbers that encodes each occupancy value together with the number of consecutive virtual cells with the same occupancy value. This simple compression method is valuable for representing measurements on the OctoMap structure in a way that we can apply the analysis presented in the previous section. 

\begin{figure}[!b]
\centering 
\subfigure[Ray-tracing on OctoMap]{\label{fig:octomap_raytracing}\includegraphics[width=0.7\columnwidth]{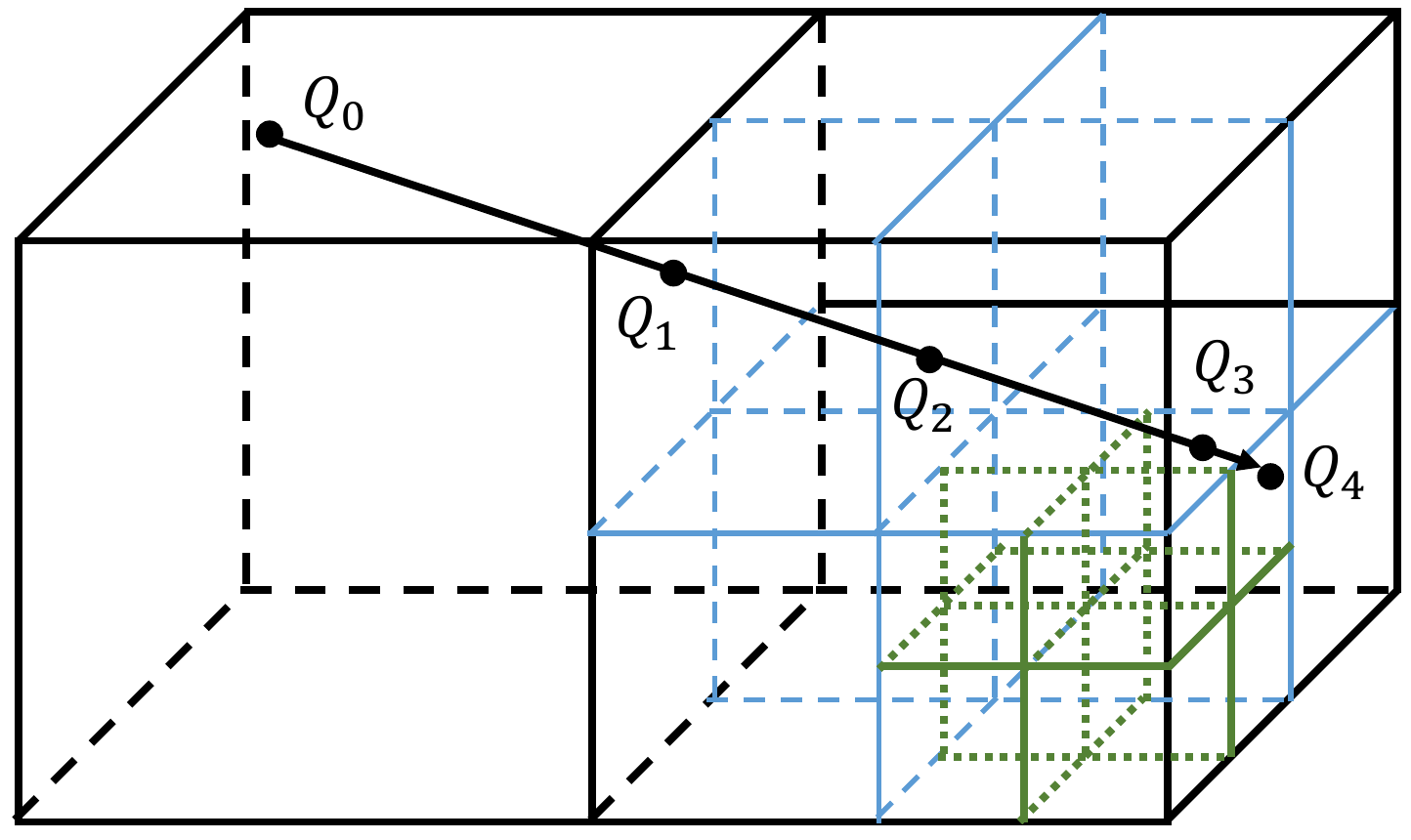}}
\subfigure[Ray-tracing result represented by run-length encoding (RLE)]{\label{fig:octomap_rle}\includegraphics[width=0.9\columnwidth]{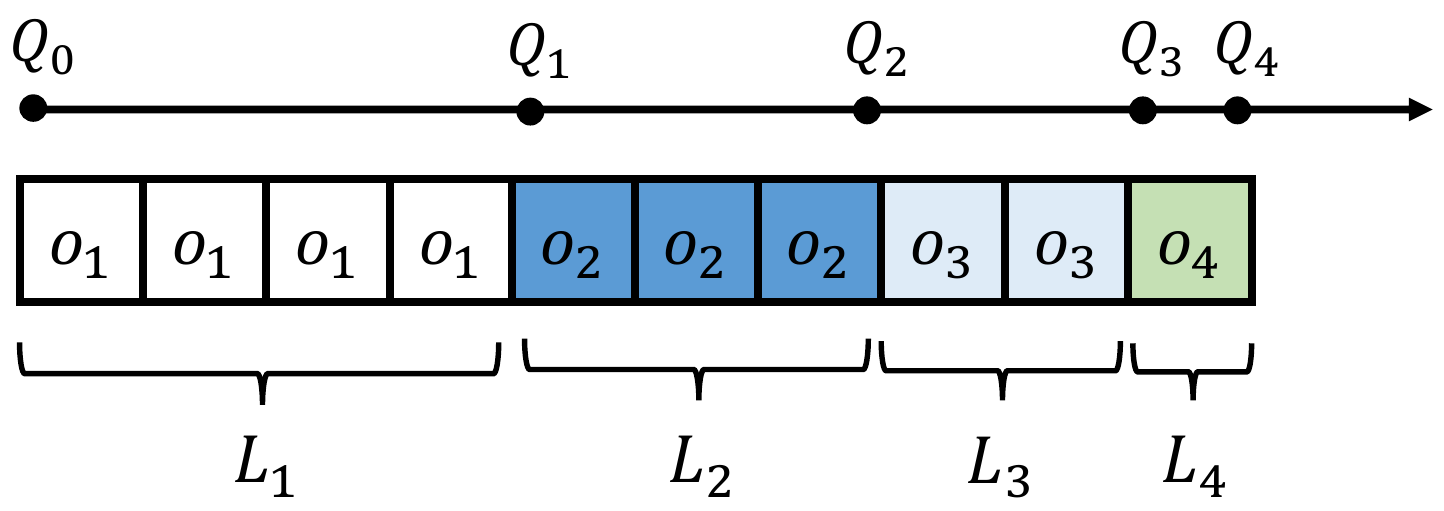}}
\caption{An illustrative example of how an OctoMap representation adaptively represent an environment, and how ray-tracing on the OctoMap representation results in an occupancy sequence represented by run-length encoding (RLE) format.}
\end{figure}

This section proposes a class of algorithms to compute the Shannon mutual information directly on this compressed sequence. We will show that the time complexity of this algorithm is linear with respect to the length of the compressed vector. In practice, this translates to significant savings in computation time.

This section is organized as follows. Section~\ref{sec:exact_fsmi_rle} formalizes the problem of adapting FSMI to run-length encoding and presents an efficient tabulation-based solution, namely the FSMI-RLE Algorithm. Section~\ref{sec:fsmi_rle_approx} discusses a numerical issue with the proposed solution and resolves it using Gaussian truncation, which we call the Approx-FSMI-RLE algorithm. Finally, Section~\ref{sec:fsmi_rle_uniform} introduces a Uniform-FSMI-RLE algorithm for mapping in 3D when the sensor noise follows a uniform distribution.

\subsection{FSMI-RLE Algorithm: Shannon Mutual Information on Occupancy Sequences Compressed by RLE}
\label{sec:exact_fsmi_rle}
In this section, we present the FSMI-RLE algorithm which computes the Shannon mutual information between a sensor beam and an array of cells whose occupancy values are compressed by Run-Length Encoding (RLE). After introducing our notation, Theorem~\ref{theorem:group_fsmi_rle} presents the main result. Algorithm~\ref{alg:exact_fsmi_rle} summarizes the computation according to the theorem.

Suppose the sequence consists of $n$ total number of virtual cells, divided into $n_r$ groups, each consisting of consecutive virtual cells with the same occupancy value. The $i$-th group contains cells with the same occupancy probability, $o_i\in(0, 1)$. We define the number of cells in the $i$-th group by $L_i\in \mathbb{Z}^{+}$. The total number of cells, $n$, relates to $L_i$ by $\sum_{i=1}^{n_r} L_i = n$. Thus, the vector of occupancy values for the range measurement is

\begin{equation}
\label{eqn:occup_rle}
\{\underbrace{o_1,\ldots, o_1}_{\textrm{repeated $L_1$ times}}, ..., \underbrace{o_{n_r},\ldots, o_{n_r}}_{\textrm{repeated $L_{n_r}$ times}}\}.
\end{equation}

Let $s_u = \sum_{i < u} L_i$ denote the index of the first cell in the $u$-th group of cells.
Let $P_E(u) = \prod_{i < u} (1-o_i)^{L_i}$ denote the probability that all the cells on the beam before the first cell of the $u$-th group of cells are empty. 

Also define $D_E(u)=\sum_{i< u} L_i f(\delta_{emp}, r_i)$. 

In addition, define the following:
\begin{eqnarray}
\alpha[x, L_u, L_v] = \sum_{j=0}^{L_u - 1}\sum_{k=0}^{L_v - 1} x^j \exp{\left(-\frac{(j-k)^2}{2\sigma'^2}\right)}, \\
\beta[x, L_u, L_v] = \sum_{j=0}^{L_u - 1}\sum_{k=0}^{L_v - 1} k\cdot x^j \exp{\left(-\frac{(j-k)^2}{2\sigma'^2}\right)},
\end{eqnarray}
where the variable $x \in [0, 1]$ is an occupancy probability, and the variables $L_u, L_v$ represent the block sizes of the $u$-th and $v$-th groups of cells. $\alpha, \beta$ are auxiliary terms used for calculating the mutual information for sensor beams that are occluded by a cell in the $u$-th group of cells but the measurements suggest that the beams fall in the $v$-th group of cells due to the sensor noise.

Figure~\ref{fig:fsmi_rle_notation} illustrates this notation.
Define $\bar{o}_u = 1-o_u$.
Let $w$ denote the width of a virtual cell, and define $\sigma' = \sigma / w$, where $\sigma$ is the standard deviation of the noise distribution.

\begin{theorem}[Shannon mutual information in RLE]
\label{theorem:group_fsmi_rle}
The Shannon mutual information $I(M';Z)$ of a measurement represented in RLE can be expressed as follows:
\begin{equation}
\begin{split}
\label{eqn:group_fsmi_rle_exact}
&I(M'; Z) = \sum_{u = 1}^{n_r}\sum_{v = 1}^{n_r} \frac{P_E(u) o_u}{\sqrt{2\pi}\sigma'}\Big(\\
&\quad (D_E(v) + f(\delta_{occ}, r_v))A[\bar{o}_u, L_u, L_v, s_u - s_v] +  \\ 
&\quad  f(\delta_{emp}, r_v) B[\bar{o}_u, L_u, L_v, s_u - s_v] \Big),
\end{split}
\end{equation}
where
\begin{equation*}\label{eqn:A_definition} A[x, L_u, L_v, t] = \sum_{j = 0}^{L_u - 1}\sum_{k = 0}^{L_v - 1} x^{j} \exp\left(-\frac{(j + t - k)^2}{2\sigma'^2}\right), \\
\end{equation*}

\begin{equation}
    \label{eqn:B_definition} B[x, L_u, L_v, t] = \sum_{j=0}^{L_u - 1}\sum_{k=0}^{L_v - 1} k\cdot x^j \exp{\left(-\frac{(j + t -k)^2}{2\sigma'^2}\right)}.
\end{equation}

In addition, $A[x, L_u, L_v, t]$ and $B[x, L_u, L_v, t]$ can be computed as
\begin{align}\label{eqn:compute_A}
\begin{split}
&\hspace{-0.1in} A[x, L_u, L_v, t] \\ &\hspace{-0.1in}=
\begin{cases}
x^{-t}\left(\alpha[x, L_u + t, L_v] - \alpha[x, t, L_v]\right), &\mbox{if } t \ge 1; \\
\alpha[x, L_u, L_v], &\mbox{if } t = 0; \\
\left(\alpha[x, L_u, L_v - t] - \alpha[x, L_u, -t]\right), &\mbox{if } t \le -1
\end{cases}
\end{split}
\end{align}
and
\begin{align}\label{eqn:compute_B}
\begin{split}
&\hspace{-0.1in} B[x, L_u, L_v, t]  \\ &\hspace{-0.1in}= 
\begin{cases}
x^{-t}\left(\beta[x, L_u + t, L_v] - \beta[x, t, L_v]\right) & \mbox{if } t \ge 1; \\
\beta[x, L_u, L_v] & \mbox{if } t = 0; \\
\beta[x, L_u, L_v - t] - \beta[x, L_u, -t] & \\ \qquad+ t\cdot A[x, L_u, L_v, t] & \mbox{if } t \le -1.
\end{cases}
\end{split}
\end{align}
\end{theorem}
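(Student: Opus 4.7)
The plan is to reduce the RLE case to the FSMI formula from Theorem~\ref{theorem:exact_shannon_mi} by reparameterizing the double sum $\sum_{j,k} P(e_j)\, C_k\, G_{k,j}$ in terms of a pair of group indices $(u,v)$ and in-group offsets $(j', k')$. Concretely, I write $j = s_u + j'$ with $0 \le j' < L_u$ and $k = s_v + k'$ with $0 \le k' < L_v$, so the outer sum becomes $\sum_{u=1}^{n_r}\sum_{v=1}^{n_r}$ and the inner sum becomes $\sum_{j'=0}^{L_u-1}\sum_{k'=0}^{L_v-1}$. Because every cell in group $u$ shares the same occupancy $o_u$, I get $P(e_j) = P_E(u)\, o_u\, \bar o_u^{\,j'}$, and because every cell in group $v$ shares the same $r_v$, the definition of $C_k$ in Equation~\eqref{eqn:C_k} expands telescopically to $C_{s_v+k'} = D_E(v) + f(\delta_{occ}, r_v) + k' f(\delta_{emp}, r_v)$.

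The second ingredient is a closed-form approximation for $G_{k,j}$ on equally-sized virtual cells of width $w$. Using the Gaussian PDF with variance $\sigma^2$ evaluated at the cell center and multiplied by $w$, and introducing $\sigma' = \sigma/w$, the quantity becomes $G_{k,j} \approx \frac{1}{\sqrt{2\pi}\sigma'} \exp(-(k-j)^2/(2\sigma'^2))$. With $t := s_u - s_v$, the exponent rewrites as $-(j' + t - k')^2/(2\sigma'^2)$, which is precisely the kernel appearing in the definitions of $A$ and $B$. Plugging these three pieces into the FSMI formula and splitting $C_{s_v+k'}$ into its $k'$-independent and $k'$-linear parts immediately yields Equation~\eqref{eqn:group_fsmi_rle_exact}, with the $k'$-independent summand producing $A[\bar o_u, L_u, L_v, t]$ and the $k'$-linear summand producing $B[\bar o_u, L_u, L_v, t]$.

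For the reduction of $A$ and $B$ to the tabulated quantities $\alpha$ and $\beta$, I will perform an index shift and a telescoping subtraction. For $A$ when $t \ge 1$, substituting $j'' = j' + t$ gives a range $[t, t+L_u-1]$ and a prefactor $x^{-t}$, and I rewrite the resulting sum over $j''$ as $\sum_{j''=0}^{t+L_u-1} - \sum_{j''=0}^{t-1}$, which is exactly $x^{-t}(\alpha[x, L_u+t, L_v] - \alpha[x, t, L_v])$. For $t \le -1$ the shift is done in the $k$-variable instead, $k'' = k' - t$, giving $\alpha[x, L_u, L_v - t] - \alpha[x, L_u, -t]$; the case $t = 0$ is immediate. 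The computation for $B$ mirrors this, with one subtlety: in the $t \le -1$ case the substitution $k'' = k' - t$ replaces $k'$ by $k''+t$, and the extra $t$-term produces a trailing $t \cdot A[x, L_u, L_v, t]$ correction, which explains the last line of Equation~\eqref{eqn:compute_B}.

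The routine obstacles are the bookkeeping of index ranges (they differ qualitatively in the three regimes $t \ge 1$, $t = 0$, $t \le -1$) and making sure the Gaussian-PDF approximation to $G_{k,j}$ is stated as the sole analytic approximation used in the derivation; after that, the whole theorem is a direct substitution plus two telescoping identities.
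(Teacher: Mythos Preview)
Your proposal is correct and follows essentially the same route as the paper: reorganize the FSMI double sum \eqref{eqn:fast_exact_mi} by group/offset indices, exploit $P(e_{s_u+j'})=P_E(u)\,o_u\,\bar o_u^{\,j'}$ and $C_{s_v+k'}=D_E(v)+f(\delta_{occ},r_v)+k'f(\delta_{emp},r_v)$, use the PDF form $G_{k,j}=\tfrac{1}{\sqrt{2\pi}\sigma'}\exp(-(k-j)^2/(2\sigma'^2))$, and then reduce $A,B$ to $\alpha,\beta$ by the index shifts $j''=j'+t$ (for $t\ge 1$) and $k''=k'-t$ (for $t\le -1$), with the $k'=k''+t$ split producing the extra $t\cdot A$ term in \eqref{eqn:compute_B}. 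Your explicit flagging of the midpoint-PDF form of $G_{k,j}$ as the one analytic approximation is a useful clarification that the paper leaves implicit.
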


\begin{figure}[!t]
\centering 
\includegraphics[width=0.9\columnwidth]{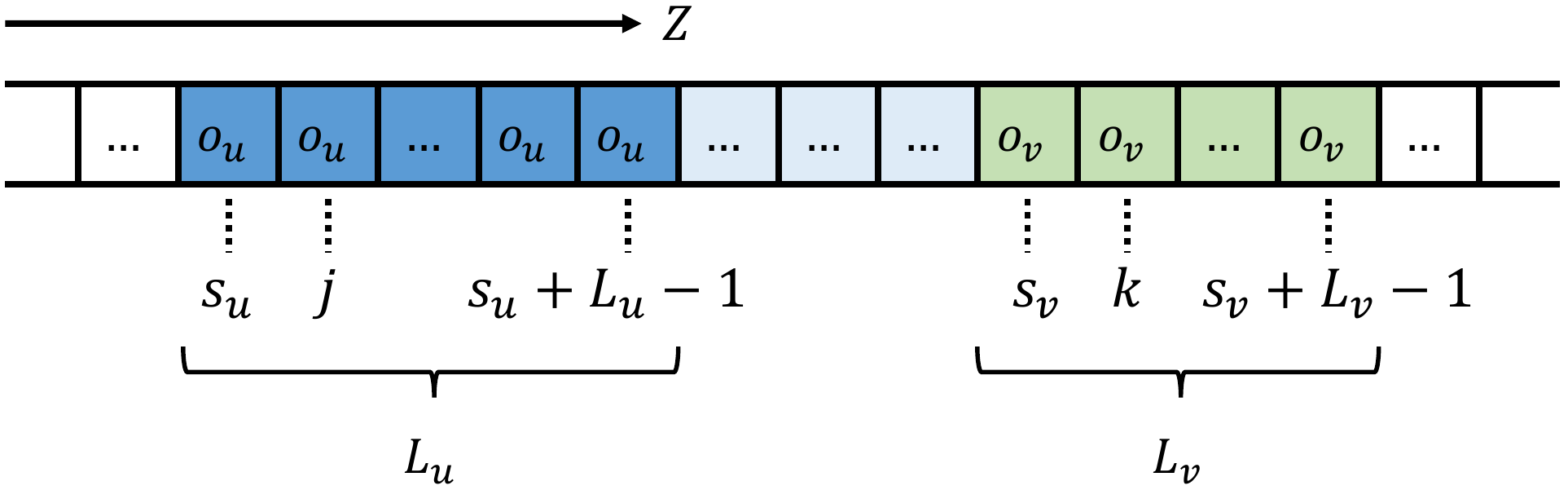}
\caption{Notation for FSMI computation on an occupancy sequence compressed by RLE.}
\label{fig:fsmi_rle_notation}
\end{figure}

\begin{proof}
First we prove the statement in Equation~\eqref{eqn:group_fsmi_rle_exact}. 
We reorganize Equation~\eqref{eqn:fast_exact_mi} into double sum over groups of cells defined in Equation~\eqref{eqn:occup_rle}:
\begin{equation}
\label{eqn:eqn_from_fsmi}
I(M'; Z) = \sum_{u = 1}^{n_r} \sum_{v = 1}^{n_r} \sum_{j = 0}^{L_u - 1}\sum_{k = 0}^{L_v - 1} P(e_{s_u + j}) C_{s_v + k} G_{s_u + j, s_v + k},
\end{equation}
where 
\begin{align}
\nonumber P(e_j) &=o_j\prod_{l<j}(1-o_l), \\
\nonumber C_k &= \sum_{l<k} f(\delta_{emp}, r_l) + f(\delta_{occ}, r_k), \\
\nonumber G_{k, j} & = \frac{1}{\sqrt{2\pi}\sigma'} \exp\left(-\frac{(k - j)^2}{2\sigma'^2}\right).
\end{align}
For a fixed $u$, for all $j$, we have
\begin{equation}
\label{eqn:p_e_u}
P(e_{s_u + j}) = P_E(u)(1-o_u)^{j}o_u.
\end{equation}
Similarly, 
\begin{equation}
\label{eqn:C_v}
C_{s_v + k} = D_E(v) + k\cdot f(\delta_{emp}, r_v) + f(\delta_{occ}, r_v).
\end{equation}
Substituting Equation~\eqref{eqn:p_e_u} and Equation~\eqref{eqn:C_v} into Equation~\eqref{eqn:eqn_from_fsmi} proves the statement in Equation~\eqref{eqn:group_fsmi_rle_exact}.

Second, we prove the statement in Equation~\eqref{eqn:compute_A}. 
When $t\ge 1$, 
\begin{equation}\nonumber
\begin{split}
    &A[x, L_u, L_v, t] = \sum_{j = 0}^{L_u - 1}\sum_{k = 0}^{L_v - 1} x^{j} \exp\left(-\frac{(j + t - k)^2}{2\sigma'^2}\right)\\
    =& \sum_{j = 0}^{L_u - 1}\sum_{k = 0}^{L_v - 1} x^{-t}x^{j + t} \exp\left(-\frac{(j + t - k)^2}{2\sigma'^2}\right)\\
    =&x^{-t}\sum_{j = 0}^{L_u - 1}\sum_{k = 0}^{L_v - 1} x^{j + t} \exp\left(-\frac{(j + t - k)^2}{2\sigma'^2}\right)\\
    =&x^{-t}\sum_{j = t}^{L_u + t - 1}\sum_{k = 0}^{L_v - 1} x^j \exp\left(-\frac{(j - k)^2}{2\sigma'^2}\right)\\
    =&x^{-t}\left(\alpha[x, L_u + t, L_v] - \alpha[x, t, L_v]\right).
\end{split}
\end{equation}

Similarly, when $t\le -1$,
\begin{equation}\nonumber
\begin{split}
    &A[x, L_u, L_v, t] = \sum_{j = 0}^{L_u - 1}\sum_{k = 0}^{L_v - 1} x^{j} \exp\left(-\frac{(j + t - k)^2}{2\sigma'^2}\right)\\
    =& \sum_{j = 0}^{L_u - 1}\sum_{k = -t}^{L_v - t - 1} x^{j} \exp\left(-\frac{(j - k)^2}{2\sigma'^2}\right)\\
    =&\alpha[x, L_u, L_v - t] - \alpha[x, L_u, -t].
\end{split}
\end{equation}

Third, we prove the statement in Equation~\eqref{eqn:compute_B}. The case for $t = 0$ is trivial. The case for $t\ge 1$ is similar to the case of $A$, above. When $t\le -1$, we have
\begin{equation}\nonumber
\begin{split}
&B[x, L_u, L_v, t] = \sum_{j=0}^{L_u - 1}\sum_{k=0}^{L_v - 1} k\cdot x^j \exp{\left(-\frac{(j + t -k)^2}{2\sigma'^2}\right)}\\
=&\sum_{j=0}^{L_u - 1}\sum_{k=0}^{L_v - 1} (k - t + t)\cdot x^j \exp{\left(-\frac{(j + t - k)^2}{2\sigma'^2}\right)}\\
=&\sum_{j=0}^{L_u - 1}\sum_{k=0}^{L_v - 1} (k - t)\cdot x^j \exp{\left(-\frac{(j-(k - t))^2}{2\sigma'^2}\right)} +\\
&\sum_{j=0}^{L_u - 1}\sum_{k=0}^{L_v - 1} t\cdot x^j \exp{\left(-\frac{(j+ t - k)^2}{2\sigma'^2}\right)}\\
= &\sum_{j=0}^{L_u - 1}\sum_{k=-t}^{L_v-t - 1} k\cdot x^j \exp{\left(-\frac{(j-k)^2}{2\sigma'^2}\right)} +\\
&\sum_{j=0}^{L_u - 1}\sum_{k=0}^{L_v - 1} t\cdot x^j \exp{\left(-\frac{(j+ t - k)^2}{2\sigma'^2}\right)}\\
= &\beta[x, L_u, L_v - t] - \beta[x, L_u, - t] + t\cdot A[x, L_u, L_v, t].
\end{split}
\end{equation}
\qed

\end{proof}

Theorem~\ref{theorem:group_fsmi_rle} motivates an efficient algorithm that balances time complexity and space complexity. Specifically, it is realized by the
{\em FSMI-RLE} algorithm, presented in Algorithms~\ref{alg:P_E_u}, \ref{alg:D_E_v}, and \ref{alg:exact_fsmi_rle}, which can rapidly evaluate Shannon mutual information with a time complexity that is independent of the number of virtual cells, $n$, and without requiring a large amount of memory.
In summary, the algorithm pre-computes and stores the functions $\alpha(x,L_u, L_v)$ and $\beta(x,L_u, L_v)$ in look-up tables, and uses Equation~\ref{eqn:group_fsmi_rle_exact} to combine these values to produce the Shannon mutual information. When tabulating the functions $\alpha(x,L_u, L_v)$ and $\beta(x,L_u, L_v)$ into look-up tables, the algorithm quantizes the occupancy probability variable $x \in [0,1]$ into a finite set of values\footnote{Let us note that, in many practical implementations of mapping algorithms, the occupancy values $o_u$ is quantized into a finite set of values from $[0, 1]$ for computational efficiency (see, \eg, \citep{GoogleCartographer}).}, which we denote by $\cal{X}$. 

It is easy to see that the functions $\alpha(x, L_u, L_v)$ and $\beta(x, L_u, L_v)$ are polynomials in $x$, hence continuous in $x$; thus, they can be approximated arbitrarily well by taking the resolution of ${\cal X}$ fine enough. Hence, the algorithm is still an exact algorithm for computing Shannon mutual information, as it can approximate the result with arbitrary accuracy. 

The correctness of Algorithm~\ref{alg:exact_fsmi_rle} is provided below. 
\begin{theorem}[Correctness of FSMI-RLE]
Algorithm~\ref{alg:exact_fsmi_rle} computes $I(M';Z)$ in Equation~\eqref{eqn:group_fsmi_rle_exact}.
\end{theorem}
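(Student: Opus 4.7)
The plan is to show that the algorithm faithfully instantiates the closed-form expression already established in Theorem~\ref{theorem:group_fsmi_rle}. Since that theorem reduces $I(M';Z)$ to a finite double sum over the $n_r$ runs whose summands depend only on the quantities $P_E(u)$, $D_E(v)$, $f(\delta_{occ},r_v)$, $f(\delta_{emp},r_v)$, and the auxiliary terms $A[\bar o_u,L_u,L_v,s_u-s_v]$ and $B[\bar o_u,L_u,L_v,s_u-s_v]$, it suffices to verify that Algorithm~\ref{alg:exact_fsmi_rle} produces each of these quantities correctly, and then aggregates them according to Equation~\eqref{eqn:group_fsmi_rle_exact}.

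First I would argue, by straightforward induction on the group index, that the helper routines in Algorithm~\ref{alg:P_E_u} and Algorithm~\ref{alg:D_E_v} return $P_E(u)=\prod_{i<u}(1-o_i)^{L_i}$ and $D_E(v)=\sum_{i<v} L_i f(\delta_{emp},r_i)$ exactly, using the multiplicative and additive recurrences that follow from their definitions. The offset $s_u-s_v$ can be maintained by a running sum of the block lengths $L_i$, so all the ``non-tabulated'' inputs to \eqref{eqn:group_fsmi_rle_exact} are produced without error.

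Next I would turn to the tabulation of $\alpha[x,L_u,L_v]$ and $\beta[x,L_u,L_v]$. For each quantized occupancy value $x\in\mathcal{X}$ and each admissible pair $(L_u,L_v)$, the stored entries are exactly the finite sums defining $\alpha$ and $\beta$, so the table access is correct by construction. The algorithm then reconstructs $A[\bar o_u,L_u,L_v,t]$ and $B[\bar o_u,L_u,L_v,t]$ via the case split on $t=s_u-s_v$ using Equations~\eqref{eqn:compute_A} and \eqref{eqn:compute_B}; the correctness of this reconstruction is precisely what the second and third parts of Theorem~\ref{theorem:group_fsmi_rle} establish. Combining these with $P_E(u)$, $D_E(v)$, $f(\delta_{occ},r_v)$ and $f(\delta_{emp},r_v)$ in the double loop of Algorithm~\ref{alg:exact_fsmi_rle} reproduces the right-hand side of \eqref{eqn:group_fsmi_rle_exact} term by term, which equals $I(M';Z)$.

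The main obstacle, and the only place where the argument is not bookkeeping, is the quantization of $x=\bar o_u$ onto the finite grid $\mathcal{X}$: the algorithm looks up $\alpha$ and $\beta$ at the nearest grid point rather than at the exact value $\bar o_u$. To close this gap I would invoke the observation stated just before the theorem that $\alpha[x,L_u,L_v]$ and $\beta[x,L_u,L_v]$ are polynomials of degree at most $L_u-1$ in $x$, hence continuous and in fact Lipschitz on $[0,1]$ for each fixed $(L_u,L_v)$. Refining $\mathcal{X}$ therefore drives the tabulation error to zero uniformly, so the algorithm outputs $I(M';Z)$ to any prescribed accuracy; in this sense Algorithm~\ref{alg:exact_fsmi_rle} is exact, matching the claim of the theorem. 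This continuity-plus-refinement step is the only non-mechanical part of the proof.
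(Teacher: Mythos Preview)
Your proposal is correct and follows essentially the same approach as the paper: verify that Algorithms~\ref{alg:P_E_u} and~\ref{alg:D_E_v} produce $P_E(u)$ and $D_E(v)$, then check that the double loop over $u,v$ together with the final normalization reproduces Equation~\eqref{eqn:group_fsmi_rle_exact} term by term. You are actually more thorough than the paper, which relegates the quantization-of-$x$ issue to the surrounding prose rather than the proof proper; your continuity-plus-refinement argument for the polynomials $\alpha,\beta$ is exactly the justification the paper invokes there.
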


\begin{proof}
Algorithm~\ref{alg:P_E_u} and Algorithm~\ref{alg:D_E_v} correctly computes $P_E(u)$ and $D_E(v)$; they are called in Lines~\ref{line:alg_exact_fsmi_rle_tab_alpha} and \ref{line:alg_exact_fsmi_rle_tab_beta} in Algorithm~\ref{alg:exact_fsmi_rle} to calculate $P_E(u)$ and $D_E(v)$ for all of $u, v, 1\le u, v\le n_r$. Then in Algorithm~\ref{alg:exact_fsmi_rle}, Line~\ref{line:alg_exact_fsmi_rle_for_u} and \ref{line:alg_exact_fsmi_rle_for_v} enumerate $u, v$ from $1$ to $n_r$; inside the loop, the computation follows Equation~\eqref{eqn:group_fsmi_rle_exact} except for the normalization, which happens outside the loop in Line \ref{line:alg_exact_fsmi_rle_I_norm}.
This completes the proof.\qed
\end{proof}
The time and space complexity of the algorithm are provided below. 
\begin{theorem}[Time complexity of FSMI-RLE]
The time complexity of the FSMI-RLE algorithm is $O(n_r^2)$. 
\end{theorem}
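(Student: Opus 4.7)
The plan is to read off the cost of each step of Algorithm~\ref{alg:exact_fsmi_rle} and the two subroutines Algorithms~\ref{alg:P_E_u} and \ref{alg:D_E_v}, then sum them. First, I would verify that the preprocessing pass is cheap: Algorithm~\ref{alg:P_E_u} produces $\{P_E(u)\}_{u=1}^{n_r}$ by a single forward sweep that only needs the recurrence $P_E(u{+}1) = P_E(u)(1-o_u)^{L_u}$, and Algorithm~\ref{alg:D_E_v} similarly produces $\{D_E(v)\}_{v=1}^{n_r}$ via $D_E(v{+}1) = D_E(v) + L_v f(\delta_{emp}, r_v)$. Each iteration does $O(1)$ work (the values $(1-o_u)^{L_u}$ and $f(\delta_{emp}, r_v)$ are themselves $O(1)$ look-ups, the former from the tabulated powers of the quantized occupancy values and the latter from the $f$-table described in Section~\ref{sec:fsmi_complexity}), so the two preprocessing steps together cost $O(n_r)$.

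Next I would bound the cost of the main double loop over $(u,v) \in \{1,\dots,n_r\}^2$, which contributes the factor $n_r^2$. Inside the loop the algorithm needs only: the pre-computed quantities $P_E(u)$, $D_E(v)$, $o_u$, $f(\delta_{occ}, r_v)$, $f(\delta_{emp}, r_v)$; the block lengths $L_u, L_v$ and the offset $t = s_u - s_v$; and the two quantities $A[\bar o_u, L_u, L_v, t]$ and $B[\bar o_u, L_u, L_v, t]$. The key structural observation is that by Equations~\eqref{eqn:compute_A} and \eqref{eqn:compute_B}, each of $A$ and $B$ reduces to a constant number of additions and subtractions of values of $\alpha[\cdot,\cdot,\cdot]$ and $\beta[\cdot,\cdot,\cdot]$, multiplied by a power $x^{-t}$. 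Because $\alpha$ and $\beta$ are stored in the pre-built look-up tables over the quantized set $\mathcal X$, and because powers of the (quantized) $\bar o_u$ can likewise be tabulated, every one of these building blocks is an $O(1)$ table read; the sign-based case split in Equations~\eqref{eqn:compute_A} and \eqref{eqn:compute_B} is also $O(1)$. Therefore each iteration of the double loop does $O(1)$ work, the loop contributes $O(n_r^2)$, and the final normalization in Line~\ref{line:alg_exact_fsmi_rle_I_norm} is $O(1)$. Summing $O(n_r) + O(n_r^2) + O(1)$ yields $O(n_r^2)$.

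The main obstacle, such as it is, lies in justifying that the $\alpha$ and $\beta$ look-ups really are $O(1)$: these tables are indexed by $(x, L_u, L_v)$, and $L_u, L_v$ are in principle problem-dependent. The argument I would rely on is that the $\alpha, \beta$ tables, together with the table of powers of the quantized $x \in \mathcal X$, are built once and offline (over the quantized occupancy alphabet $\mathcal X$ and a bounded block-length range dictated by the OctoMap resolution), so their construction cost is not charged to a single MI evaluation; at query time, indexing into them is constant-time in the standard RAM model. A small care point worth flagging in the write-up is the factor $x^{-t}$ appearing in the $t \ge 1$ branch of Equations~\eqref{eqn:compute_A} and \eqref{eqn:compute_B}, which I would handle by the same power-table argument rather than by on-the-fly exponentiation. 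Once these two points are made explicit, the $O(n_r^2)$ bound follows directly from the line-by-line accounting above.
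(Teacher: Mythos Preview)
Your proposal is correct and follows essentially the same line-by-line accounting as the paper: the paper's proof notes that Algorithms~\ref{alg:P_E_u} and~\ref{alg:D_E_v} each run in $O(n_r)$, that the tabulation of $\alpha$ and $\beta$ is done once offline (and thus not charged to the per-beam cost), and that the remaining double for-loop over $(u,v)$ contributes $O(n_r^2)$. Your write-up is more explicit than the paper's in justifying why each inner-loop iteration is $O(1)$ (via the $\alpha,\beta$ table reads and the tabulated powers $x^{-t}$), but the approach is the same.
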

\begin{proof}
Algorithm~\ref{alg:P_E_u} and Algorithm~\ref{alg:D_E_v} only loops through $1$ to $n_r$ once; both have a complexity of $O(n_r)$. Although we put the tabulation of $\alpha$ and $\beta$ inside Algorithm~\ref{alg:exact_fsmi_rle} for completeness, in the actual implementation the table is precomputed once outside Algorithm~\ref{alg:exact_fsmi_rle}. The rest of the algorithm is a double for-loop to compute Equation~\eqref{eqn:group_fsmi_rle_exact}, resulting in a time complexity of $O(n_r^2)$. Therefore, the overall time complexity is $O(n_r^2)$.
\qed
\end{proof}
\begin{theorem}[Space complexity of FSMI-RLE]
The space complexity of the FSMI-RLE algorithm is $O(|\mathcal{X}| n^2)$.
\end{theorem}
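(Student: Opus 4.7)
The plan is to account for all storage used by the FSMI-RLE procedure and show that the lookup tables for $\alpha$ and $\beta$ dominate, contributing $O(|\mathcal{X}|\,n^2)$ cells while everything else is asymptotically smaller.

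First, I would enumerate the persistent data structures: the RLE-encoded input (the pairs $(o_i, L_i)$ for $i=1,\ldots,n_r$) uses $O(n_r)$ space; the precomputed arrays $P_E(u)$ and $D_E(v)$ from Algorithms~\ref{alg:P_E_u} and~\ref{alg:D_E_v} use $O(n_r)$ space each; any constant-sized accumulators used inside the double loop are $O(1)$. Since $n_r \le n$, all of these are $O(n)$ and hence dominated by the bound to be established.

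Next, I would turn to the two lookup tables, which are the space bottleneck. Recall that $\alpha[x, L_u, L_v]$ and $\beta[x, L_u, L_v]$ must be tabulated over every triple $(x, L_u, L_v)$ that the formulas in Equations~\eqref{eqn:compute_A} and~\eqref{eqn:compute_B} may query. The occupancy argument $x$ is quantized into $\mathcal{X}$, giving $|\mathcal{X}|$ distinct values. The block-size arguments $L_u$ and $L_v$ are called with values drawn from $\{L_i\}$ together with shifted sizes $L_u + t$ and $L_v - t$, where $t = s_u - s_v$; since $0 \le s_u, s_v \le n$ and every individual $L_i \le n$, each of these arguments is a nonnegative integer bounded by $n$. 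Therefore the two tables together occupy $2 \cdot |\mathcal{X}| \cdot (n+1)^2 = O(|\mathcal{X}|\,n^2)$ cells.

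Finally, I would combine the two bounds: the algorithm's total space is the sum of the lookup tables and the $O(n_r)$ bookkeeping arrays, which simplifies to $O(|\mathcal{X}|\,n^2)$. No step requires more than a routine counting argument; the only subtlety worth flagging is arguing that the table indices never exceed $n$ in absolute value, which follows because each $L_i \le n$, the shift $|t| = |s_u - s_v| \le n$, and $L_u + t, L_v - t$ remain in $[0, 2n]$ so a single table dimension of size $O(n)$ suffices after folding constants into the $O(\cdot)$.
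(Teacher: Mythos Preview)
Your proposal is correct and follows essentially the same approach as the paper: both argue that the $O(n_r)$ bookkeeping arrays are dominated by the $\alpha$ and $\beta$ lookup tables, whose three indices $(x, L_u, L_v)$ range over $|\mathcal{X}|$ quantized occupancy values and $O(n)$ block sizes. Your treatment of the index ranges is in fact more careful than the paper's, since you explicitly account for the shifted arguments $L_u + t$ and $L_v - t$ from Equations~\eqref{eqn:compute_A} and~\eqref{eqn:compute_B}, whereas the paper only considers the raw $L_u, L_v$ (and even states $L_u, L_v \le n_r$, which should read $\le n$).
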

\begin{proof}
Except for the two tables $\alpha$ and $\beta$, all the other variables in Algorithm~\ref{alg:exact_fsmi_rle} store at most $n_r$ real numbers. Each of $\alpha$ and $\beta$ tabulates against $x, L_u, L_v$. Note that $x$ is quantized into $|\mathcal{X}|$ entries and $1\le L_u, L_v\le n_r$; hence, $\alpha$ and $\beta$ each stores exactly $|\mathcal{X}|n_r^2$ real values. Therefore, the space complexity of Algorithm~\ref{alg:exact_fsmi_rle} is $O(|\mathcal{X}|n_r^2)$. This completes the proof.\qed
\end{proof}

First, note that the time complexity depends only on the number of groups of cells $n_r$, \ie, the size of the run-length encoding, but not on the number of virtual cells $n$. Note that $n_r$ is significantly smaller than $n$ because of the run-length encoding compression. Hence, reducing the time complexity from $O(n^2)$ to $O(n_r^2)$ translates to substantial savings in mutual information computation when the OctoMap achieves reasonable compression of the three-dimensional environment.

Second, let us note that closed-form solutions to $A$ and $B$ in Equation~\eqref{eqn:A_definition} and \eqref{eqn:B_definition} may reduce space complexity of the algorithm. However, the authors were unable to find closed-form solutions without any approximations. With approximation, we are able to derive closed-form solutions. Unfortunately, despite their constant time and space complexity the closed form solutions are so complex that its runtime exceeds that of the tabulated algorithms in all practical scenarios that we considered. See Appendix~\ref{sec:appendix_analytic} for our closed-form solutions. Finding closed-form solutions that can be evaluated more efficiently remains an open problem. 

Third, unfortunately, the FSMI-RLE algorithm suffers from numerical issues in some problem instances. Consider a case when there are two groups each with a very large number of virtual cells, say $L_u$ and $L_v$ are very large numbers for some $u$ and $v$, $u > v$. Recall that $A[\bar{o}_u, L_u, L_v, t]$ is the product of $\left(\bar{o}_u\right)^{-t}$ and $\alpha[\bar{o}_u, L_u + t, L_v] - \alpha[\bar{o}_u, t, L_v]$. Since $t = s_u - s_v \ge L_v$, the variable $t$ also becomes large. Hence, when $\bar{o}_u$ is close to zero, $\bar{o}_u^{-t}$ becomes a very large number. At the same time, $\left(\alpha[\bar{o}_u, L_u + t, L_v] - \alpha[\bar{o}_u, t, L_v]\right)$ becomes a very small number. As a result, in some cases, the multiplication of these numbers cannot be completed correctly due to the precision limitations of multiplication on computing hardware. The authors believe that the numerical issues can be avoided by tabulating the functions $A$ and $B$ into look-up tables, instead of the functions $\alpha$ and $\beta$. However, unfortunately, tabulating $A$ and $B$ requires orders of magnitude more memory, due to the additional integer variable $t$ that they encompass. Instead, in the next section, we propose an approximate algorithm for computing Shannon mutual information, based on truncating of the Gaussian distribution, which avoids the numerical issues while maintaining computational efficiency. The key insight behind the numerical stability of the approximation is that the variable $t$ never becomes too large as a result of the truncation.

\begin{algorithm}[!tb]
\caption{Evaluate $P_E(u)$ for $1 \le u \leq n_r$}\label{alg:P_E_u}
\begin{algorithmic}[1]
\Require $o_u$, $L_u$ for $1 \leq u \leq n_r$
\State $P_E(1)\leftarrow 1$
\For{$u=2$ \textbf{to} $n_r$}
    \State $P_E(u)\leftarrow P_E(u - 1)(1-o_{u - 1})^{L_{u - 1}}$
\EndFor
\State \Return $P_E(u)$ for $1\le u\le n_r$
\end{algorithmic}
\end{algorithm}
\begin{algorithm}[!tb]
\caption{Evaluate $D_E(v)$ for $1 \le v \leq n_r$}\label{alg:D_E_v}
\begin{algorithmic}[1]
\Require $r_v$, $L_v$ for $1 \leq v \leq n_r$
\State $D_E(1)\leftarrow 0$
\For{$v=2$ \textbf{to} $n_r$}
    \State $D_E(v)\leftarrow D_E(v - 1) + L_{v - 1}f(\delta_{emp}, r_{v - 1})$
\EndFor
\State \Return $D_E(v)$ for $1\le v\le n_r$
\end{algorithmic}
\end{algorithm}
\begin{algorithm}[!tb]
\caption{The FSMI-RLE algorithm}\label{alg:exact_fsmi_rle}
\begin{algorithmic}[1]
\Require Noise standard derivation $\sigma$, cell width $w$ and $s_i, L_i, o_i, r_i$ for $1\le i\le n_r$.
\State $I\leftarrow 0$
\State $\sigma' = \sigma / w$
\State Compute $P_E(u)$ for $1\leq u \leq n_r$ with Algorithm~\ref{alg:P_E_u}.
\State Compute $D_E(v)$ for $1 \leq v \leq n_r$ with Algorithm~\ref{alg:D_E_v}.
\State \label{line:alg_exact_fsmi_rle_tab_alpha} Tabulate $\alpha[x, L_u, L_v]$. 
\State \label{line:alg_exact_fsmi_rle_tab_beta} Tabulate $\beta[x, L_u, L_v]$.
\For{\label{line:alg_exact_fsmi_rle_for_u}$u=1$ \textbf{to} $n_r$}
    \State $x\leftarrow \mathrm{floor}((1-o_u) / o_{res})o_{res}$
    \For{\label{line:alg_exact_fsmi_rle_for_v}$v=1$ \textbf{to} $n_r$}
        \State Compute $A[x, L_u, L_v, t]$ based on Equation~\eqref{eqn:compute_A}.
        \State Compute $B[x, L_u, L_v, t]$ based on Equation~\eqref{eqn:compute_B}.
        \State $I\leftarrow I + P_E(u) o_u((D_E(v)  + f(\delta_{occ}, r_v)) A  + f(\delta_{emp}, r_v)) B)$
    \EndFor
\EndFor
\State \label{line:alg_exact_fsmi_rle_I_norm}$I\leftarrow I / (\sqrt{2\pi}\sigma')$
\State \Return $I$
\end{algorithmic}
\end{algorithm}

\subsection{Approx-FSMI-RLE Algorithm: Approximating Shannon Mutual Information in RLE via Truncation}
\label{sec:fsmi_rle_approx}

In this section, we present an approximate version of the algorithm presented in the previous section. The approximation is achieved by truncating the Gaussian distribution, similar to the approximation presented in Section~\ref{sec:fsmi_approx}. In brief, we wish to obtain an efficient algorithm that evaluates Shannon mutual information as in Equation~\eqref{eqn:fsmi_approx}, which was obtained from Equation~\eqref{eqn:fast_exact_mi} by setting $G_{k, j}=0$ for all $k,j$ with $|k-j|>\Delta$. 

In the rest of this section, we first state the truncated parallel of Theorem~\ref{theorem:group_fsmi_rle}. Then, we present the Approx-FSMI-RLE algorithm, which approximates the FSMI-RLE algorithm via Gaussian truncation. Next, we prove the correctness of the Approx-FSMI-RLE algorithm and then analyze its computational complexity. 

Recall that, for every two groups of virtual cells, say $u$ and $v$, the variables $s_u$, $s_v$ denote the index for the first virtual cell of its group and the variables $L_u$, $L_v$ denote the number of virtual cells in that group. When we evaluate Shannon mutual information using Equation~\eqref{eqn:group_fsmi_rle_exact}, most terms in $A$ and $B$ will be equal to zero due to Gaussian truncation. There are three cases to consider. First, two groups $u$ and $v$ might be further than $\Delta$ away, in which case all terms in $A$ and $B$ evaluate to zero. See Figure~\ref{fig:block_truncation} for an illustration. Second, two groups are close to each other and long enough, so that only a subset of the terms are zero.  See Figure~\ref{fig:rle_truncation_effective} for an illustration. In this case, we consider their sub-groups with non-zero elements. We denote the starting index of the sub-groups by $s_u'$, $s_v'$, and we denote the length of these subgroups by $L_u'$, $L_v'$. Third, the two groups closer and shorter than $\Delta$. See Figure~\ref{fig:rle_truncation_noeffect} for an illustration. In this case, all terms are non-zero. 

The theorem below presents the main result of this section. 
This result establishes a concise formula to compute the Shannon mutual information after the truncation of the Gaussian distribution. Most importantly, it saves computation by eliminating terms that evaluate to zero. In particular, it provides the formulae to compute the variables $s_u'$, $s_v'$, $L_u'$ and $L_v'$, which we referenced above. 

\begin{theorem}[Approx-FSMI-RLE algorithm]
\label{theorem:approx_fsmi_rle}
If $G_{k, j}=0$ when $|k-j|>\Delta$, the Shannon mutual information can be evaluated as
\begin{equation}\label{eqn:approx_fsmi_rle:three_parts}
I(M'; Z) = \frac{I_{u < v} + I_{u > v} + I_{u=v}}{\sqrt{2\pi}\sigma'},
\end{equation}
where
\begin{equation}\label{eqn:approx_fsmi_rle:less_than}
\begin{split}
&I_{u<v} = \sum_{u = 1}^{n_r}\sum_{\substack{v > u\\ s_v < s_u + L_u + \Delta}} P_E(u) (1-o_u)^{s_u'(u, v) - s_u} o_u\\
&\Big(\,(D_E(v)f(\delta_{emp}, r_v) + \\
&f(\delta_{occ}, r_v)) A[x, L_u'(u, v), L_v'(u, v), s_u - s_v] + \\
& f(\delta_{emp}, r_v) B[x, L_u'(u, v), L_v'(u, v), s_u - s_v] \Big),
\end{split}
\end{equation}
\begin{equation} \label{eqn:approx_fsmi_rle:more_than}
\begin{split}
&I_{u>v} = \sum_{u = 1}^{n_r}\sum_{\substack{v < u\\ s_u < s_v + L_v + \Delta}} P_E(u)  o_u\\
&\Big(\,(D_E(v) + (s_v'(u, v) - s_v)f(\delta_{emp}, r_v) + \\
&f(\delta_{occ}, r_v)) A[x, L_u'(u, v), L_v'(u, v), s_u - s_v] + \\
& f(\delta_{emp}, r_v) B[x, L_u'(u, v), L_v'(u, v), s_u - s_v] \Big),
\end{split}
\end{equation}
and
\begin{equation}\label{eqn:approx_fsmi_rle:equal}
\begin{split}
&I_{u=v} = \sum_{u = 1}^{n_r} P_E(u)o_u(D_E(u) + \\  & \quad f(\delta_{occ}, r_u))\theta[x, L_u] +
 f(\delta_{emp}, r_u)) \gamma[x, L_u]\Big),
\end{split}
\end{equation}
where 
\begin{equation*}
\begin{split}
\theta[x, L] &= \alpha[x, L, L],\\
\gamma[x, L] &= \beta[x, L, L].
\end{split}    
\end{equation*}
The $L_u'(u, v), L_v'(u, v), s_u'(u, v), s_v'(u, v)$ that appeared in the above equations are defined as:
\begin{equation}
    \begin{split}
    \label{eqn:su'}
        s_u'(u, v)=\left\{\begin{array}{ll} \max(s_u, s_v -\Delta) & u < v \\ s_u & u > v\end{array}\right.,
    \end{split}
\end{equation}
\begin{equation}
    \label{eqn:Lu'}
    \begin{split}
    L_u'(u, v) = \left\{\begin{array}{ll}s_u + L_u - s_u'(u, v) & u < v \\ \min(L_u, s_v + L_v + \Delta - s_u) & u > v\end{array}\right.
    \end{split}
\end{equation}

\begin{equation}
    \begin{split}
    \label{eqn:sv'}
        s_v'(u, v)=\left\{\begin{array}{ll} s_v & u < v \\ \max(s_v, s_u - \Delta) & u > v\end{array}\right.,
    \end{split}
\end{equation}
\begin{equation}
\label{eqn:Lv'}
    L_v'(u, v) = \left\{\begin{array}{ll}\min(L_v, s_u + L_u + \Delta - s_v) & u < v \\ s_v + L_v - s_v'(u, v) & u > v\end{array}\right.
\end{equation}
\end{theorem}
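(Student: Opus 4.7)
The plan is to derive Equation \eqref{eqn:approx_fsmi_rle:three_parts} from the exact RLE identity in Equation \eqref{eqn:eqn_from_fsmi} by (i) partitioning the outer $(u,v)$-sum into the three regimes $u<v$, $u>v$, and $u=v$, (ii) using the truncation hypothesis $G_{k,j}=0$ for $|k-j|>\Delta$ to identify which $(j,k)$ pairs inside each group pair still contribute, (iii) reading the effective starting indices $s_u'(u,v),\,s_v'(u,v)$ and effective lengths $L_u'(u,v),\,L_v'(u,v)$ off those surviving index ranges, and (iv) recognizing the resulting truncated block sums as the tabulated quantities $A$ and $B$ (and, in the diagonal case, $\theta$ and $\gamma$).

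First I would handle the case $u<v$. Because the RLE groups are ordered along the beam and do not overlap, $s_v\ge s_u+L_u$, so $(s_v+k)-(s_u+j)\ge 0$ and the truncation condition collapses to $(s_v+k)-(s_u+j)\le\Delta$. Intersecting this with $j\in\{0,\dots,L_u-1\}$ and $k\in\{0,\dots,L_v-1\}$ yields
\[
 j \in [\max(0,s_v-s_u-\Delta),\,L_u-1], \qquad k \in [0,\min(L_v-1,\,s_u+L_u+\Delta-s_v-1)],
\]
both ranges being non-empty precisely when $s_v<s_u+L_u+\Delta$, which is the summation guard in Equation \eqref{eqn:approx_fsmi_rle:less_than}. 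Reading the lower endpoint of the $j$-range gives $s_u'(u,v)=\max(s_u,s_v-\Delta)$ and the number of surviving indices gives $L_u'(u,v)=s_u+L_u-s_u'(u,v)$, matching Equations \eqref{eqn:su'}--\eqref{eqn:Lu'}; similarly $s_v'(u,v)=s_v$ and $L_v'(u,v)=\min(L_v,s_u+L_u+\Delta-s_v)$, matching Equations \eqref{eqn:sv'}--\eqref{eqn:Lv'}. Then I substitute $j=(s_u'-s_u)+j'$ with $j'\in\{0,\dots,L_u'-1\}$, which factors $P(e_{s_u+j})=P_E(u)\bar o_u^{\,j}o_u$ into $P_E(u)\,\bar o_u^{\,s_u'-s_u}\,o_u\,\bar o_u^{\,j'}$ so that $\bar o_u^{\,s_u'-s_u}$ emerges as the prefactor in \eqref{eqn:approx_fsmi_rle:less_than} while $\bar o_u^{\,j'}$ plays the role of $x^j$ in the definition of $A$ and $B$. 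Splitting $C_{s_v+k}=(D_E(v)+f(\delta_{occ},r_v))+k\,f(\delta_{emp},r_v)$, the constant-in-$k$ part contracts the inner double sum into $A[\bar o_u,L_u',L_v',\cdot]$ and the linear-in-$k$ part into $B[\bar o_u,L_u',L_v',\cdot]$, yielding \eqref{eqn:approx_fsmi_rle:less_than}.

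The case $u>v$ is symmetric but mirrors the truncation onto group $v$: now $s_u'(u,v)=s_u$ is unchanged while $s_v'(u,v)=\max(s_v,s_u-\Delta)$ shifts. Substituting $k=(s_v'-s_v)+k'$ and expanding $C_{s_v+k}=D_E(v)+(s_v'-s_v)f(\delta_{emp},r_v)+k'f(\delta_{emp},r_v)+f(\delta_{occ},r_v)$ is what produces the additional $(s_v'(u,v)-s_v)f(\delta_{emp},r_v)$ term that distinguishes Equation \eqref{eqn:approx_fsmi_rle:more_than} from Equation \eqref{eqn:approx_fsmi_rle:less_than}; no $\bar o_u$ prefactor appears because the $j$-range is unshifted. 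For $u=v$, the inner sum runs over the full square $(j,k)\in\{0,\dots,L_u-1\}^2$ and the group-distance offset is zero, so it collapses directly to $\theta[\bar o_u,L_u]=\alpha[\bar o_u,L_u,L_u]$ and $\gamma[\bar o_u,L_u]=\beta[\bar o_u,L_u,L_u]$, giving Equation \eqref{eqn:approx_fsmi_rle:equal}. Summing the three contributions and pulling out the Gaussian normalization $(\sqrt{2\pi}\sigma')^{-1}$ assembles \eqref{eqn:approx_fsmi_rle:three_parts}.

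The main obstacle is purely bookkeeping: correctly translating the inequality $|(s_u+j)-(s_v+k)|\le\Delta$ into four separate case expressions for $s_u',L_u',s_v',L_v'$ and confirming that in the $u>v$ branch the shift is absorbed into $D_E(v)$ (via the additive $(s_v'-s_v)f(\delta_{emp},r_v)$ term) rather than into a $\bar o_u$ power as in the $u<v$ branch. The rest is algebraic substitution into the definitions \eqref{eqn:A_definition}--\eqref{eqn:B_definition} of $A$ and $B$, together with the guard conditions $s_v<s_u+L_u+\Delta$ and $s_u<s_v+L_v+\Delta$ that ensure the truncated index ranges are non-empty.
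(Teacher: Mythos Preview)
Your proposal is correct and follows essentially the same route as the paper: start from the exact RLE identity (Theorem~\ref{theorem:group_fsmi_rle}/Equation~\eqref{eqn:eqn_from_fsmi}), split the $(u,v)$ sum into the three regimes, use the truncation condition to extract the surviving index ranges (the paper packages this as Lemma~\ref{lemma:non_zero_blocks_truncation} plus a figure argument, whereas you read the ranges directly off the inequality $|(s_u+j)-(s_v+k)|\le\Delta$), and then absorb the index shifts into a modified $P_E$ (your $\bar o_u^{\,s_u'-s_u}$ prefactor) and a modified $D_E$ (your $(s_v'-s_v)f(\delta_{emp},r_v)$ term), exactly as in the paper's Equations~\eqref{eqn:group_fsmi_truncation_P_E}--\eqref{eqn:group_fsmi_truncation_D_E}. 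One small remark on the diagonal case: under the strict hypothesis $G_{k,j}=0$ for $|k-j|>\Delta$, the inner sum for $u=v$ does not literally range over the full square when $L_u>\Delta$; the paper nonetheless writes $\theta[x,L_u]=\alpha[x,L_u,L_u]$ with all terms included, since the Gaussian factor already suppresses the off-band entries, so your treatment matches theirs.
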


\begin{figure}[!b]
\centering{
\includegraphics[width= \columnwidth]{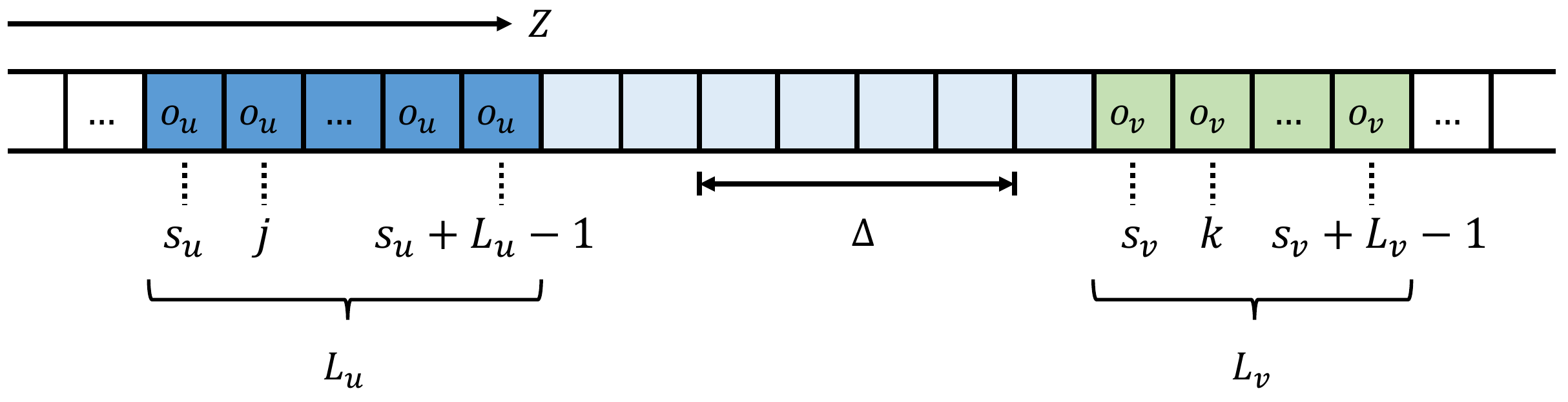}
}
\caption{When the condition in Lemma~\ref{lemma:non_zero_blocks_truncation} is violated, Gaussian truncation sets the Shannon mutual information contribution between these two blocks to zero; otherwise the Shannon mutual information contribution is non-zero.}
\label{fig:block_truncation}
\end{figure}

\begin{figure}[!tb]
\centering{
\subfigure[Start location and length of the blocks changed by Gaussian truncation]{\label{fig:rle_truncation_effective}\includegraphics[width= 0.9\columnwidth]{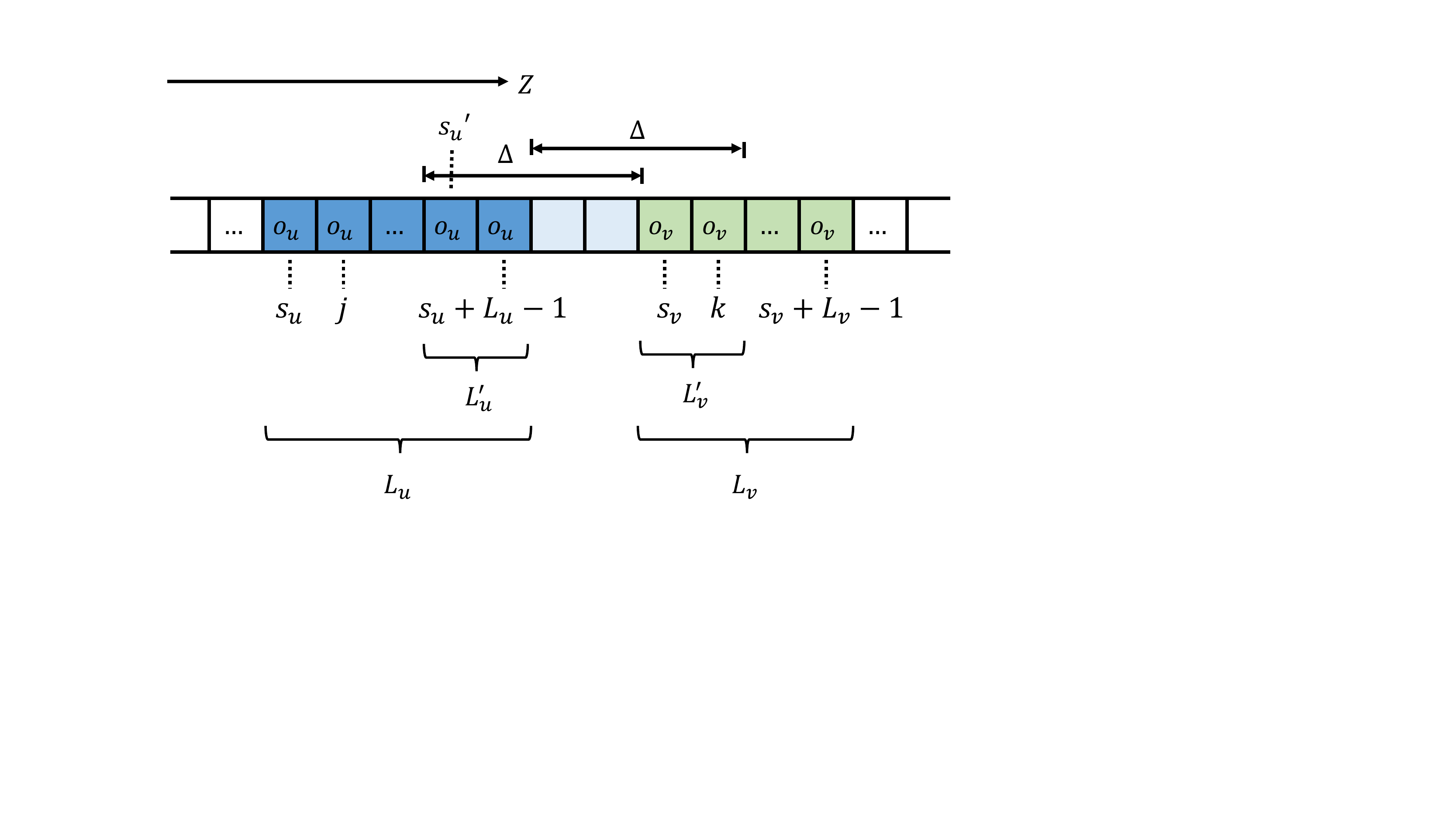}}
\subfigure[Start location and length of the blocks unaffected by Gaussian truncation]{\label{fig:rle_truncation_noeffect}\includegraphics[width= 0.9\columnwidth]{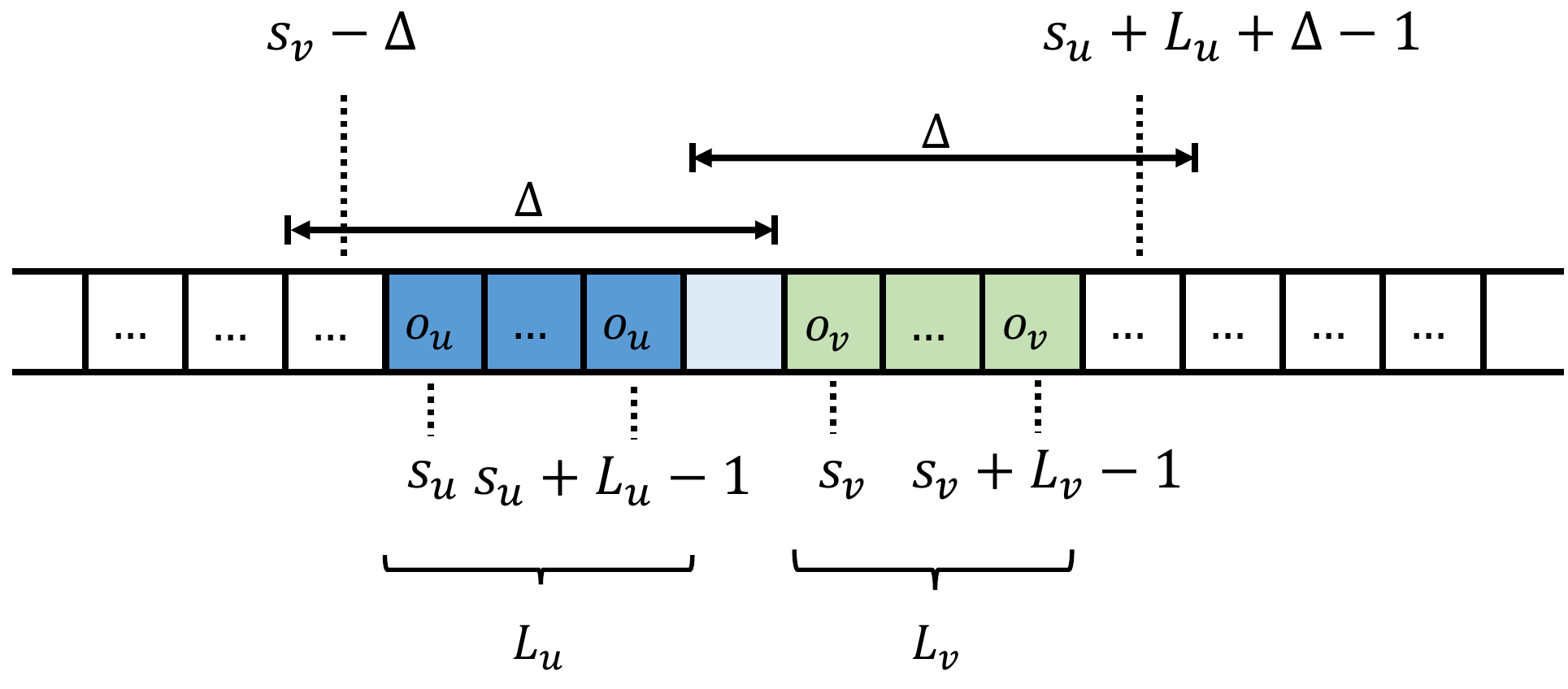}}
}
\caption{Gaussian truncation may change the start and the length of a pair of blocks of cell, depending on their relative distance and group length.}
\label{fig:truncation_position}
\end{figure}

To prove this theorem, we first establish a necessary and sufficient condition for $A[x, L_u, L_v, t]\ne 0$ and $B[x, L_u, L_v, t]\ne 0$ to hold after the truncation.

\begin{lemma}[Nonzero Mutual Information Block Pair after Gaussian Truncation]
\label{lemma:non_zero_blocks_truncation}
Let $t = s_u - s_v\ne 0$. If $G_{k, j} = 0$ when $|k - j| > \Delta$, then $A[x, L_u, L_v, t] \ne 0$ if and only if $u = v$, or $u < v, s_v < s_u + L_u + \Delta$ or $v < u, s_u < s_v + L_v + \Delta$. The same holds for $B[x, L_u, L_v, t]$.\end{lemma}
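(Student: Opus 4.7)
The plan is to reduce the claim to a question about whether two intervals of absolute virtual-cell indices can contain points within distance $\Delta$ of each other, and then to resolve the three orderings of $u$ and $v$ separately.

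First, I would observe that after Gaussian truncation every surviving summand of $A[x,L_u,L_v,t]$ has the form $x^j\exp(-(j+t-k)^2/(2\sigma'^2))$, which is strictly positive whenever $x>0$ and $|j+t-k|\le \Delta$. Because no negative terms appear in the sum, $A\ne 0$ is equivalent to the existence of at least one pair $(j,k)$ with $0\le j\le L_u-1$, $0\le k\le L_v-1$, and $|j+t-k|\le \Delta$. Substituting $t=s_u-s_v$ rewrites the inequality as $|(j+s_u)-(k+s_v)|\le \Delta$, so the task becomes: do the absolute-index intervals $[s_u,s_u+L_u-1]$ and $[s_v,s_v+L_v-1]$ contain points within $\Delta$ of each other?

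I would then split into three cases. When $u=v$, the pair $j=k=0$ trivially satisfies $|j+t-k|=0\le\Delta$. When $u<v$, the two intervals are disjoint with group $u$ to the left (since the groups are consecutive, $s_v\ge s_u+L_u$), so the minimum of $|(j+s_u)-(k+s_v)|$ over admissible $(j,k)$ is attained at $j=L_u-1$, $k=0$ and equals $s_v-s_u-L_u+1$; this is at most $\Delta$ iff $s_v\le s_u+L_u+\Delta-1$, which, since all these quantities are integers, is equivalent to $s_v<s_u+L_u+\Delta$. The case $v<u$ is symmetric and yields $s_u<s_v+L_v+\Delta$.

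For $B[x,L_u,L_v,t]$ the same reasoning applies, modulo the fact that the factor $k$ kills the $k=0$ contribution, so the characterization must additionally produce a pair with $k\ge 1$ inside the truncation band. Under each of the three stated conditions the band is a diagonal strip of width $2\Delta+1$ in the $(j,k)$ plane, which intersects $\{k\ge 1\}$ whenever the group has $L_v\ge 2$, so the same iff carries over in all non-degenerate cases. The only real obstacle here is careful bookkeeping of integer-versus-strict inequalities at the boundaries of the cases; there is no deeper analytic content, and the entire argument reduces to a one-dimensional interval-overlap calculation.
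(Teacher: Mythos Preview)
Your proposal is correct and follows essentially the same interval-overlap argument the paper uses; the paper's own proof merely cites Figure~\ref{fig:block_truncation} for the case $u<v$, invokes symmetry for $u>v$, and asserts the $u=v$ case without further detail. Your remark that the ``iff'' for $B$ needs an admissible pair with $k\ge 1$ (hence $L_v\ge 2$, and possibly a one-unit shift at the boundary) is in fact more careful than the paper, which glosses over this edge case entirely.
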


\begin{proof}
Figure~\ref{fig:block_truncation} illustrates the case when $u < v$. The case of $u > v$ is symmetric. When $u=v$ Gaussian truncation cannot set all terms in $A$ and $B$ to zero. This completes the proof.\qed
\end{proof}
This lemma enables us to skip computations for any pair of $(u, v)$ that does not satisfy the above condition.
Based on this lemma, now we prove Theorem~\ref{theorem:approx_fsmi_rle}. 
\begin{proof}(Theorem~\ref{theorem:approx_fsmi_rle})
We first prove the statement in Equations \eqref{eqn:su'}-\eqref{eqn:Lv'}. Since $u$ and $v$ are symmetric, we only discuss the case of $u < v$. See Figure~\ref{fig:truncation_position} for an illustration.

The formula for $s_u'(u, v)$, the new start position of the left block, is different depending on whether the two blocks belong to the case of Figure~\ref{fig:rle_truncation_effective} or Figure~\ref{fig:rle_truncation_noeffect}. In the former case, $s_u'(u, v)=s_v - \Delta$ while in the latter case $s_u'(u, v)=s_u > s_v - \Delta$. Combining them into one formula, we have $s_u'(u, v) = \max(s_u, s_v - \Delta)$.

With $s_u'(u, v)$, to get $L_u'(u, v)$ we simply subtract the number of truncated cells, $s_u'(u, v) - s_u(u, v)$,
from $L_u$, \ie, $L_u'(u, v) = L_u - (s_u'(u, v) - s_u) = s_u + L_u - s_u'(u, v)$.

Gaussian truncation does not affect the start position of the group of cells further away from the scanning position. Since in this case $u < v$, the start position of the $v$-th group stays the same, \ie, $s_v'(u, v) = s_v$.

The derivation for $L_v'(u, v)$ is similar to the derivation for $s_u'(u, v)$. If it is the case of Figure~\ref{fig:rle_truncation_effective}, we have $L_v'(u, v) = s_u + L_u + \Delta - s_v$. Otherwise $L_v'(u, v)=L_v < s_u + L_u + \Delta - s_v$. Combining the two cases, we get $L_v'(u, v) = \min(L_v, s_u + L_u + \Delta - s_v)$.

The same proof applies to the case of $u > v$. Hence we have proven Equations \eqref{eqn:su'}-\eqref{eqn:Lv'}.

The proof for the top part of this theorem follows Theorem~\ref{theorem:group_fsmi_rle}. After the Gaussian truncation, $s_u, s_v, L_u, L_v$ becomes $s_u'(u, v), s_v'(u, v), L_u'(u, v), L_v'(u, v)$. As a result, $P_E(u)$ will change when $s_u'\ne s_u$; let us denote the updated value by $P_E'(u)$. Similarly, the value of $D_E(v)$ also changes when $s_v'\ne s_v$ and we denote it by $D_E'(v)$.

For $P_E'(u)$, we note that $s_u'(u, v) = s_u$ when $u > v$ according to Equation~\eqref{eqn:su'}; hence, we have
\begin{equation}
\label{eqn:group_fsmi_truncation_P_E}
    P_E'(u) = \left\{\begin{array}{ll}P_E(u)(1-o_u)^{s_u'(u, v) - s_u} & u < v \\ 
    P_E(u) & u \ge v\end{array}\right..
\end{equation}

Similarly, for $D_E'(v)$, we derive the following equation based on Equation~\eqref{eqn:sv'} and Equation~\eqref{eqn:Lv'}:

\begin{equation}
\label{eqn:group_fsmi_truncation_D_E}
D_E'(v) = \left\{\begin{array}{ll}D_E(v) & u \le v \\ 
    D_E(v) + (s_v'(u, v) - s_v)f(\delta_{emp}, r_v) & u > v\end{array}\right..
\end{equation}
Substituting Equation~\eqref{eqn:group_fsmi_truncation_P_E} and ~\eqref{eqn:group_fsmi_truncation_D_E} into Equation~\eqref{eqn:group_fsmi_rle_exact} and separate the cases for $u< v$, $u=v$ and $u>v$, we prove the theorem. \qed
\end{proof}

Theorem~\ref{theorem:approx_fsmi_rle} motivates the Approx-FSMI-RLE algorithm, which we present in Algorithm~\ref{alg:approx_fsmi_rle}. The algorithm creates look-up tables for the functions $\alpha$, $\beta$, $\theta$, and $\gamma$ using Algorithms~\ref{alg:tab_alpha_theta} and \ref{alg:tab_beta_gamma} in Lines~\ref{line:approx_fsmi_rle:tabulate_alpha} and \ref{line:approx_fsmi_rle:tabulate_beta}. Then, the algorithm proceeds with computing Shannon mutual information based on Equation~\eqref{eqn:approx_fsmi_rle:three_parts} in Lines~\ref{line:approx_fsmi_rle:main_loop:begin}-\ref{line:approx_fsmi_rle:main_loop:end}. 
Line~\ref{line:approx_fsmi_rle:equal} handles the case when $u=v$ by applying the formula in Equation~\eqref{eqn:approx_fsmi_rle:equal}. Lines~\ref{line:approx_fsmi_rle:less_than:begin}-\ref{line:approx_fsmi_rle:less_than:end} handle the case when $u<v$ by applying the formula in Equation~\eqref{eqn:approx_fsmi_rle:less_than}. Finally, Lines~\ref{line:approx_fsmi_rle:more_than:begin}-\ref{line:approx_fsmi_rle:more_than:end} handle the case when $u>v$ by applying the formula in Equation~\eqref{eqn:approx_fsmi_rle:more_than}. 

The following theorem establishes the correctness of the Approx-FSMI-RLE algorithm. The proof of the algorithm is evident from the description above. 
\begin{theorem}[Correctness of Approx-FSMI-RLE]
Algorithm~\ref{alg:approx_fsmi_rle} computes I(M';Z) in Equations~\eqref{eqn:approx_fsmi_rle:three_parts}-\eqref{eqn:approx_fsmi_rle:more_than}. 
\end{theorem}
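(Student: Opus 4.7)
The plan is a direct verification that Algorithm~\ref{alg:approx_fsmi_rle} implements the expression given in Equation~\eqref{eqn:approx_fsmi_rle:three_parts}; most of the heavy lifting has already been done in Theorem~\ref{theorem:approx_fsmi_rle} and Lemma~\ref{lemma:non_zero_blocks_truncation}, so the proof reduces to a line-by-line correspondence argument together with a correctness check for the auxiliary tabulations. I would partition the three summands $I_{u<v}$, $I_{u>v}$, $I_{u=v}$ of Equation~\eqref{eqn:approx_fsmi_rle:three_parts} and exhibit, for each, the block of lines in Algorithm~\ref{alg:approx_fsmi_rle} that accumulates exactly that sum into $I$; the final normalization by $\sqrt{2\pi}\sigma'$ then matches the denominator of Equation~\eqref{eqn:approx_fsmi_rle:three_parts}.

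First, I would verify that the pre-loop tabulations at Lines~\ref{line:approx_fsmi_rle:tabulate_alpha} and \ref{line:approx_fsmi_rle:tabulate_beta} correctly populate the look-up tables for $\alpha$, $\beta$, $\theta$, and $\gamma$. This is a direct check against the definitions, using the same style of induction as in the correctness proof of the exact FSMI-RLE. I would also invoke the already-established correctness of Algorithms~\ref{alg:P_E_u} and \ref{alg:D_E_v} to conclude that $P_E(u)$ and $D_E(v)$ are available at each iteration.

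Next, I would walk through the three branches of the main loop (Lines~\ref{line:approx_fsmi_rle:main_loop:begin}--\ref{line:approx_fsmi_rle:main_loop:end}). For the branch $u=v$ at Line~\ref{line:approx_fsmi_rle:equal}, I would match the accumulator to Equation~\eqref{eqn:approx_fsmi_rle:equal}, using $\theta[x, L_u] = \alpha[x, L_u, L_u]$ and $\gamma[x, L_u] = \beta[x, L_u, L_u]$. For the $u<v$ branch (Lines~\ref{line:approx_fsmi_rle:less_than:begin}--\ref{line:approx_fsmi_rle:less_than:end}), I would check that the algorithm skips pairs with $s_v \ge s_u + L_u + \Delta$, which is justified by Lemma~\ref{lemma:non_zero_blocks_truncation}, and that for the remaining pairs the truncated parameters $s_u'(u,v)$, $L_u'(u,v)$, $L_v'(u,v)$ are computed according to Equations~\eqref{eqn:su'}, \eqref{eqn:Lu'}, \eqref{eqn:Lv'}; the accumulator then matches Equation~\eqref{eqn:approx_fsmi_rle:less_than}. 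The $u>v$ branch (Lines~\ref{line:approx_fsmi_rle:more_than:begin}--\ref{line:approx_fsmi_rle:more_than:end}) is handled analogously against Equations~\eqref{eqn:approx_fsmi_rle:more_than}, \eqref{eqn:sv'}, using symmetry between $u$ and $v$.

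The main obstacle I anticipate is not mathematical but clerical: the bookkeeping for $s_u'$, $L_u'$, $s_v'$, $L_v'$ has several asymmetric cases, and one must be careful that the implementation correctly inherits the $\min/\max$ clipping in Equations~\eqref{eqn:su'}--\eqref{eqn:Lv'} (including the endpoints $u=1$ and $v=n_r$). A secondary issue is confirming that the look-up for $x$ via the quantization $x\leftarrow \mathrm{floor}((1-o_u)/o_{\mathrm{res}})\,o_{\mathrm{res}}$ is consistent with the quantization used when tabulating $\alpha$ and $\beta$; this reduces to noting that both use the same grid $\mathcal{X}$, at which point the approximation is exact in the same sense as in the previous section. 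Once all three branches are matched to the corresponding terms in the decomposition and the normalization is applied, the algorithm's returned value coincides with $I(M';Z)$ as given by Equation~\eqref{eqn:approx_fsmi_rle:three_parts}, completing the proof.
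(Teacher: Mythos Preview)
Your proposal is correct and follows the same approach as the paper: a direct line-by-line correspondence between the branches of Algorithm~\ref{alg:approx_fsmi_rle} and the three summands $I_{u=v}$, $I_{u<v}$, $I_{u>v}$ of Equation~\eqref{eqn:approx_fsmi_rle:three_parts}, relying on Theorem~\ref{theorem:approx_fsmi_rle} and Lemma~\ref{lemma:non_zero_blocks_truncation} for the substance. The paper in fact treats this theorem as self-evident from the preceding discussion and gives essentially no proof beyond pointing to the algorithm description; your plan is a more thorough version of the same verification.
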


The following two theorems establish the computational complexity of the Approx-FSMI-RLE algorithm. 

\begin{theorem}[Time complexity of the Approx-FSMI-RLE algorithm]
The time complexity of the Approx-FSMI-RLE algorithm is $O(\Delta\,n_r)$.
\end{theorem}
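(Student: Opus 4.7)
The plan is to decompose the cost of Algorithm~\ref{alg:approx_fsmi_rle} along the three-part structure of $I(M';Z)$ given in Equation~\eqref{eqn:approx_fsmi_rle:three_parts} and bound each piece separately. I would begin by observing that the precomputations of $P_E$ via Algorithm~\ref{alg:P_E_u} and of $D_E$ via Algorithm~\ref{alg:D_E_v} each take $O(n_r)$ time, and that the tabulations of $\alpha$, $\beta$, $\theta$, $\gamma$ are treated as one-time setup (as in the FSMI-RLE analysis). The term $I_{u=v}$ in Equation~\eqref{eqn:approx_fsmi_rle:equal} is a single loop over $u = 1,\dots,n_r$ with a constant number of table look-ups (into $\theta$ and $\gamma$) and arithmetic operations per iteration, so it contributes $O(n_r)$.

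The main work is the two double sums $I_{u<v}$ and $I_{u>v}$. For $I_{u<v}$, the outer index $u$ takes $n_r$ values, so the key step is to bound the cardinality of the admissible inner index set $\{v : v > u,\; s_v < s_u + L_u + \Delta\}$. Here I would use the consecutive layout of RLE groups together with the minimum-length convention $L_i \ge 1$: for every $v > u$ we have $s_v \ge s_u + L_u + (v-u-1)$, and therefore $s_v < s_u + L_u + \Delta$ forces $v - u \le \Delta$. This gives at most $O(\Delta)$ valid $v$ per $u$. Inside the inner loop, I would verify that (i) given the precomputed tables $\alpha$ and $\beta$, each evaluation of $A$ and $B$ through Equations~\eqref{eqn:compute_A}--\eqref{eqn:compute_B} is $O(1)$, and (ii) the truncated quantities $L_u', L_v', s_u', s_v'$ follow from Equations~\eqref{eqn:su'}--\eqref{eqn:Lv'} in $O(1)$. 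These observations bound $I_{u<v}$ by $O(n_r \Delta)$, and a symmetric argument (using the condition $s_u < s_v + L_v + \Delta$ from Equation~\eqref{eqn:approx_fsmi_rle:more_than}) yields the same bound for $I_{u>v}$.

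Adding the contributions gives $O(n_r) + O(n_r \Delta) + O(n_r \Delta) = O(n_r \Delta)$, as claimed. The step I expect to require the most care is the combinatorial bound on the inner-loop size: it rests on the RLE convention that every group contains at least one virtual cell, together with the truncation guard inherited from Lemma~\ref{lemma:non_zero_blocks_truncation}. I would state the minimum-length convention explicitly before invoking it, since otherwise the inner loop could in principle range over more than $\Delta$ groups. Everything else is routine accounting of constant-time table look-ups and arithmetic.
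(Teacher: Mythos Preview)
Your proposal is correct and follows essentially the same approach as the paper: precomputations in $O(n_r)$, tabulations treated as one-time setup, the diagonal term in $O(n_r)$, and the off-diagonal double sums bounded by $O(n_r\Delta)$ via the truncation guard. Your combinatorial bound $s_v \ge s_u + L_u + (v-u-1)$ using $L_i\ge 1$ is a slightly more explicit version of the paper's argument (which bounds the range of admissible $s_v$ values directly), but the content is the same, and the convention $L_i\in\mathbb{Z}^+$ is already stated in the RLE setup so no additional hypothesis is needed.
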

\begin{proof}
First of all, the computation of $P_E(u)$ in Line~\ref{line:approx_fsmi_rle:Pe(u)} and $D_E(v)$ in Line~\ref{line:approx_fsmi_rle:De(v)} of Algorithm~\ref{alg:approx_fsmi_rle} takes $O(n_r)$ in total to complete. Then we note that the tabulation of $\alpha, \beta, \theta, \gamma$ in Line~\ref{line:approx_fsmi_rle:tabulate_alpha} and Line~\ref{line:approx_fsmi_rle:tabulate_beta} in practice takes place outside the algorithm for a single beam; these tables are filled only once at the start of an exploration task so that they will not be computed again for the evaluation of Shannon mutual information on any single beam once the exploration starts.

The for-loop over $u$ from Line~\ref{line:approx_fsmi_rle:main_loop:begin} to Line~\ref{line:approx_fsmi_rle:more_than:end} enumerates $u$ from $1$ to $n_r$. The computation consists of three parts, corresponding to $I_{u=v}$, $I_{u<v}$ and $I_{u>v}$ in Theorem~\ref{theorem:approx_fsmi_rle}. Line~\ref{line:approx_fsmi_rle:equal} handles the case of $v=u$ and it costs $O(1)$. Lines~\ref{line:approx_fsmi_rle:less_than:begin}-\ref{line:approx_fsmi_rle:less_than:end} handle the case of $u<v$ and loop through all $v\in\{v \mid v > u and s_v < s_u + L_u + \Delta\}$. Note that $s_u + L_u$ is the index of the start of the next group of cells right of the $u$-th group of cells. So we have $s_v\ge s_u + L_u$. Since $s_v < s_u + l_u + \Delta$, the number of valid $v$ is bounded by $(s_u + L_u + \Delta) - (s_u + L) = \Delta$. The computation inside the for-loop in Lines~\ref{line:approx_fsmi_rle:less_than:begin}-\ref{line:approx_fsmi_rle:less_than:end} is $O(1)$, hence the complexity of this for-loop is $O(\Delta)$. Similarly the complexity for the for-loop in Lines~\ref{line:approx_fsmi_rle:more_than:begin}-\ref{line:approx_fsmi_rle:more_than:end} that handle the case of $u>v$ is also $O(\Delta)$. So the overall complexity for the for-loop in Lines~\ref{line:approx_fsmi_rle:main_loop:begin}-\ref{line:approx_fsmi_rle:main_loop:end} is $O(\Delta\, n_r)$.

Therefore, the overall complexity of the whole algorithm is $O(n_r + \Delta\,n_r) = O(\Delta\, n_r)$. \qed
\end{proof}

\begin{theorem}[Space complexity of Approx-FSMI-RLE algorithm]
The space complexity of the Approx-FSMI-RLE algorithm is $O\left(|\mathcal{X}|(n+\Delta^2)\right)$.
\end{theorem}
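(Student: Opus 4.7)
My plan is to account for the memory used by each data structure that Algorithm~\ref{alg:approx_fsmi_rle} maintains and to show that Gaussian truncation shrinks the effective index range of the tables for $\alpha$ and $\beta$ down to $\Delta$ in each length argument, while the single-index tables for $\theta$ and $\gamma$ must be kept at full length $n$. The bound $O(|\mathcal{X}|(n+\Delta^2))$ then emerges from adding these two contributions.

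First, I will list the persistent storage in the algorithm: the auxiliary arrays $P_E(\cdot)$ and $D_E(\cdot)$, each of length $n_r$, and the look-up tables $\alpha$, $\beta$, $\theta$, $\gamma$. The two arrays contribute $O(n_r)$, which is subsumed by $O(n)$, so only the tables matter asymptotically.

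Second, for $\alpha[x, L_u, L_v]$ and $\beta[x, L_u, L_v]$, I will argue that under truncation the algorithm never queries them with a length argument larger than $\Delta$. This follows from Equations~\eqref{eqn:Lu'} and~\eqref{eqn:Lv'}. Concretely, when $u>v$ the formula gives $L_u'(u,v)=\min(L_u,\,s_v+L_v+\Delta-s_u)\le\Delta$, using $s_u\ge s_v+L_v$; a symmetric argument yields $L_v'(u,v)\le\Delta$ for $u<v$. For the remaining indices (e.g.\ $L_u'(u,v)$ when $u<v$), I will do a short case split on the $\max$ in the definition of $s_u'(u,v)$: if $s_v-\Delta\ge s_u$ then $L_u'(u,v)=s_u+L_u-(s_v-\Delta)\le\Delta$ because $s_v\ge s_u+L_u$; otherwise $s_u>s_v-\Delta$ together with $s_v\ge s_u+L_u$ forces $L_u<\Delta$, so $L_u'(u,v)=L_u\le\Delta$. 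Hence $\alpha$ and $\beta$ only need entries for $L_u,L_v\in\{1,\ldots,\Delta\}$ and for $x\in\mathcal{X}$, contributing $O(|\mathcal{X}|\Delta^2)$.

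Third, for $\theta[x,L]$ and $\gamma[x,L]$, which appear only in $I_{u=v}$ (Equation~\eqref{eqn:approx_fsmi_rle:equal}), the truncation argument no longer applies because a single RLE group may by itself extend across up to $n$ virtual cells. Thus the length index ranges over $\{1,\ldots,n\}$, yielding $O(|\mathcal{X}|n)$ storage for these two tables. Summing all contributions gives
\begin{equation*}
O(n_r)+O(|\mathcal{X}|\Delta^2)+O(|\mathcal{X}|n)=O\!\left(|\mathcal{X}|(n+\Delta^2)\right),
\end{equation*}
as claimed. The most delicate step, and the one I would write out most carefully, is the case-split that bounds $L_u'(u,v)$ by $\Delta$ when $u<v$, since there both outcomes of the $\max$ in $s_u'(u,v)$ must be examined and recombined with the ordering $s_v\ge s_u+L_u$ between adjacent groups.
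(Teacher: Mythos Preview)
Your proposal is correct and takes essentially the same approach as the paper: identify that the dominant storage is in the look-up tables, argue that $\alpha,\beta$ need only cover length arguments in $\{1,\ldots,\Delta\}$ while $\theta,\gamma$ must cover $\{1,\ldots,n\}$, and sum the contributions. The paper's own proof compresses your entire second paragraph into the single clause ``because of Gaussian truncation''; your case-by-case bounding of $L_u'(u,v)$ and $L_v'(u,v)$ from Equations~\eqref{eqn:su'}--\eqref{eqn:Lv'} supplies the justification that the paper leaves implicit.
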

\begin{proof}
The main memory of Algorithm~\ref{alg:approx_fsmi_rle} is used to store the tables $\alpha[x, L_u, L_v], \beta[x, L_u, L_v]$ and $\theta[x, L_u], \gamma[x, l_u]$. Compared with it, the memory for all other variables has at most $O(n_r)$ space complexity, which is negligible. For both $\alpha$ and $\beta$, we have $1\le L_u, L_v\le \Delta$ because of Gaussian truncation. For $\theta, \gamma$, we have $1\le L_u\le n$. Since $x\in\mathcal{X}$, the overall space complexity of Algorithm~\ref{alg:approx_fsmi_rle} is $O\left(|\mathcal{X}|(n+\Delta^2)\right)$.
\end{proof}

\begin{algorithm}[!tb]
\caption{The Approx FSMI-RLE algorithm}\label{alg:approx_fsmi_rle}
\begin{algorithmic}[1]
\Require Noise standard derivation $\sigma$, cell width $w$ and $s_i, L_i, o_i, r_i$ for $1\le i\le n_r$, quantization resolution $o_{res}$, Gaussian truncation width $\Delta$, maximal length of a group of cell $L_{M}$.
\State $I\leftarrow 0$
\State $\sigma' = \sigma / w$
\State Compute $P_E(u)$ for $1\leq j \leq n_r$ with Algorithm~\ref{alg:P_E_u}. \label{line:approx_fsmi_rle:Pe(u)}
\State Compute $D_E(v)$ for $1 \leq k \leq n_r$ with Algorithm~\ref{alg:D_E_v}.\label{line:approx_fsmi_rle:De(v)}
\State Tabulate $\alpha[x, L_u, L_v]$ and $\theta[x, L_u]$ with Algorithm~\ref{alg:tab_alpha_theta} where $L_{bound}=\Delta$ and $L_{max}=L_{M}$.
\label{line:approx_fsmi_rle:tabulate_alpha}
\State  Tabulate $\beta[x, L_u, L_v]$ and $\gamma[x, L_u]$ with Algorithm~\ref{alg:tab_beta_gamma} where $L_{bound}=\Delta$ and $L_{max}=L_{M}$. \label{line:approx_fsmi_rle:tabulate_beta}
\For{$u=1$ \textbf{to} $n_r$} \label{line:approx_fsmi_rle:main_loop:begin}

    \State $I\leftarrow I + P_E(u)o_u{Z}(D_E(u) + f(\delta_{occ}, r_v))\theta[x, L_u] + f(\delta_{emp}, r_u)) \gamma[x, L_u]$ \label{line:approx_fsmi_rle:equal}

    \For{$v\in\{v\mid v > u \textrm{ and } s_v < s_u + L_u + \Delta\}$} \label{line:approx_fsmi_rle:less_than:begin}
            \State $s_u' \leftarrow \max(s_u, s_v -\Delta)$
            \State $L_u' \leftarrow s_u + L_u - s_u(u, v)'$
            \State $s_v' \leftarrow s_v$
            \State $L_v' \leftarrow \min(L_v, s_u + L_u + \Delta - s_v)$

            \State $t\leftarrow s_u' - s_v'$
            \State $A = x^{-t}\left(\alpha[x, L_u' + t, L_v'] - \alpha[x, t, L_v']\right)$
            \State $B = x^{-t}\left(\beta[x, L_u' + t, L_v'] - \beta[x, t, L_v']\right)$
            \State $I\leftarrow I + P_E(u) (1-o_u)^{s_u' - s_u} o_u((D_E(v)  + f(\delta_{occ}, r_v)) A + f(\delta_{emp}, r_v) B)$
    \EndFor \label{line:approx_fsmi_rle:less_than:end}
    
    \For{$v\in\{v \mid v < u\textrm{ and } s_u < s_v + L_v + \Delta$\}} \label{line:approx_fsmi_rle:more_than:begin} 
            \State $s_u' \leftarrow s_u$
            \State $L_u' \leftarrow \min(L_u, s_v + L_v + \Delta - s_u)$
            \State $s_v' \leftarrow \max{(s_v, s_u - \Delta)}$
            \State $L_v' \leftarrow s_v + L_v - s_v'$
            \State $t\leftarrow s_u' - s_v'$
            \State $A = x^{-t}\left(\alpha[x, L_u' + t, L_v'] - \alpha[x, t, L_v']\right)$
            \State $B = x^{-t}\left(\beta[x, L_u' + t, L_v'] - \beta[x, t, L_v']\right)$
            \State $I\leftarrow I + P_E(u) o_u((D_E(v) + (s_v' - s_v)f(\delta_{emp}, r_v) + f(\delta_{occ}, r_v)) A  + f(\delta_{emp}, r_v)) B)$
    \EndFor \label{line:approx_fsmi_rle:more_than:end}
    
\EndFor
\State $I\leftarrow I / (\sqrt{2\pi}\sigma')$ \label{line:approx_fsmi_rle:main_loop:end}
\State \Return $I$
\end{algorithmic}
\end{algorithm}

\begin{algorithm}[!tb]
\caption{Tabulating $\alpha[x, L_u, L_v]$ and $\theta[x, L_u]$}
\label{alg:tab_alpha_theta}
\begin{algorithmic}[1]
\Require $x_{res}$, $\sigma'$, $L_{bound}$, $L_{max}$
\State $N_x\leftarrow \mathrm{floor}(1 / x_{res})$
\For{$i = 0$ \textbf{to} $N_x$}
    \State $x\leftarrow i\cdot x_{res}$
    \State $\alpha[x, 1, 1] \leftarrow 1$
    \For{$L_u = 2$ \textbf{to} $L_{bound}$}
        \State $\alpha[x, L_u, 1] = \alpha[x, L_u - 1, 1] + \exp{(-\frac{(L_u - 1)^2}{2\sigma'^2})}$
    \EndFor
    
    \For{$L_v = 2$ \textbf{to} $L_{bound}$}
        \State $\alpha[x, 1, L_v] = \alpha[x, 1, L_v - 1] + x^{L_v - 1}\exp{(-\frac{(1 - L_v)^2}{2\sigma'^2})}$
    \EndFor
    
    \For{$L_u = 2$ \textbf{to} $L_{bound}$}
        \For{$L_v = 2$ \textbf{to} $L_{bound}$}
            \State $\alpha[x, L_u, L_v] = \alpha[x, L_u, L_v - 1] + \alpha[x, L_u - 1, L_v] - \alpha[x, L_u - 1, L_v - 1] + x^{L_v - 1}\exp{(-\frac{(L_u - L_v)^2}{2\sigma'^2})}$
        \EndFor
    \EndFor
    
    \For{$L = 1$ \textbf{to} $L_{bound}$}
        \State $\theta[x, L] = \alpha[x, L, L]$
    \EndFor
    
    \For{$L = L_{bound} + 1$ \textbf{to} $L_{\max}$}
        \State $\theta[x, L] \leftarrow \theta[x, L - 1]$
        \For{$i = 1$ \textbf{to} $L - 1$}
            \State $\theta[x, L] \leftarrow \theta[x, L] + x^{L- 1}\exp{(-\frac{(i - L)^2}{2\sigma'^2})}$
            \State $\theta[x, L] \leftarrow \theta[x, L] + x^{i- 1}\exp{(-\frac{(i - L)^2}{2\sigma'^2})}$
        \EndFor
        \State $\theta[x, L] \leftarrow \theta[x, L] + x^{L- 1}$
    \EndFor
\EndFor

\State \Return $\alpha$ and $\theta$
\end{algorithmic}
\end{algorithm}

\begin{algorithm}[!tb]
\caption{Tabulating $\beta[x, L_u, L_v]$ and $\gamma[x, L_u]$}
\label{alg:tab_beta_gamma}
\begin{algorithmic}[1]
\Require $x_{res}$, $\sigma'$, $L_{bound}$, $L_{max}$
\State $N_x\leftarrow \mathrm{floor}(1 / x_{res})$
\For{$i = 0$ \textbf{to} $N_x$}
    \State $x\leftarrow i\cdot x_{res}$
    \State $\beta[x, 1, 1] \leftarrow 0$
    \For{$L_u = 2$ \textbf{to} $L_{bound}$}
        \State $\beta[x, L_u, 1] \leftarrow 0$
    \EndFor
    
    \For{$L_v = 2$ \textbf{to} $L_{bound}$}
        \State $\beta[x, 1, L_v] = \beta[x, 1, L_v - 1] + (L_v - 1)\exp{(-\frac{(1 - L_v)^2}{2\sigma'^2})}$
    \EndFor
    
    \For{$L_u = 2$ \textbf{to} $L_{bound}$}
        \For{$L_v = 2$ \textbf{to} $L_{bound}$}
            \State $\beta[x, L_u, L_v] = \beta[x, L_u, L_v - 1] + \beta[x, L_u - 1, L_v] - \beta[x, L_u - 1, L_v - 1] + (L_v - 1)x^{L_u - 1}\exp{(-\frac{(L_u - L_v)^2}{2\sigma'^2})}$
        \EndFor
    \EndFor
    
    \For{$L = 1$ \textbf{to} $L_{bound}$}
        \State $\gamma[x, L] = \beta[x, L, L]$
    \EndFor
    
    \For{$L = L_{bound} + 1$ \textbf{to} $L_{\max}$}
        \State $\gamma[x, L] \leftarrow \gamma[x, L - 1]$
        \For{$i = 1$ \textbf{to} $L - 1$}
            \State $\gamma[x, L] \leftarrow \gamma[x, L] + (i - 1)x^{L- 1}\exp{(-\frac{(i - L)^2}{2\sigma'^2})}$
            \State $\gamma[x, L] \leftarrow \gamma[x, L] + (L - 1)x^{i- 1}\exp{(-\frac{(i - L)^2}{2\sigma'^2})}$
        \EndFor
        \State $\gamma[x, L] \leftarrow \gamma[x, L] + (L - 1)x^{L- 1}$
    \EndFor
\EndFor

\State \Return $\beta$ and $\gamma$
\end{algorithmic}
\end{algorithm}



\subsection{Uniform-FSMI-RLE Algorithm: Shannon Mutual Information in RLE Assuming Uniform Distribution for Sensor Noise}
\label{sec:fsmi_rle_uniform}

In this section, we discuss an algorithm that computes Shannon mutual information on measurements in RLE assuming uniform distribution for the sensor noise. Unfortunately, the resulting algorithm is not substantially better than the Approx-FSMI-RLE algorithm. However, we still describe the key ideas behind the algorithm for the purposes of completeness. 

Let $\mathbbm{1}(\cdot)$ be the indicator function. We also employ the notations defined in Theorem~\ref{theorem:uniform_fsmi}. The following theorem presents the main result: 

\begin{theorem}[Uniform FSMI-RLE]
If the sensor measurement noise follows uniform distribution, i.e., 
\begin{equation}
\label{eqn:uniform_distribution_rle}
P(Z | e_i) \sim U[l_i - H w,\; l_{i + 1} + H w],
\end{equation}
for $H\in\mathbb{Z}^+$ and the width of the virtual cell $w$, then the Shannon mutual information between $Z$ and $M'$ can be evaluated as

\begin{equation}
\begin{split}\label{eqn:group_uniform_fsmi_rle}
&I(M'; Z) = \sum_{u = 1}^{n_r}\sum_{v = 1}^{n_r} \frac{P_E(u) o_u}{2H + 1}\Big(\\
&(D_E(v) + f(\delta_{occ}, r_v))\sum_{j=0}^{L_u - 1}\sum_{k=0}^{L_v - 1} k\cdot \bar{o}_{u}^{j} \mathbbm{1}(|j + t - k)|\le H)\\ 
&+ f(\delta_{emp}, r_v)) \sum_{j=0}^{L_u - 1}\sum_{k=0}^{L_v - 1} \bar{o}_{u}^{j}\mathbbm{1}(|j + t - k|\le H) \Big).
\end{split}
\end{equation}
\end{theorem}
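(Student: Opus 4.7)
My plan is to mirror the proof of Theorem~\ref{theorem:group_fsmi_rle} (the Gaussian FSMI-RLE result) almost verbatim, but with the Gaussian kernel replaced by the uniform one. The only substantive change is in the $G_{k,j}$ factor; everything else, \emph{i.e.}~the decomposition of the single-cell $P(e_i)$ and $C_k$ quantities into RLE block-structured expressions, carries over unchanged.

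Concretely, I would start from Equation~\eqref{eqn:eqn_from_fsmi},
\[
I(M'; Z) = \sum_{u=1}^{n_r}\sum_{v=1}^{n_r}\sum_{j=0}^{L_u-1}\sum_{k=0}^{L_v-1} P(e_{s_u+j})\, C_{s_v+k}\, G_{s_u+j,\, s_v+k},
\]
which is simply the FSMI sum~\eqref{eqn:fast_exact_mi} reindexed by RLE blocks. I would then reuse Equations~\eqref{eqn:p_e_u} and~\eqref{eqn:C_v}, namely $P(e_{s_u+j}) = P_E(u)(1-o_u)^{j} o_u = P_E(u)\bar{o}_u^{\,j} o_u$ and $C_{s_v+k} = D_E(v) + k f(\delta_{emp}, r_v) + f(\delta_{occ}, r_v)$. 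These identities depend only on the structure of $P(e_\cdot)$ and $C_\cdot$, not on the sensor noise model, so they transfer to the uniform case without modification.

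The only step that changes is the replacement of $G_{k,j}$. Under the uniform model of Equation~\eqref{eqn:uniform_distribution_rle} with cell width $w$, the density $P(z \mid e_j)$ is constant at $\tfrac{1}{(2H+1)w}$ on the interval $[l_j - Hw,\, l_{j+1} + Hw]$ and zero outside. Integrating over a single virtual cell $[l_k, l_{k+1}]$, as in the definition of $G_{k,j}$ in Equation~\eqref{eqn:G_jk}, therefore yields
\[
G_{s_u+j,\, s_v+k} \;=\; \frac{1}{2H+1}\,\mathbbm{1}\!\bigl(|(s_u+j)-(s_v+k)| \le H\bigr) \;=\; \frac{\mathbbm{1}(|j + t - k| \le H)}{2H+1},
\]
where $t = s_u - s_v$, directly analogous to the piecewise-constant argument used in Theorem~\ref{theorem:uniform_fsmi}. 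Substituting this expression together with the block-factored forms of $P(e_{s_u+j})$ and $C_{s_v+k}$ into the quadruple sum, pulling the $u,v$-dependent factors out of the inner $j,k$ sums, and splitting the $k$-dependent part of $C_{s_v+k}$ into the $k$-weighted and $k$-independent pieces gives exactly Equation~\eqref{eqn:group_uniform_fsmi_rle}.

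I do not expect any genuine obstacle: the proof is a mechanical specialization of Theorem~\ref{theorem:group_fsmi_rle}. The only minor care needed is the bookkeeping that the uniform density evaluated against a single virtual cell produces the indicator $\mathbbm{1}(|j+t-k|\le H)$ with the correct boundary convention (the quantization assumption in Theorem~\ref{theorem:uniform_fsmi} ensures that the endpoints $l_j - Hw$ and $l_{j+1} + Hw$ align with virtual-cell boundaries, so the integer inequality is exact rather than an approximation). Once this is verified, the rest is algebraic rearrangement.
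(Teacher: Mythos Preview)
The paper does not supply a separate proof for this theorem; it is stated without argument, with the surrounding text making clear that it is the uniform-noise analogue of Theorem~\ref{theorem:group_fsmi_rle}. Your proposal is exactly the derivation the paper implicitly relies on: reuse the RLE block reindexing of Equation~\eqref{eqn:eqn_from_fsmi} together with the block factorizations~\eqref{eqn:p_e_u} and~\eqref{eqn:C_v}, and replace the Gaussian $G_{k,j}$ by the uniform kernel $\tfrac{1}{2H+1}\mathbbm{1}(|j+t-k|\le H)$. Your computation of that kernel from the quantized uniform density is correct, and the remaining algebra is, as you say, mechanical.

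One small remark: if you carry your substitution through, the coefficient $D_E(v)+f(\delta_{occ},r_v)$ attaches to the \emph{unweighted} double sum $\sum_{j,k}\bar o_u^{\,j}\mathbbm{1}(\cdot)$ and $f(\delta_{emp},r_v)$ to the $k$-weighted one, exactly as $A$ and $B$ pair with these coefficients in Theorem~\ref{theorem:group_fsmi_rle}. The printed statement~\eqref{eqn:group_uniform_fsmi_rle} has the two inner sums transposed relative to this; your derivation recovers the correct pairing, so do not be alarmed when the outcome of your algebra does not match the displayed formula symbol-for-symbol.
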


This motivates us to focus on the computation of the following two terms:

\begin{equation}
\begin{split}
F[x, L_u, L_v, t] &= \sum_{j=0}^{L_u - 1}\sum_{k=0}^{L_v - 1} x^j \mathbbm{1}(|j + t - k)|\le H),\\
G[x, L_u, L_v, t] &= \sum_{j=0}^{L_u - 1}\sum_{k=0}^{L_v - 1} k\cdot x^j \mathbbm{1}(|j + t - k)|\le H).
\end{split}
\end{equation}

We apply the same tabulation techniques and algorithms presented in Section~\ref{sec:exact_fsmi_rle}, except we tabulate $F$ and $G$ and we use the Equation~\eqref{eqn:group_uniform_fsmi_rle} to compute Shannon mutual information. We call the resulting algorithm Uniform-FSMI-RLE.

The following theorem states the time complexity of the Uniform-FSMI-RLE algorithm:

\begin{theorem}[Time complexity of Uniform-FSMI-RLE]
\label{theorem:uniform_fsmi_rle_complexity}
If $F[x, L_u, L_v, t]$ and $G[x, L_u, L_v, t]$ can be evaluated in $O(1)$, the time complexity of the Uniform-FSMI-RLE algorithm is $O(n_r \, H)$. 
\end{theorem}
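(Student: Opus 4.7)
The plan is to mirror the time complexity argument for the Approx-FSMI-RLE algorithm, replacing the Gaussian truncation width $\Delta$ with the uniform half-width $H$. Since Uniform-FSMI-RLE follows the same tabulation-based structure as FSMI-RLE (as the paragraph preceding the theorem indicates), the bulk of the proof consists of (i) accounting for the fixed precomputation cost and (ii) bounding the number of pairs $(u,v)$ for which the inner summand in Equation~\eqref{eqn:group_uniform_fsmi_rle} is not identically zero.

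First I would observe that, by construction, $P_E(u)$ and $D_E(v)$ are computed once via Algorithms~\ref{alg:P_E_u} and \ref{alg:D_E_v}, each running in $O(n_r)$ time; the tabulation of $F$ and $G$ is performed once outside the single-beam computation and, under the theorem's hypothesis, each table lookup costs $O(1)$. Thus the per-beam cost is dominated by the double sum over $(u,v)$ in Equation~\eqref{eqn:group_uniform_fsmi_rle}.

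Next I would establish the key combinatorial bound: for a fixed $u$, the number of indices $v$ for which $F[\bar{o}_u, L_u, L_v, t]$ or $G[\bar{o}_u, L_u, L_v, t]$ is nonzero is $O(H)$. This is the analogue of the argument used in the Approx-FSMI-RLE complexity proof. Concretely, for the indicator $\mathbbm{1}(|j+t-k|\le H)$ to fire for some $j\in[0,L_u-1]$, $k\in[0,L_v-1]$ with $t=s_u-s_v$, the starting indices must satisfy $s_v\le s_u+L_u+H-1$ when $u<v$ (and symmetrically $s_u\le s_v+L_v+H-1$ when $u>v$). Because $v>u$ forces $s_v\ge s_u+L_u$, the admissible values of $s_v$ lie in an integer window of length at most $H$. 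Since the RLE representation guarantees $L_i\ge 1$ and consecutive groups carry distinct occupancy values, distinct group indices correspond to distinct starting positions, so at most $H$ indices $v>u$ can survive; the symmetric argument yields at most $H$ indices $v<u$.

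Finally I would combine the bounds: the main loop contributes $O(1)$ work for the diagonal term $u=v$ and $O(H)$ non-zero inner iterations for each $u$, each of $O(1)$ cost, giving $O(n_r\,H)$. Adding the $O(n_r)$ preprocessing gives the claimed $O(n_r\,H)$ overall bound. The main obstacle, which the proof must be precise about, is the indicator-support counting step; everything else is routine book-keeping paralleling Theorem~\ref{theorem:approx_fsmi_rle}. A minor subtlety worth noting is that the $O(1)$ evaluation of $F$ and $G$ is taken as a hypothesis here because, unlike the Gaussian case, no analogously compact recurrence is exhibited in the preceding text; the theorem is stated conditionally on this assumption.
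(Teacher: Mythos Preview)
Your proposal is correct and follows precisely the approach the paper itself indicates: the paper explicitly states that the proof ``is trivial given the time complexity of the Approx-FSMI-RLE algorithm'' and skips it, noting only that the uniform support plays the role of Gaussian truncation with $H$ in place of $\Delta$. Your write-up supplies the details the paper omits---in particular, the indicator-support counting step bounding the number of contributing $v$ per fixed $u$ by $O(H)$---and these details are sound.
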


Note that the support of the uniform distribution itself only spans $2H\, w$ in length; therefore, its impact on the Shannon mutual information computation is similar to that of the Gaussian truncation, and $H$ is similar to $\Delta$ despite their definitions being different ($H$ is the radius of the uniform distribution, while $\Delta$ is the truncation of the Gaussian distribution). Therefore, the proof of Theorem~\ref{theorem:uniform_fsmi_rle_complexity} is trivial given the time complexity of the Approx-FSMI-RLE algorithm, so we skip it in this paper.

In practice, the magnitude of the value of $H$ is also comparable to $\Delta$; hence, the time complexity of the Uniform-FSMI-RLE algorithm, $O(n_r \, H)$ is not lower than the time complexity of the Approx-FSMI-RLE algorithm, which is $O(n_r \, \Delta)$.


\section{Experimental Results} \label{sec:experiments}
This section is devoted to our experiments. In Section~\ref{sec:experiments:fsmi-single}, we demonstrate the FSMI and Approx-FSMI algorithms in computational experiments with randomly generated occupancy values. Then, in Section~\ref{sec:experiments:fsmi-simulated}, we demonstrate the effectiveness of these algorithms for two-dimensional mapping in a synthetic environment. In Section~\ref{sec:plannar_mapping_exp}, we demonstrate the same algorithms in an experiment involving a 1/10-scale car-like robot. In all cases, we compare the proposed algorithms with the existing algorithms for computing the Shannon mutual information metric and existing algorithms for computing the CSQMI metric.

Then, we turn our attention to the proposed algorithms for run-length encoded occupancy sequences. In Section~\ref{sec:experiments:approx-fsmi-single}, we demonstrate the Approx-FSMI-RLE algorithm in computational experiments and show the speedup compared to the FSMI algorithm which does not support run length encoding. Following, in Section~\ref{sec:experiments:approx-fsmi-real-world}, we showcase the Approx-FSMI-RLE algorithm in a scenario involving a 1/10-scale car-like robot equipped with a Velodyne 16-channel laser range finder. 
All computational experiments use one core of an Intel Xeon E5-2695 CPU. All experiments involving the 1/10-scale car use a single core of the ARM Cortex-A57 CPU on the NVIDIA Tegra X2 platform. 

\subsection{Computational Experiments for 2D Mutual Information Algorithms} \label{sec:experiments:fsmi-single}
The first experiment studies the accuracy and the throughput of evaluating mutual information in computational experiments. We consider a scenario where the length of the beam is $\SI{10}{\meter}$ and the resolution of the occupancy grid is $\SI{0.1}{\meter}$. The occupancy values are generated at random. We set $\delta_{occ} = 1/\delta_{emp} = 1.5$.
We set the sensor's noise to be a normal distribution with a constant $\sigma = \SI{0.05}{\meter}$ regardless of the travel distance of the beam. 

First, we compare the run time and accuracy of following three algorithms: {\em (i)} the existing Shannon Mutual Information computation algorithm by~\cite{julian2014mutual} with integration resolution parameter set to $\lambda_z = \SI{0.01}{\meter}$, which we call the SMI algorithm, {\em (ii)} the FSMI algorithm, and {\em (iii)} the Approx-FSMI algorithm with truncation parameter set to $\Delta = 3$. To measure the accuracy of the algorithms, we compute the ground truth using the algorithm by~\cite{julian2014mutual} with integration resolution parameter set to  $\lambda_z = \SI{10}{\micro\meter}$. 
The results are summarized in Figure~\ref{fig:beam_mi_speed} and Figure~\ref{fig:beam_mi_error}. We observe that the FSMI algorithm computes Shannon mutual information more accurately than the SMI algorithm (with integration parameter set to $\lambda_z = \SI{0.01}{\meter}$) while running more than three orders of magnitude faster. This can be explained by the low numerical integration resolution when $\lambda_z=\SI{0.01}{\meter}$. The run time of Approx-FSMI is an additional $7$ times faster than FSMI at a small cost of accuracy. Still, the Approx-FSMI algorithm is more accurate than the SMI algorithm with $\lambda_z = \SI{0.01}{\meter}$. 

\begin{figure}[!t]
\centering 
\subfigure[Mean time ($\SI{}{\micro\second}$)]{\label{fig:beam_mi_speed}\includegraphics[height=0.27\columnwidth]{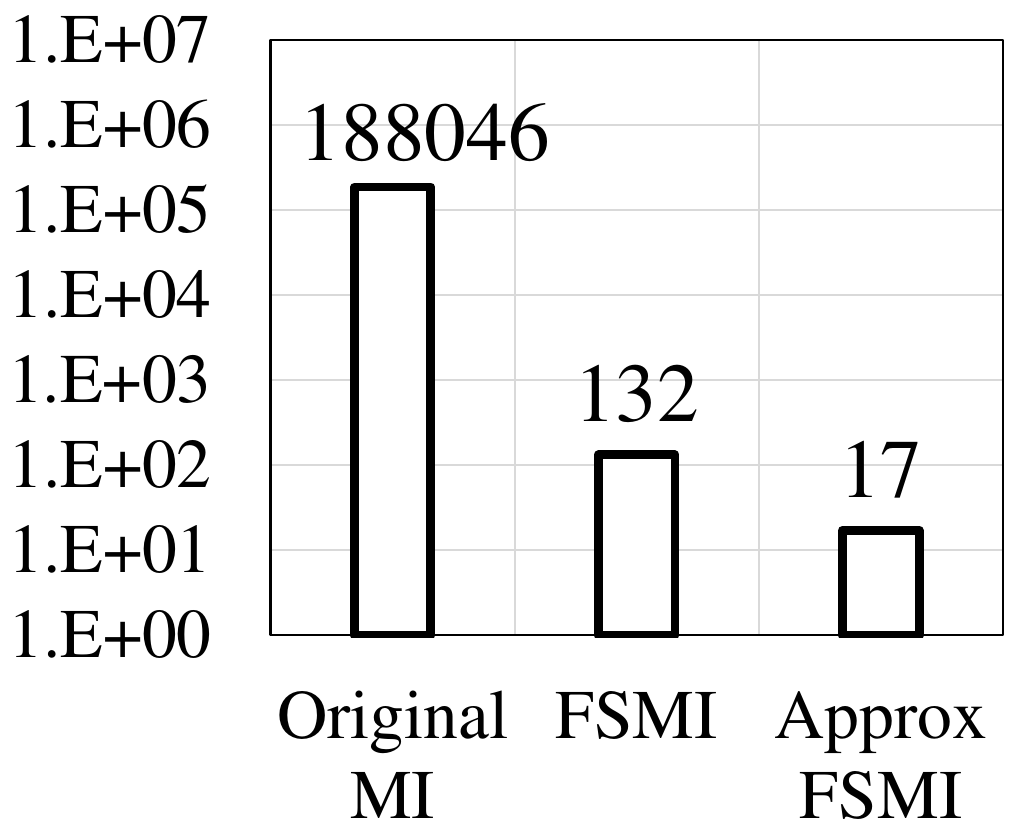}}
\subfigure[Mean relative error]{\label{fig:beam_mi_error}\includegraphics[height=0.27\columnwidth]{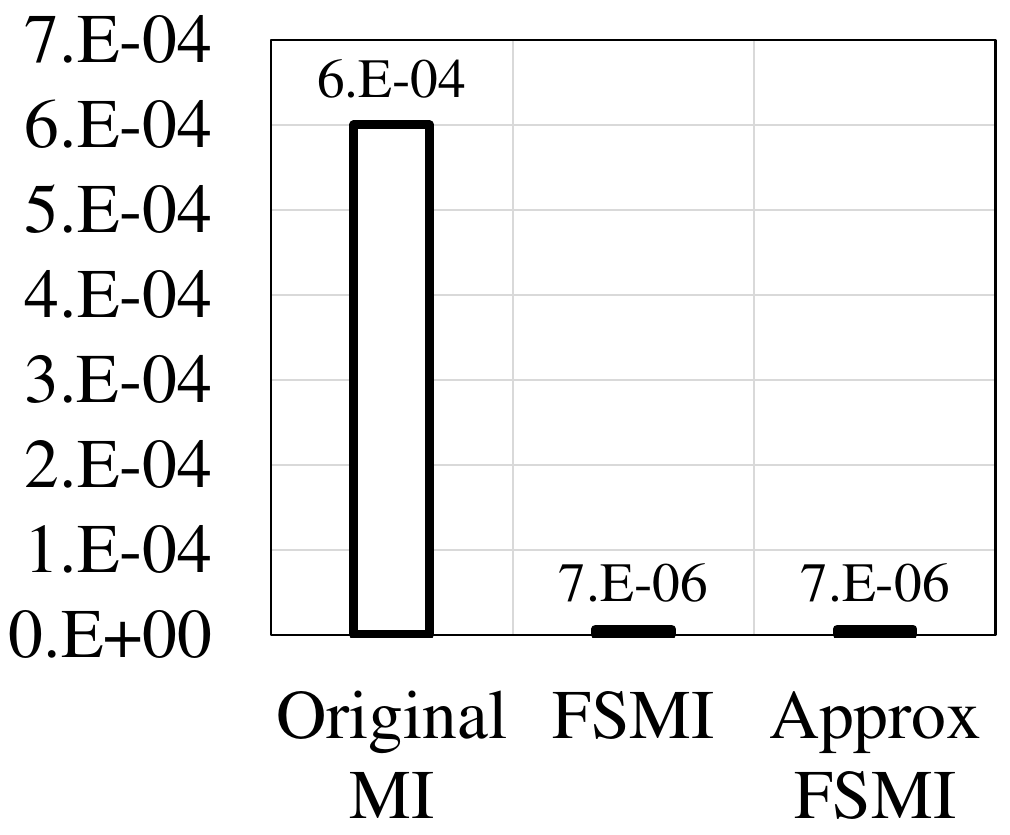}}
\subfigure[Mean time ($\SI{}{\micro\second}$)]{\label{fig:beam_csqmi_speed}\includegraphics[height=0.27\columnwidth]{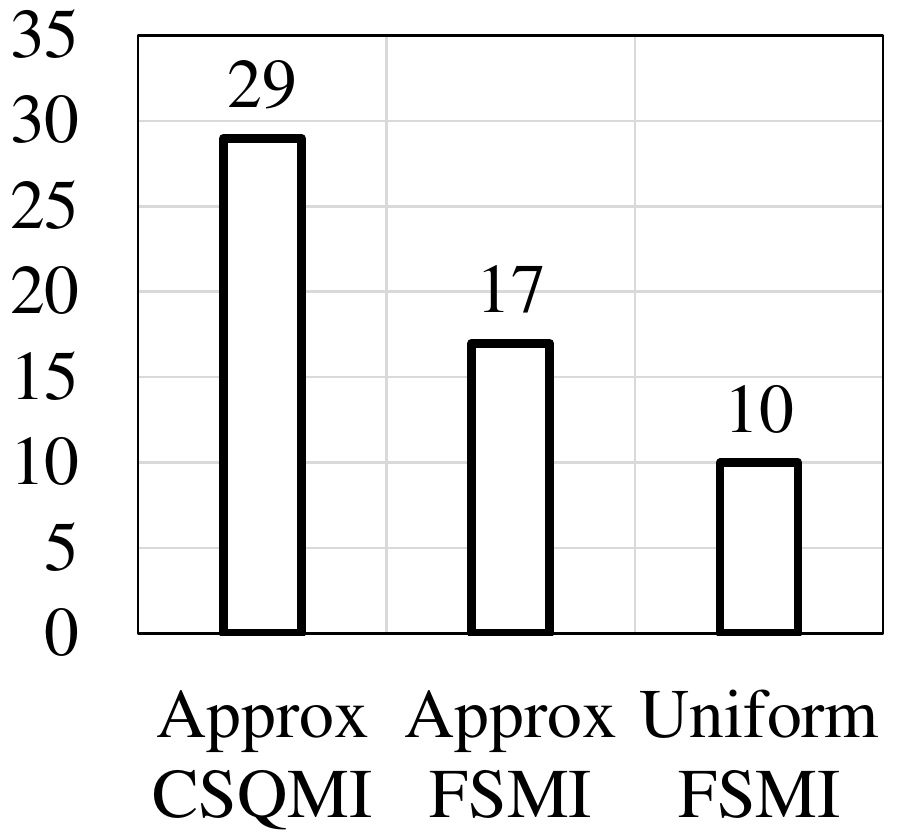}}
\caption{Speed \& relative error of different mutual information algorithms on a single beam.}
\label{fig:mi_vs_original_mi}
\end{figure}

Second, we compare the run time of the following three algorithms: {\em (i)} the Approx-CSQMI algorithm with truncation parameter set to $\Delta = 3$, {\em (ii)} the Approx-FSMI algorithm with truncation parameter set to $\Delta = 3$, {\em (iii)} the Uniform-FSMI algorithm. 
The results are shown in Figure~\ref{fig:beam_csqmi_speed}. We find that Approx-FSMI is $1.7$ times faster than Approx-CSQMI, and Uniform-FSMI is $3$ times faster than Approx-CSQMI. The acceleration is not as large as predicted in Theorem~\ref{theorem:num_mul} due to compiler optimizations applied to Approx-CSQMI. 

\subsection{Simulated Scenario for 2D Mapping Algorithms}
\label{sec:experiments:fsmi-simulated}

In this section, we consider mapping in a synthetic environment, shown in Figure~\ref{fig:syn_explore_map}. 
The environment is $\SI{18}{\meter}\times \SI{18}{\meter}$, and it is represented by an occupancy grid with resolution $\SI{0.1}{\meter}$. 
The virtual robot that can measure range in all directions with $\SI{2}{\degree}$ resolution, thus emulating a laser range finder with 180 beams. 
We compare the following four algorithms: {\em (i)} the Approx-FSMI algorithm with truncation parameter set to $\Delta = 3$, {\em (ii)} the Approx-CSQMI algorithm with truncation parameter set to $\Delta = 3$, {\em (iii)} the Uniform-FSMI Algorithm, and {\em (iv)} the nearest frontier exploration method. 
In the first three cases, the robot chooses to follow paths, computed using Dijkstra's algorithm~\citep{cormen2009introduction}, that maximize the ratio between the mutual information gain along the path and the travel distance. In the fourth case, we use the nearest frontier algorithm discussed in~\cite{charrow2015csqmi}, where the robot travels to the closest cluster of frontier cells.
In all cases, the exploration terminates when the reduction of the entropy of the map representation falls below a constant threshold.  All algorithms are run three times and the results are averaged.

The results are shown in Figure~\ref{fig:2d_syn_environment_vis}. 
An example map generated by the Approx-FSMI algorithm as well as the path spanned by the robot are shown in Figure~\ref{fig:syn_explore_map}. The average path lengths spanned by the four exploration strategies are shown in Figure~\ref{fig:syn_explore_length}. We find that the first three exploration algorithms, all based on information-theoretic metrics, execute paths with approximately the same length. The frontier exploration method executes paths that are on average at 22\% longer than the information-based methods.

\begin{figure}[!b]
\centering 
\subfigure[Map and trajectory]{\label{fig:syn_explore_map}\includegraphics[height=0.4\columnwidth]{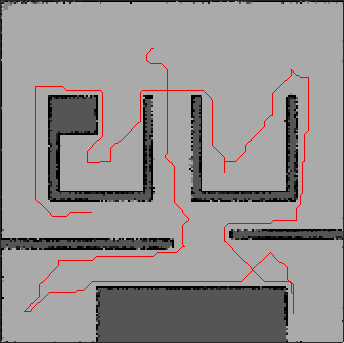}}
\subfigure[Trajectory length]{\label{fig:syn_explore_length}\includegraphics[height=0.4\columnwidth]{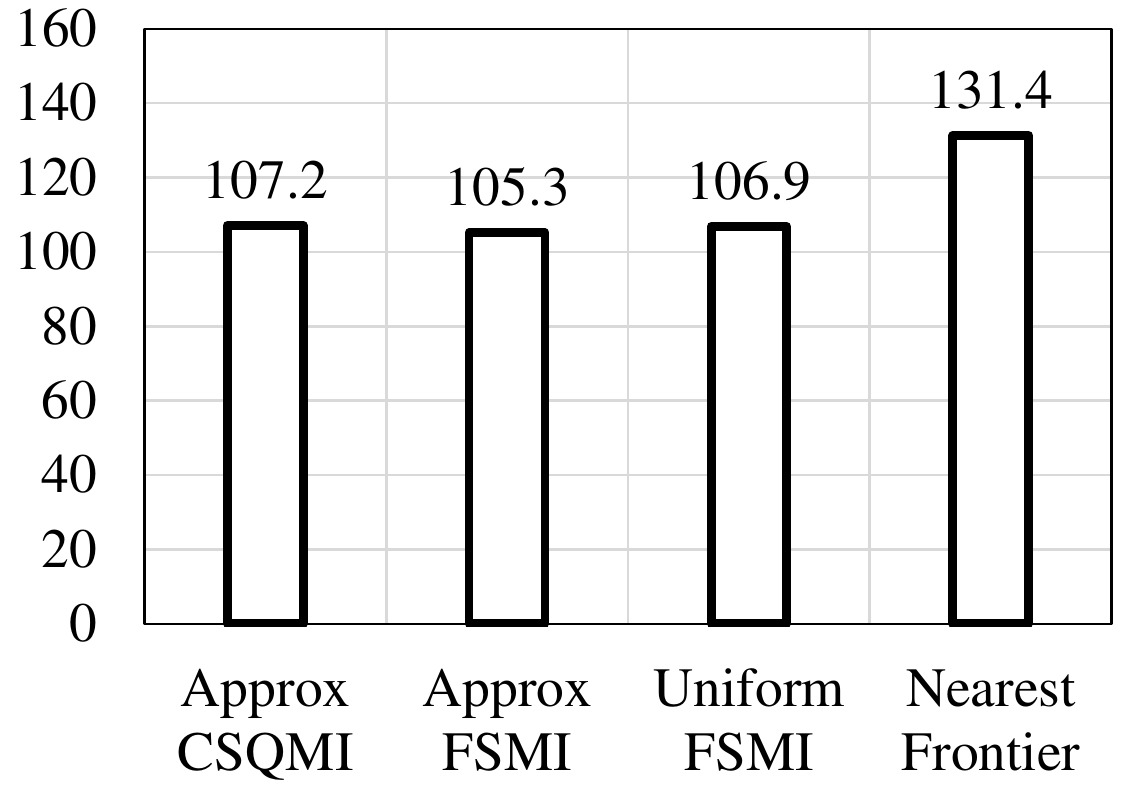}}
\caption{Synthetic 2D environment exploration experiment results.}
\label{fig:2d_syn_environment_vis}
\end{figure}

Throughout the experiments, the average time spent computing mutual information was recorded. The Approx-CSQMI algorithm takes $\SI{12.4}{\micro\second}$ per beam, the Approx-FSMI algorithm takes $\SI{8.3}{\micro\second}$ per beam, and the Uniform-FSMI takes $\SI{4.9}{\micro\second}$ per beam, on average. The ranking of the algorithms is consistent with our results reported in the Section~\ref{sec:experiments:fsmi-single}. In these experiments, the average beam length is roughly half of what was used in the Section~\ref{sec:experiments:fsmi-single}, hence these evaluation times are roughly half as well.

\subsection{Real-world Scenario for 2D Mapping Algorithms}
\label{sec:plannar_mapping_exp}

In this section, we consider a mapping scenario involving a 1/10th-scale car-like robot equipped with a Hokoyo UST-10LX LiDAR. In our experiments, we limited the field of view of the LiDAR to $\SI{230}{\degree}$ and its range to $\SI{3}{\meter}$. The sensor provides 920 range measurements within this field of view. 
The robot is placed in the $\SI{8}{\meter}$-by-$\SI{8}{\meter}$ environment shown in Figure~\ref{fig:real_car}. The location of the robot is obtained in real time using an OptiTrack motion capture system. Path planning is accomplished using the RRT$^*$ algorithm~\citep{rrtstar}, with Reeds-Shepp curves~\citep{reedsshepp} as the steering function. In our experiments, we evaluate mutual information along each path in the RRT$^*$ tree at $\SI{0.2}{\meter}$ intervals with 50 equally-spaced beams. We find the path that maximizes the ratio between the mutual information along the path and the total path length. A visualization of these potential paths and their rankings is shown in Figure~\ref{fig:exp_real_motion}. Once the car reaches the end of the selected trajectory, it computes a new trajectory using the same algorithm. This procedure is repeated until the entropy of the map drops below a constant threshold. In all of these experiments, the resolution of the occupancy grid map is set to $\SI{0.05}{\meter}$, and the sensor parameters are set to $\sigma = \SI{0.05}{\meter}$ and $\delta_{occ}=1/\delta_{emp}=1.5$.

In this setting, we compare the following three algorithms: {\em (i)} the Approx-CSQMI Algorithm with truncation parameter set to $\Delta = 3$, {\em (ii)} the Approx-FSMI Algorithm with truncation parameter set to $\Delta = 3$, {\em (iii)} the Uniform-FSMI algorithm. 
We found that all three algorithms perform similarly in terms of how quickly they reduce the entropy of the map as shown in Figure~\ref{fig:exp_real_entropy}. 
We measured Approx-CSQMI to take $\SI{422.7}{\micro\second}$, Approx-FSMI to take $\SI{148.7}{\micro\second}$, and Uniform-FSMI to take $\SI{111.4}{\micro\second}$ per beam, on average. 
The ranking of these timings is consistent with Sections~\ref{sec:experiments:fsmi-single} and~\ref{sec:experiments:fsmi-simulated}. Differences in scaling may be due to the differences in compiler optimizations on the ARM Cortex-A57 CPU present in the NVIDIA Tegra X2 platform.
We also provide a timelapse of the exploration with the Approx-FSMI algorithm in Figure~\ref{fig:2d_storyboard}. 

\begin{figure}[!t]
\centering 
\subfigure[Paths computed with the RRT* based planner are colored based on their potential information gain. Paths that obtain a the most mutual information per distance are colored green.]{\label{fig:exp_real_motion}\includegraphics[trim={9cm 5cm 3cm 7.5cm},clip,width=0.8\columnwidth]{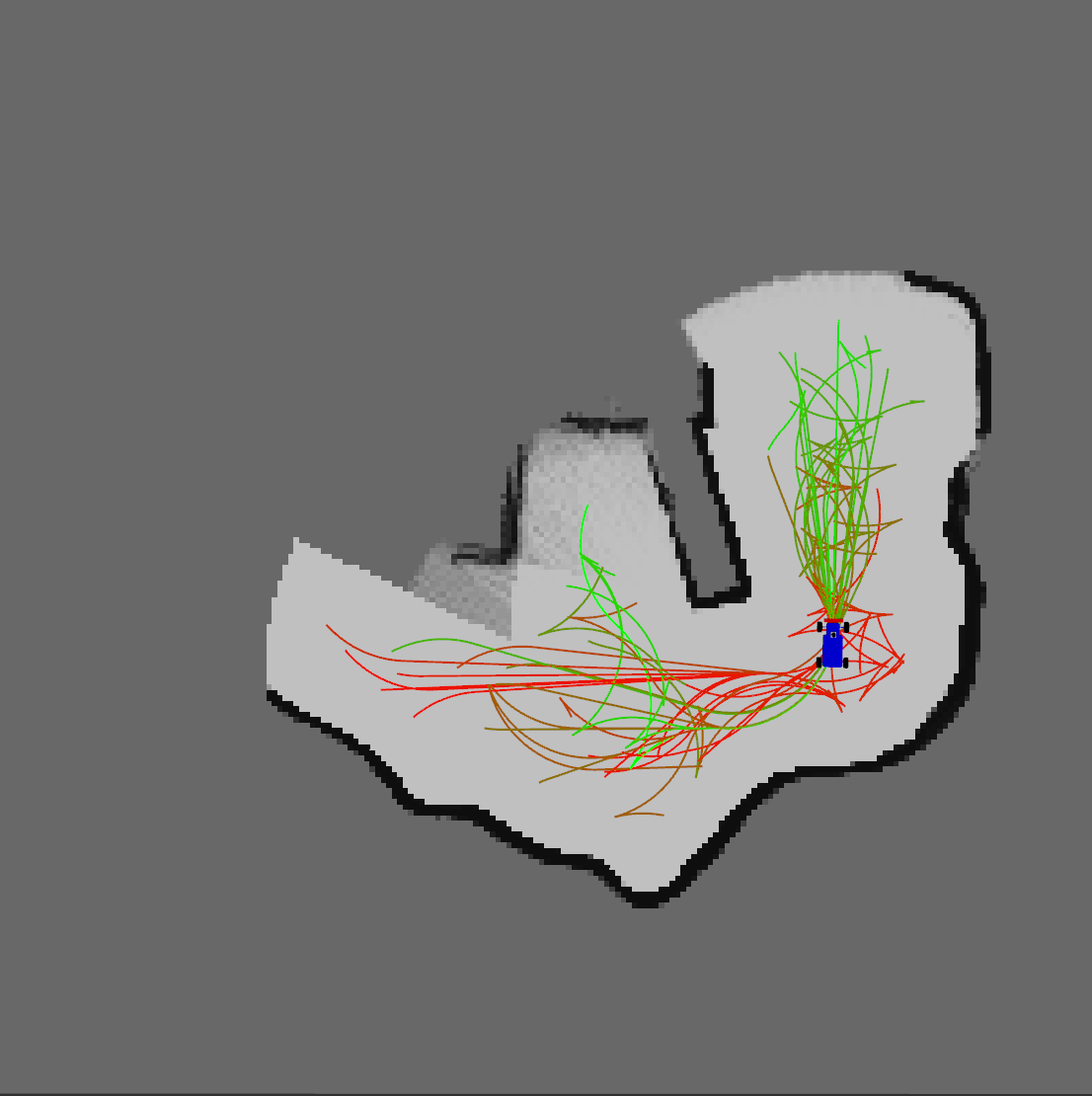}}

\subfigure[Representative samples of the map's entropy over the course of exploration.]{
\label{fig:exp_real_entropy}\includegraphics[trim={0 0 0 0},clip,width=0.95\columnwidth]{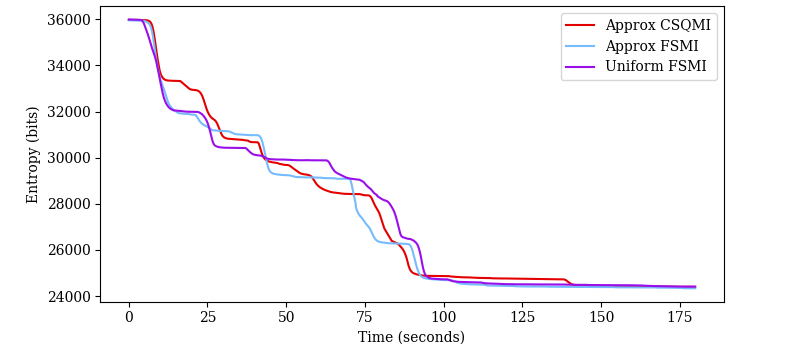}}
\caption{Real experiments with a car in a 2D environment.}
\end{figure}

\begin{figure*}
\centering
\begin{tabular}{cc}
\subfigure[The car begins exploration, revealing explorable areas on either side of the map.]{
\includegraphics[width=0.45\columnwidth]{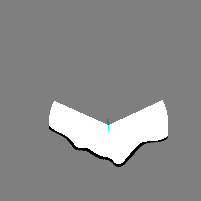}
\includegraphics[width=0.45\columnwidth]{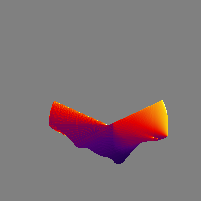}
}&
\subfigure[The car moves towards the larger of the two frontiers.]{
\includegraphics[width=0.45\columnwidth]{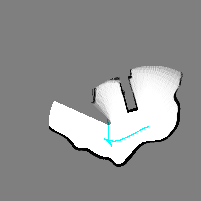}
\includegraphics[width=0.45\columnwidth]{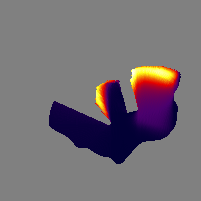}
}
\\
\subfigure[The car reaches the end of this branch with two small frontiers occluded by its field of view.]{
\includegraphics[width=0.45\columnwidth]{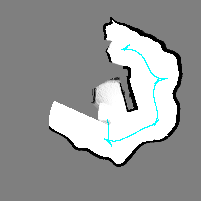}
\includegraphics[width=0.45\columnwidth]{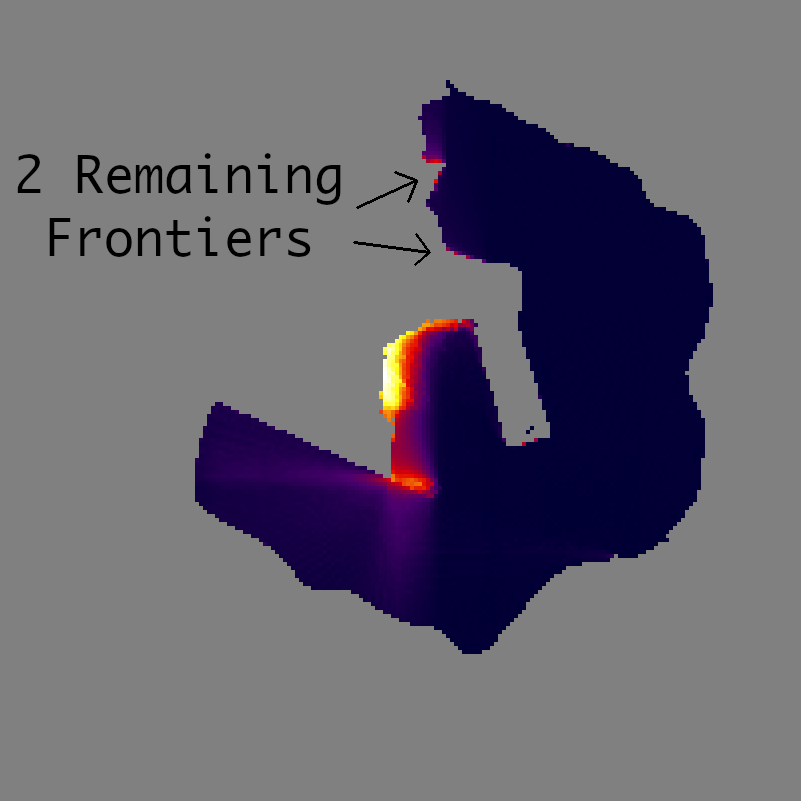}
}&
\subfigure[The car turns around to see an occluded corner, completing the right side of the map.]{
\includegraphics[width=0.45\columnwidth]{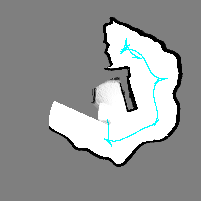}
\includegraphics[width=0.45\columnwidth]{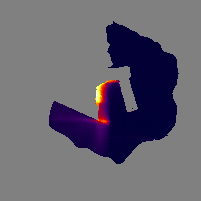}
}
\\
\subfigure[The car turns around to reveal the small inlet and the other side of the map.]{
\includegraphics[width=0.45\columnwidth]{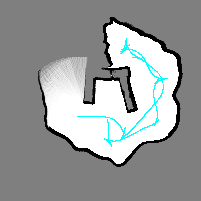}
\includegraphics[width=0.45\columnwidth]{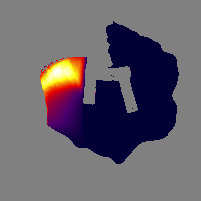}
}&
\subfigure[The car reaches the end of the branch, again revealing all but two frontiers.]{
\includegraphics[width=0.45\columnwidth]{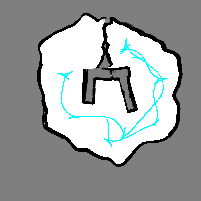}
\includegraphics[width=0.45\columnwidth]{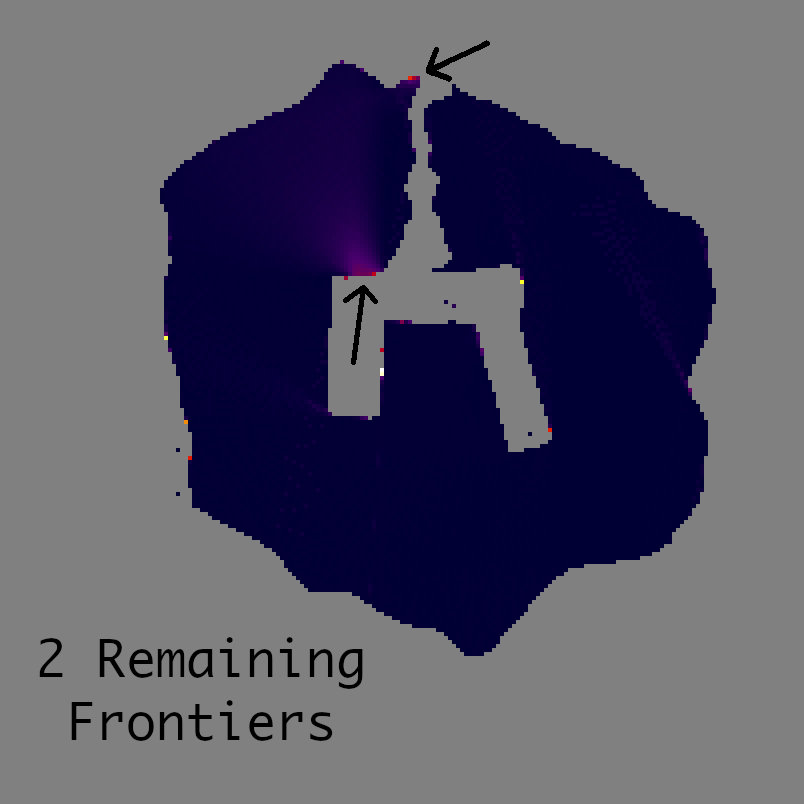}
}
\\
\subfigure[The car maneuvers to explore the last two corners, competing the map.]{
\includegraphics[width=0.45\columnwidth]{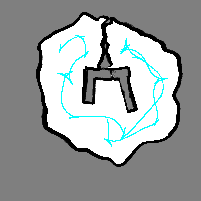}
\includegraphics[width=0.45\columnwidth]{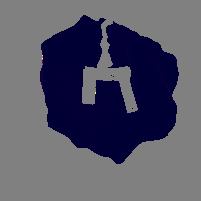}
}&
\subfigure[The real world environment and the car in its starting position.]{
\includegraphics[width=0.95\columnwidth]{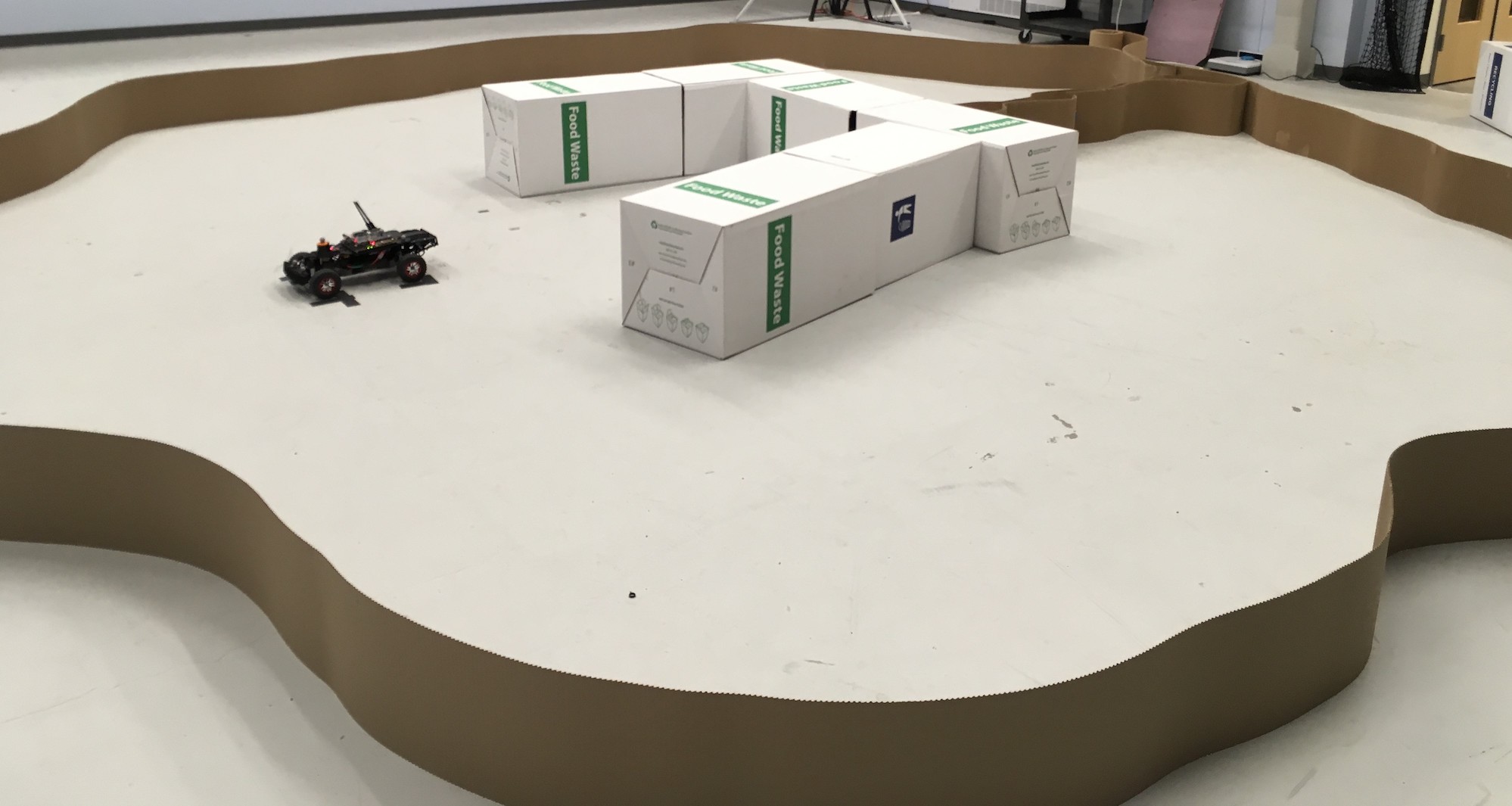}
\label{fig:real_car}
}

\end{tabular}
\caption{
A timelapse of a 2D exploration experiment. On the left, the black and white figures are occupancy maps where the dark pixels are the most likely to be occupied.
The blue line shows the path taken by the car.
On the right the heat map figures are the corresponding mutual information surfaces with bright colors representing scanning locations with high mutual information.
Accompanying video: \url{https://youtu.be/6Ia0conjKMQ}
}
\label{fig:2d_storyboard}
\end{figure*}

\begin{figure*}[t]
\centering
\begin{tabular}{cc}
\subfigure[The car begins exploration, revealing areas with high MI in the top and bottom of the map.]{
\includegraphics[width=0.45\columnwidth,trim={570 90 720 80},clip]{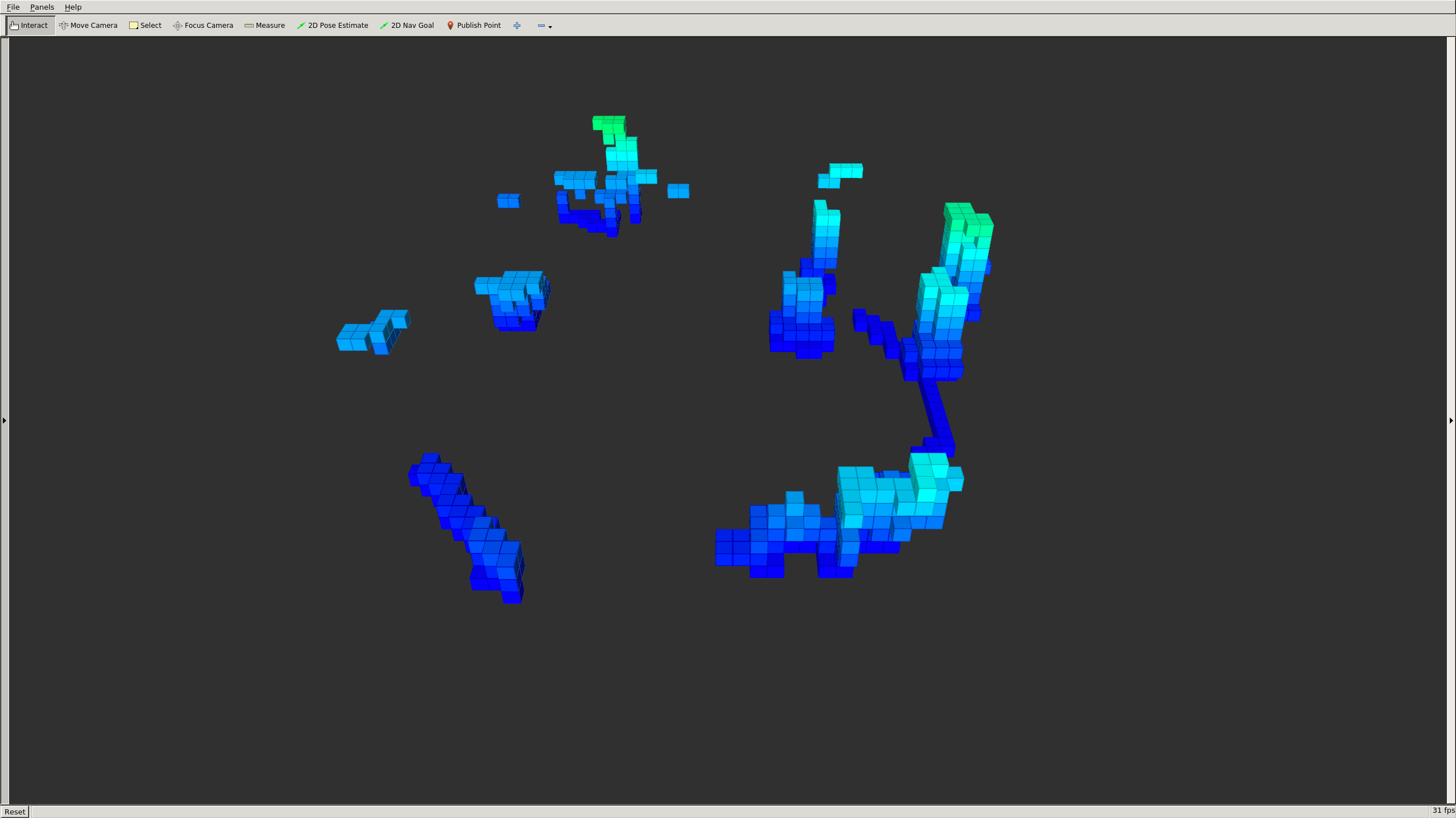}
\includegraphics[width=0.45\columnwidth,angle=90,trim={57 57 90 90},clip]{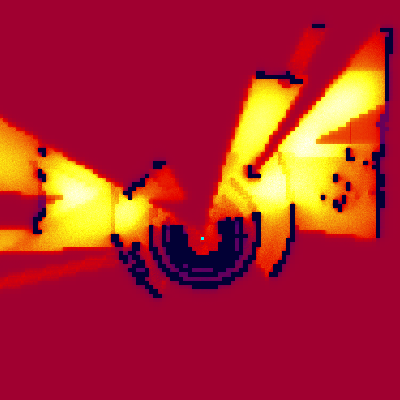}
}&
\subfigure[The car moves to the top left revealing a large amount of free space (not visible in the 3D map) as well as the top of the tree in the bottom right. The regions in the lower half of the map now have the highest MI.]{
\includegraphics[width=0.45\columnwidth,trim={570 90 720 80},clip]{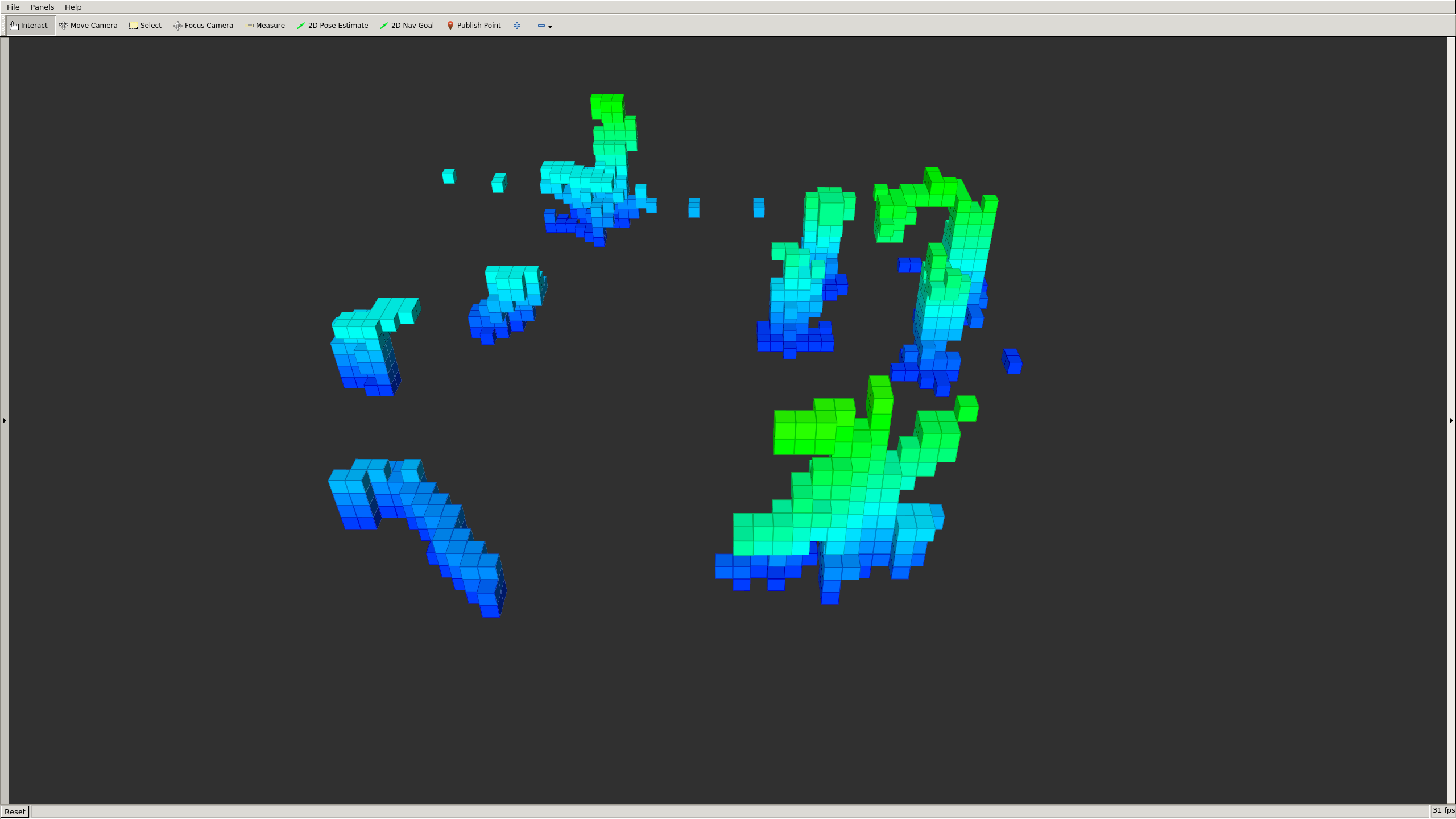}
\includegraphics[width=0.45\columnwidth,angle=90,trim={57 57 90 90},clip]{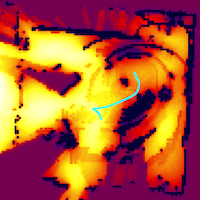}
}
\\
\subfigure[The car moves to the bottom of the map revealing the top of the arch in the top left and the backs of the box in the bottom left and tree in the bottom right.]{
\includegraphics[width=0.45\columnwidth,trim={570 90 720 80},clip]{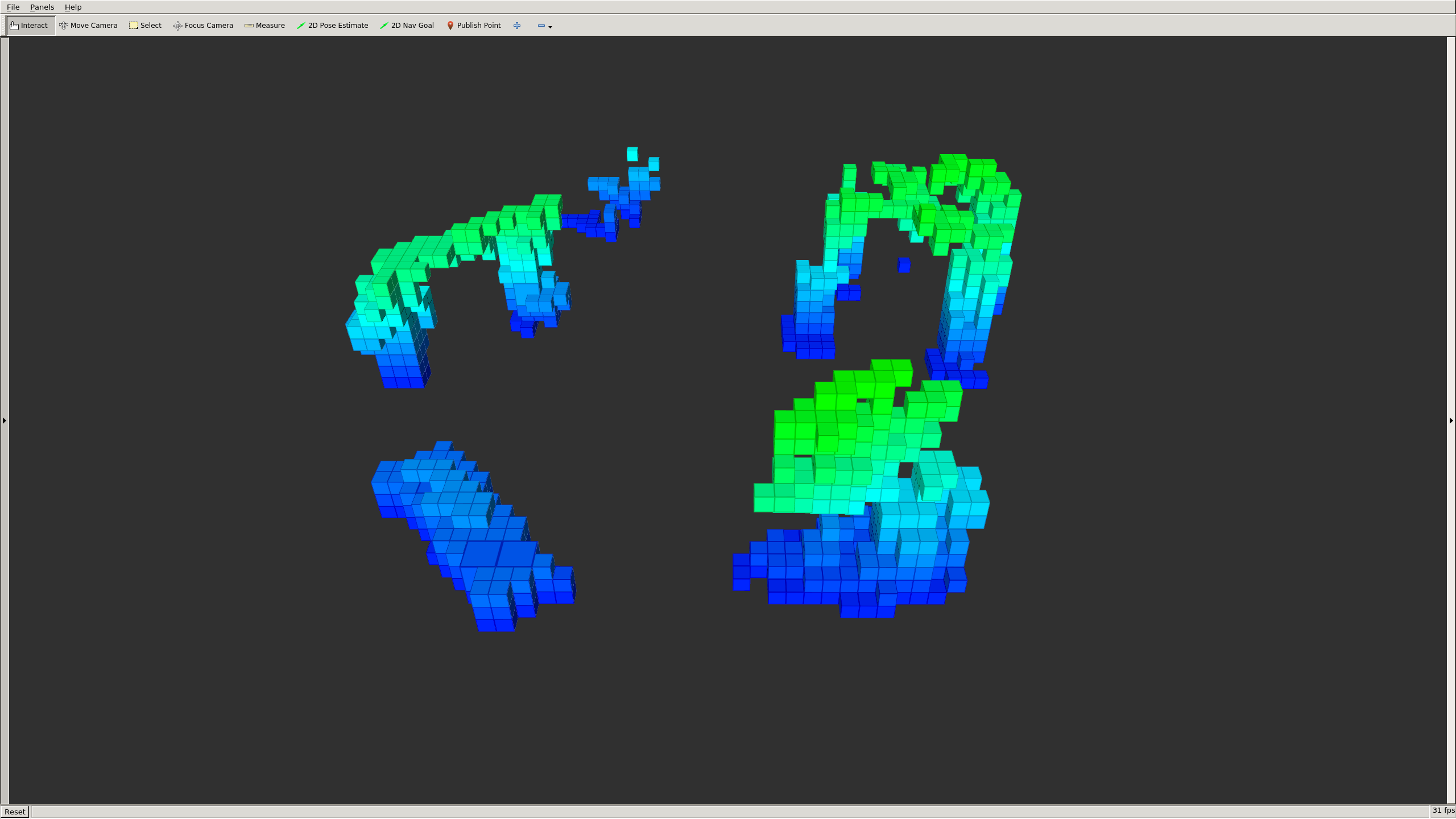}
\includegraphics[width=0.45\columnwidth,angle=90,trim={57 57 90 90},clip]{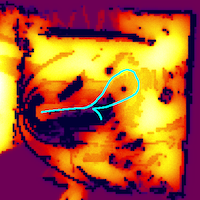}
}&
\subfigure[The car moves to a point in the far top left with, perhaps because it will reveal the top of the cat in the top right. In doing so, the arch occludes it from the motion capture cameras, distorting the map. The break in the trajectory is visible in the next frame.]{
\includegraphics[width=0.45\columnwidth,trim={570 90 720 80},clip]{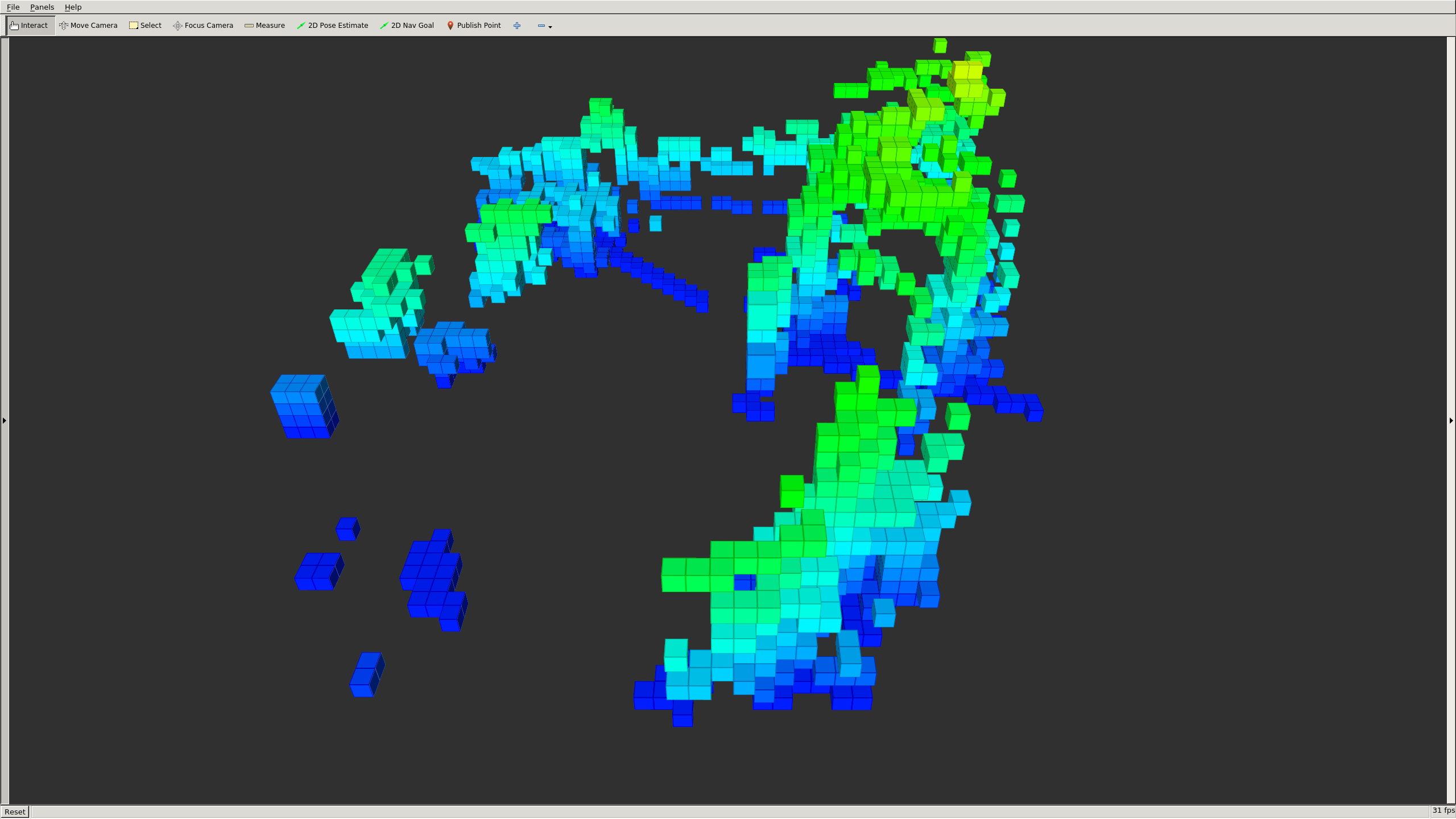}
\includegraphics[width=0.45\columnwidth,angle=90,trim={57 57 90 90},clip]{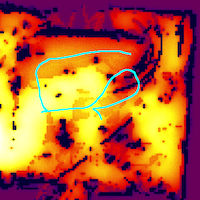}
}
\\
\subfigure[The car moves to the bottom right and recovers from most of the distortion.]{
\includegraphics[width=0.45\columnwidth,trim={570 90 720 80},clip]{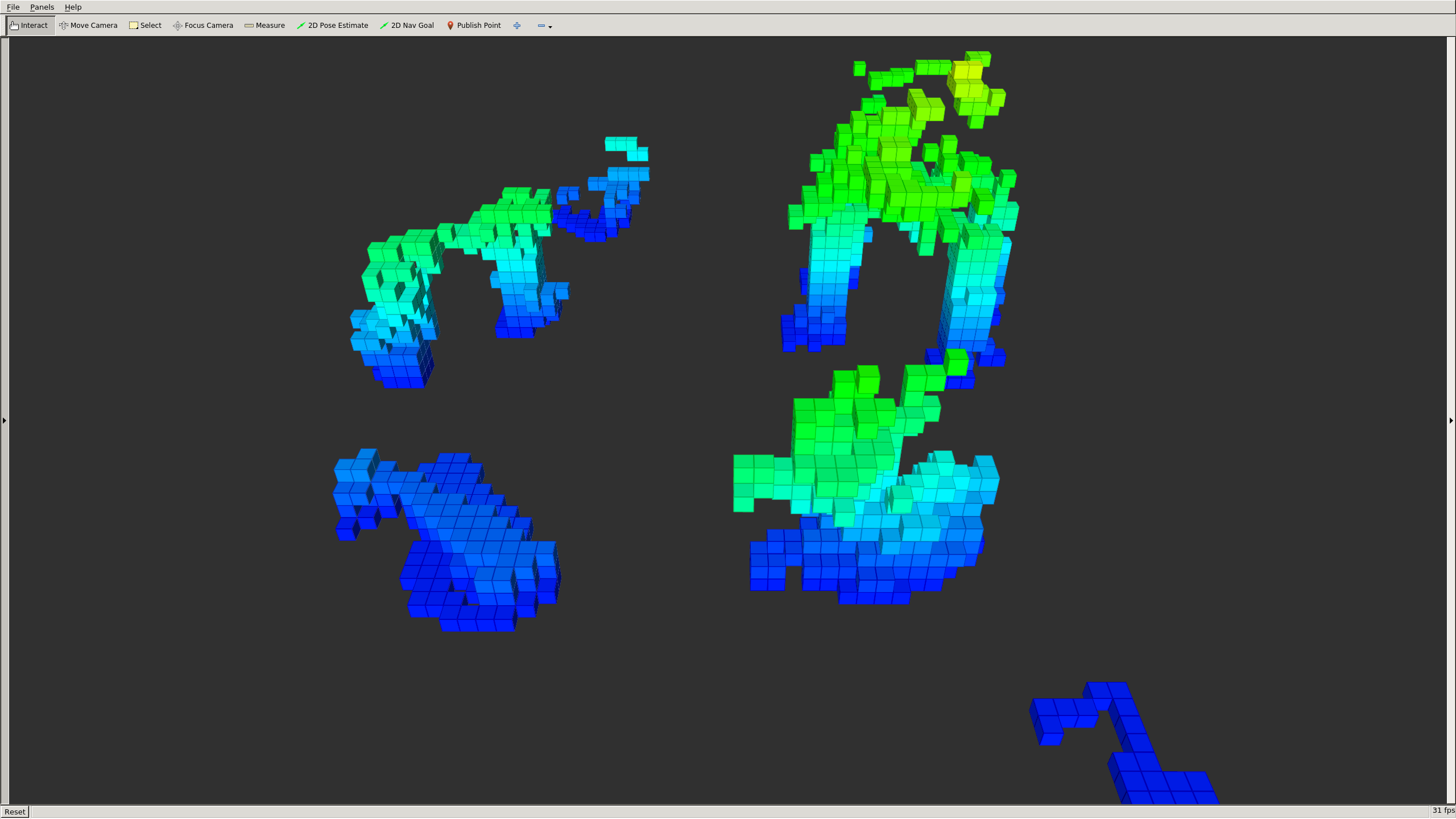}
\includegraphics[width=0.45\columnwidth,angle=90,trim={57 57 90 90},clip]{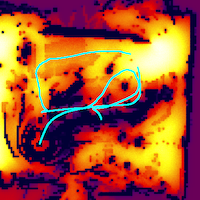}
}&
\subfigure[The car moves to the top left and recovers some information about the back of the arch and cat (not visible in these figures).]{
\includegraphics[width=0.45\columnwidth,trim={570 90 720 80},clip]{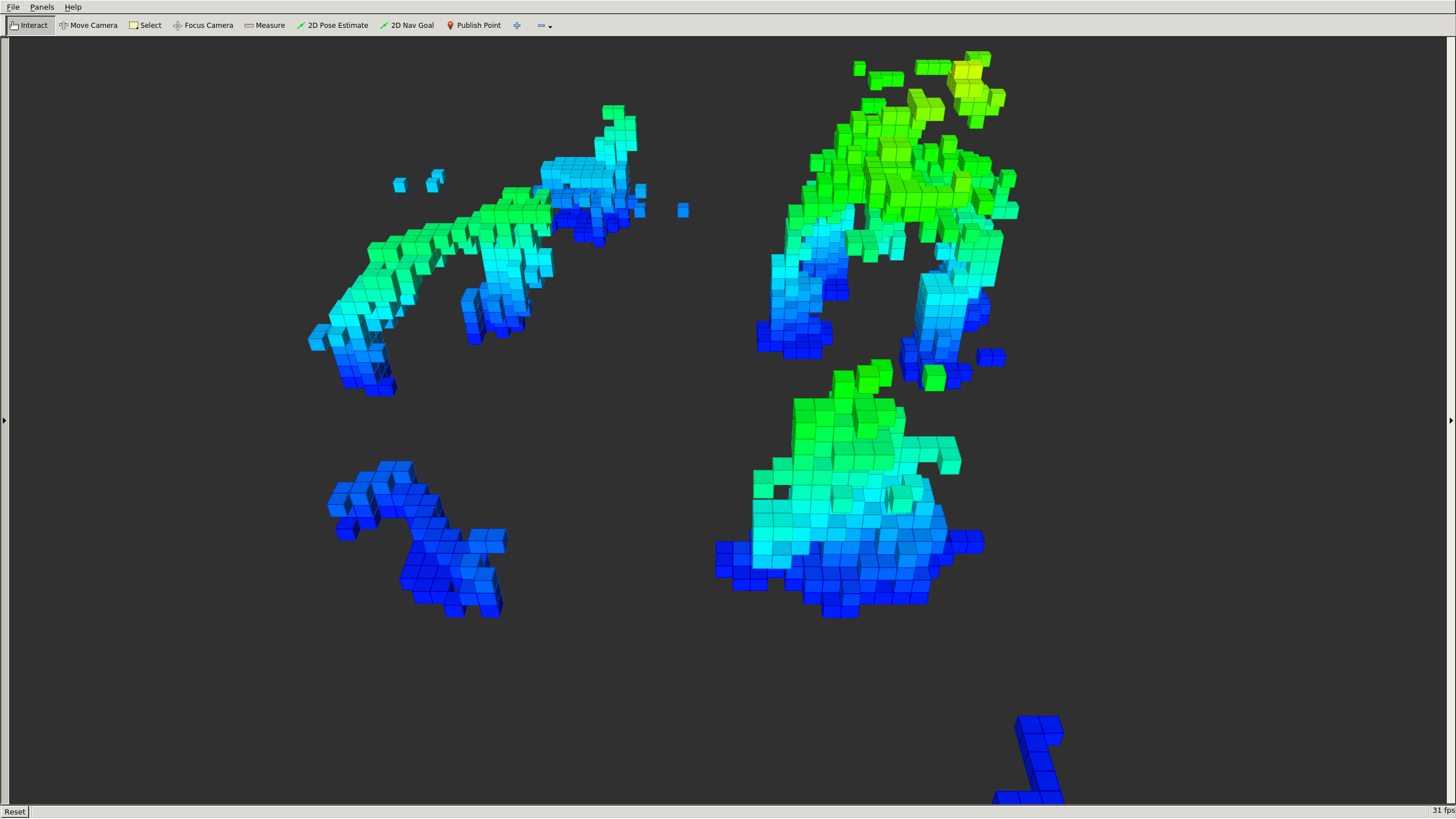}
\includegraphics[width=0.45\columnwidth,angle=90,trim={57 57 90 90},clip]{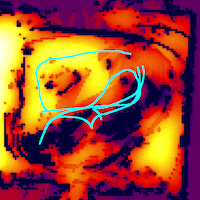}
}
\\
\subfigure[The car moves to the bottom left, improving the top of the cat as well as the box.]{
\includegraphics[width=0.45\columnwidth,trim={570 90 720 80},clip]{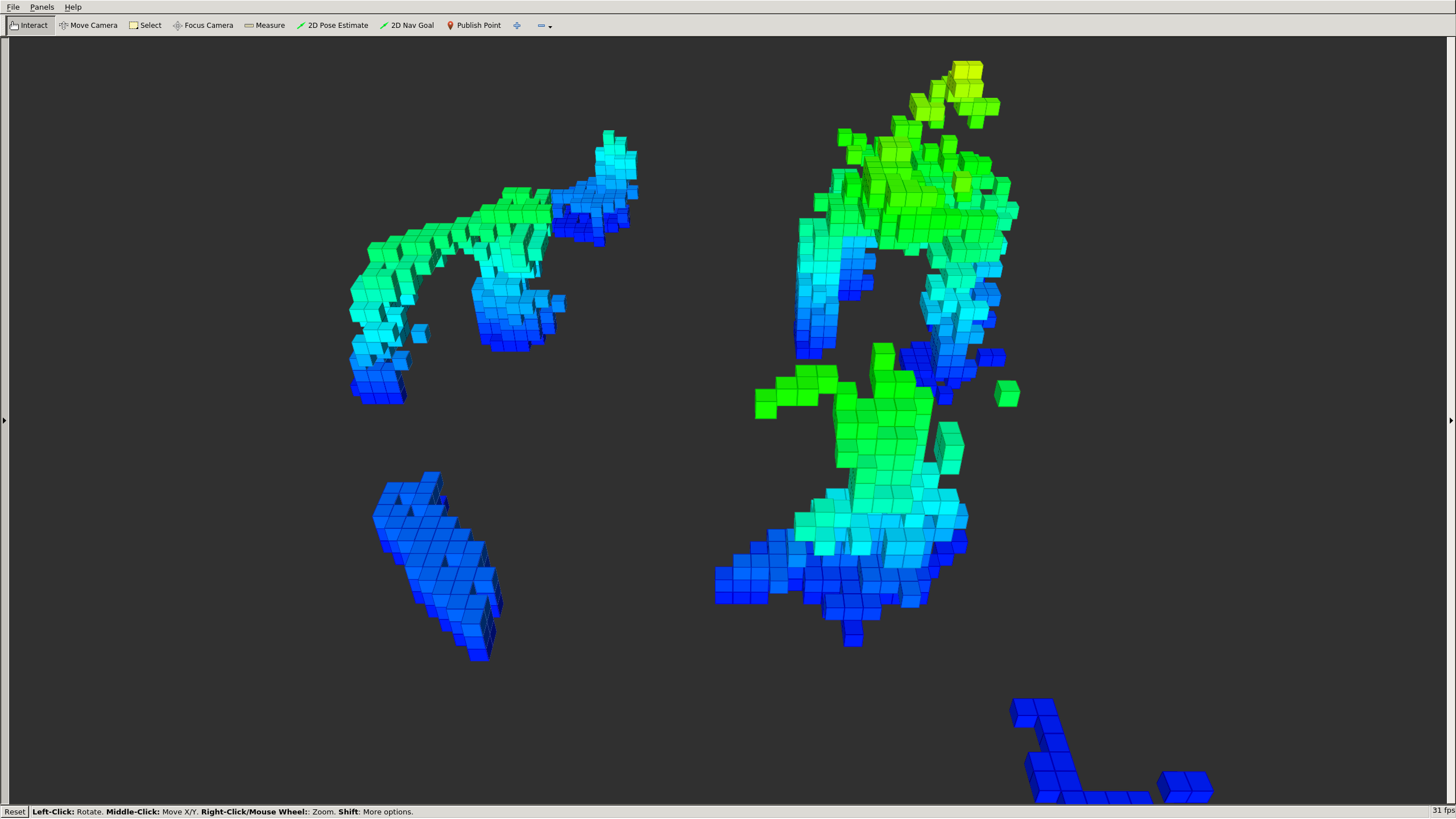}
\includegraphics[width=0.45\columnwidth,angle=90,trim={57 57 90 90},clip]{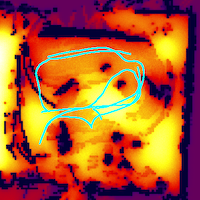}
}&
\subfigure[The environment featuring an arch, a giant cat, a box and a tree shown from the same perspective.]{
\includegraphics[width=0.67\columnwidth]{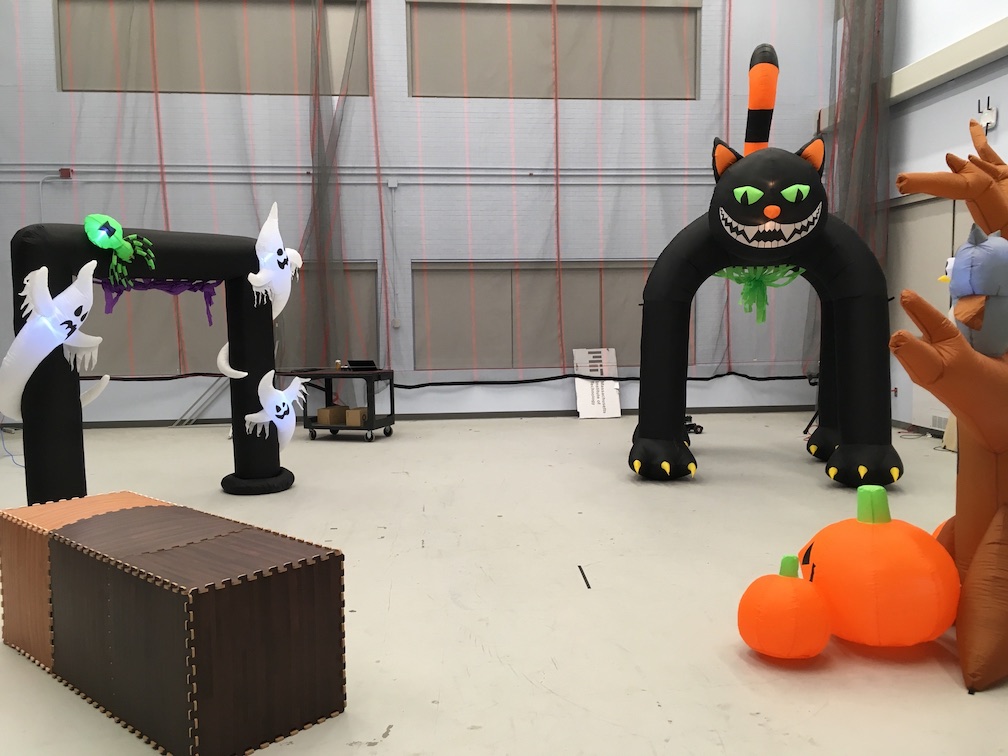}
\label{fig:real_car_3D}
}
\end{tabular}
\caption{
A timelapse of a 3D exploration experiment in the environment pictured in (h).
The tallest obstacle, the cat, is $\SI{4}{\meter}$ tall, $20\times$ taller than the car.
On the left of each figure is a 3D OctoMap where color indicates the height of a voxel.
On the right, is a 2D projection of the corresponding mutual information surface.
For each point in the image, we compute the mutual information from a scan consisting of beams are randomly sampled from the LiDAR's $360^\circ$ horizontal field of view and $\pm 15^\circ$ vertical field of view.
Localization is done using motion capture cameras as in the 2D experiments; however, this is occasionally lost in the case where the 3D obstacles occlude the camera view.
}
\label{fig:3d_storyboard}
\end{figure*}

\subsection{Computational Experiments for the Approx-FSMI-RLE Algorithm} \label{sec:experiments:approx-fsmi-single}

In this section, we evaluate the Approx-FSMI-RLE algorithm which calculates the Shannon mutual information of occupancy sequences represented using the run-length encoding. We compare the following two algorithms in terms of run time: {\em (i)} the Approx-FSMI-RLE algorithm with truncation parameter set to $\Delta=3$ executing on the compressed measurement using run-length-encoding, {\em (ii)} the Approx-FSMI algorithm with truncation parameter set to $\Delta = 3$ executing on the uncompressed measurement.

We consider a synthetic beam passing through $n=256$ occupancy cells divided into groups of $L$ cells. Within each group, the cells are given the same occupancy values. Run-length encoding compresses this sequence by a factor of $L$, so $L$ can be interpreted as the compression ratio. We run each algorithm 1000 times with randomly-generated occupancy values. 

The results are presented in Table~\ref{table:fsmi_rle_vs_fsmi_speed}. We find that the Approx-FSMI-RLE algorithm is slower than the Approx-FSMI algorithm when $L < 4$. However, it achieves an order of magnitude speedup for $L \ge 32$. To avoid the overhead of the Approx-FSMI-RLE algorithm when $L<4$, we can adaptively switch between using the Approx-FSMI algorithm on the decompressed sequence and using Approx-FSMI-RLE algorithm on compressed sequence depending on whether $L<4$ or not.

\begin{table}[!tb]
\centering
\caption{
    \textup{
        The run time of the Approx-FSMI-RLE algorithm vs that of the FSMI-RLE algorithm (\SI{}{\micro\second}). The baseline algorithm, the Approx-FSMI algorithm that operates on the uncompressed occupancy vector, takes $56.1$\SI{}{\micro\second} to complete.
    }
}
\label{table:fsmi_rle_vs_fsmi_speed}
\resizebox{\columnwidth}{!}{
\begin{tabular}{|c|c|c|c|c|c|c|c|c|}
 \hline
 & $L = 1$ & $L = 2$ & $L = 4$ & $L = 8$ & $L = 16$ & $L = 32$ & $L = 64$ & $L = 128$\\\hline
 \makecell{Run time of \\Approx-FSMI-RLE} & 240.9 & 79.4 & 31.5 & 12.3 & 7.6 & 4.9 & 3.4 & 2.3\\\hline
\makecell{Run time ratio of\\
the two algorithms} & 0.2 & 0.7 & 1.8 & 4.6 & 7.4 & 11.2 & 16.5 & 24.4\\\hline
\end{tabular}
}
\end{table}

\subsection{Real-world Scenario for the Approx-FSMI-RLE Algorithm}\label{sec:experiments:approx-fsmi-real-world}

In this section, we evaluate the Approx-FSMI-RLE algorithm using the same 1/10-scale car-like robot described in Section~\ref{sec:plannar_mapping_exp}.
In these experiments, the car is now equipped with a Velodyne Puck LiDAR with 16 channels with a vertical field of view of $\pm 15^\circ$. The horizontal field of view is limited to $270^\circ$ due to occlusions on the frame of the car.
Although the robot travels on the 2D plane, this range sensor allows it to measure the environment in three dimensions.

We use the Approx-FSMI-RLE algorithm to evaluate mutual information in a three-dimensional mapping task.
We use the OctoMap software package~\citep{octo} to represent the map, which naturally returns run-length encoded occupancy sequences.
Once again we obtain location information from the OptiTrack motion capture system. 
We utilize the RRT$^*$ algorithm as described in Section~\ref{sec:plannar_mapping_exp} to generate a set of possible trajectories and then choose to travel along the one that maximizes the ratio between the mutual information along the path and the length of the path. 
We evaluate the mutual information along each path at $\SI{0.2}{\meter}$ intervals using the Approx-FSMI-RLE algorithm. For this purpose, we randomly sample 100 beams within the LiDAR's $270^\circ$ horizontal field of view and $\pm 15^\circ$ vertical field of view. 

An example experiment including the resulting map, the mutual information surface, and path are shown in Figure~\ref{fig:3d_storyboard} at various intervals. As compared to Section~\ref{sec:plannar_mapping_exp}, where the mutual information gain tended to be highest near boundaries, we observe that mutual information in these three-dimensional experiments is higher in the center of unknown regions. We conclude that due to the car's limited vertical vision this phenomenon exists to enforce the car to ``take a step back'' in order to get a better view.

We repeated the same experiments using the Approx-FSMI algorithm. We observed that, on average, the run-length encoding compresses occupancy sequences by roughly $18$ times. We found that this compression enables the Approx-FSMI-RLE algorithm to be $8$ times faster than the Approx-FSMI in terms of run time, even when we exclude the time to decompress the RLE representation coming from the OctoMap to the uncompressed representation used by the Approx-FSMI algorithm.


\section{Conclusion}
In this paper, we introduced Fast Shannon Mutual Information (FSMI), an algorithm for computing the Shannon mutual information between future measurements and an occupancy grid map. 
For 2D information theoretic mapping scenarios, we introduced three algorithms: FSMI, Approx-FSMI which approximates FSMI with arbitrary precision, and Uniform-FSMI which computes exact Shannon mutual information under the assumption that the measurement noise is uniformly distributed. 
We also extend the algorithms to 3D mapping tasks when an OctoMap data structure is used to represent the map. We introduced the FSMI-RLE algorithm that accelerates the FSMI algorithm with a run-length encoding (RLE) compression technique. To address numerical issues inherent to the FSMI-RLE algorithm, we proposed the Approx-FSMI-RLE algorithm which utilizes Gaussian truncation. We also discussed the Uniform-FSMI-RLE algorithm which parallels the Uniform-FSMI algorithm.

We have rigorously proved guarantees on the correctness and the computational complexity of the proposed algorithms. 
In our computational experiments, we showed that the FSMI algorithm achieved more than three orders of magnitude computational savings when compared to the original Shannon mutual information computation algorithm described in~\citep{julian2014mutual}, while maintaining higher accuracy. 
We showed that the Approx-FSMI runs $7$ times faster than FSMI with negligible precision loss. 
We then showed that the Approx-FSMI algorithm has the same asymptotic computational complexity as the Approx-CSQMI algorithm. We also showed that the Approx-FSMI algorithm has a smaller constant factor than the Approx-CSQMI algorithm, which is measured by the number of multiplication operations, 
In our computational experiments, we observed that indeed the FSMI algorithm runs twice as fast as the Approx-CSQMI algorithm. Moreover, we showed that if the measurement noise follows a uniform distribution, the Shannon mutual information can be computed \emph{exactly} in linear time with respect to occupancy resolution by the Uniform-FSMI algorithm. 
We also conducted synthetic and real-world experiments to demonstrate the performance of the Approx-FSMI-RLE algorithm over the Approx-FSMI. Our experiments show $8$ times of acceleration in practice.

\bibliographystyle{SageH}
\bibliography{egbib}

\begin{thebibliography}{33}
\providecommand{\natexlab}[1]{#1}
\providecommand{\url}[1]{\texttt{#1}}
\providecommand{\urlprefix}{URL }
\expandafter\ifx\csname urlstyle\endcsname\relax
  \providecommand{\doi}[1]{DOI:\discretionary{}{}{}#1}\else
  \providecommand{\doi}{DOI:\discretionary{}{}{}\begingroup
  \urlstyle{rm}\Url}\fi

\bibitem[{Bourgault et~al.(2002)Bourgault, Makarenko, Williams, Grocholsky and
  Durrant-Whyte}]{bourgault2002information}
Bourgault F, Makarenko AA, Williams SB, Grocholsky B and Durrant-Whyte HF
  (2002) Information based adaptive robotic exploration.
\newblock In: \emph{Intelligent Robots and Systems, 2002. IEEE/RSJ
  International Conference on}, volume~1. IEEE, pp. 540--545.

\bibitem[{Burgard et~al.(2005)Burgard, Moors, Stachniss and
  Schneider}]{burgard2005coordinated}
Burgard W, Moors M, Stachniss C and Schneider FE (2005) Coordinated multi-robot
  exploration.
\newblock \emph{IEEE Transactions on robotics} 21(3): 376--386.

\bibitem[{Burri et~al.(2015)Burri, Oleynikova, Achtelik and
  Siegwart}]{burri2015real}
Burri M, Oleynikova H, Achtelik MW and Siegwart R (2015) Real-time
  visual-inertial mapping, re-localization and planning onboard mavs in unknown
  environments.
\newblock In: \emph{Intelligent Robots and Systems (IROS), 2015 IEEE/RSJ
  International Conference on}. IEEE, pp. 1872--1878.

\bibitem[{Cassandra et~al.(1996)Cassandra, Kaelbling and
  Kurien}]{cassandra1996acting}
Cassandra AR, Kaelbling LP and Kurien JA (1996) Acting under uncertainty:
  Discrete bayesian models for mobile-robot navigation.
\newblock In: \emph{Proceedings of IEEE/RSJ International Conference on
  Intelligent Robots and Systems. IROS'96}, volume~2. IEEE, pp. 963--972.

\bibitem[{Charrow et~al.(2015{\natexlab{a}})Charrow, Kahn, Patil, Liu,
  Goldberg, Abbeel, Michael and Kumar}]{charrow2015information}
Charrow B, Kahn G, Patil S, Liu S, Goldberg K, Abbeel P, Michael N and Kumar V
  (2015{\natexlab{a}}) Information-theoretic planning with trajectory
  optimization for dense 3d mapping.
\newblock In: \emph{Robotics: Science and Systems}, volume~6.

\bibitem[{Charrow et~al.(2014)Charrow, Kumar and
  Michael}]{charrow2014approximate}
Charrow B, Kumar V and Michael N (2014) Approximate representations for
  multi-robot control policies that maximize mutual information.
\newblock \emph{Autonomous Robots} 37(4): 383--400.

\bibitem[{Charrow et~al.(2015{\natexlab{b}})Charrow, Liu, Kumar and
  Michael}]{charrow2015csqmi}
Charrow B, Liu S, Kumar V and Michael N (2015{\natexlab{b}})
  Information-theoretic mapping using cauchy-schwarz quadratic mutual
  information.
\newblock In: \emph{Proceedings - IEEE International Conference on Robotics and
  Automation}.

\bibitem[{Cormen et~al.(2009)Cormen, Leiserson, Rivest and
  Stein}]{cormen2009introduction}
Cormen TH, Leiserson CE, Rivest RL and Stein C (2009) \emph{Introduction to
  algorithms}.
\newblock MIT press.

\bibitem[{Elfes(1996)}]{Elfes1996robot}
Elfes A (1996) Robot navigation: Integrating perception, environmental
  constraints and task execution within a probabilistic framework.
\newblock In: \emph{Reasoning with Uncertainty in Robotics}. Springer, pp.
  91--130.

\bibitem[{Endres et~al.(2014)Endres, Hess, Sturm, Cremers and
  Burgard}]{endres20143}
Endres F, Hess J, Sturm J, Cremers D and Burgard W (2014) 3-d mapping with an
  rgb-d camera.
\newblock \emph{IEEE Transactions on Robotics} 30(1): 177--187.

\bibitem[{Gonz{\'a}lez-Banos and Latombe(2002)}]{gonzalez2002navigation}
Gonz{\'a}lez-Banos HH and Latombe JC (2002) Navigation strategies for exploring
  indoor environments.
\newblock \emph{The International Journal of Robotics Research} 21(10-11):
  829--848.

\bibitem[{Hennessy and Patterson(2011)}]{hennessy2011computer}
Hennessy JL and Patterson DA (2011) \emph{Computer architecture: a quantitative
  approach}.
\newblock Elsevier.

\bibitem[{Hess et~al.(2016)Hess, Kohler, Rapp and Andor}]{GoogleCartographer}
Hess W, Kohler D, Rapp H and Andor D (2016) Real-time loop closure in 2d lidar
  slam.
\newblock In: \emph{2016 IEEE International Conference on Robotics and
  Automation (ICRA)}. pp. 1271--1278.

\bibitem[{Holz et~al.(2011)Holz, Basilico, Amigoni, Behnke
  et~al.}]{holz2011comparative}
Holz D, Basilico N, Amigoni F, Behnke S et~al. (2011) A comparative evaluation
  of exploration strategies and heuristics to improve them.
\newblock In: \emph{ECMR}. pp. 25--30.

\bibitem[{Hornung et~al.(2013)Hornung, Wurm, Bennewitz, Stachniss and
  Burgard}]{octo}
Hornung A, Wurm KM, Bennewitz M, Stachniss C and Burgard W (2013) {OctoMap}: An
  efficient probabilistic {3D} mapping framework based on octrees.
\newblock \emph{Autonomous Robots} \doi{10.1007/s10514-012-9321-0}.
\newblock \urlprefix\url{http://octomap.github.com}.
\newblock Software available at \url{http://octomap.github.com}.

\bibitem[{Julian et~al.(2014)Julian, Karaman and Rus}]{julian2014mutual}
Julian BJ, Karaman S and Rus D (2014) On mutual information-based control of
  range sensing robots for mapping applications.
\newblock \emph{The International Journal of Robotics Research} 33(10):
  1375--1392.

\bibitem[{Karaman and Frazzoli(2011)}]{rrtstar}
Karaman S and Frazzoli E (2011) Sampling-based algorithms for optimal motion
  planning.
\newblock \emph{The international journal of robotics research} 30(7):
  846--894.

\bibitem[{Kollar and Roy(2008)}]{kollar2008efficient}
Kollar T and Roy N (2008) Efficient optimization of information-theoretic
  exploration in slam.
\newblock In: \emph{AAAI}, volume~8. pp. 1369--1375.

\bibitem[{Marchant and Ramos(2014)}]{marchant2014bayesian}
Marchant R and Ramos F (2014) Bayesian optimisation for informative continuous
  path planning.
\newblock In: \emph{Robotics and Automation (ICRA), 2014 IEEE International
  Conference on}. IEEE, pp. 6136--6143.

\bibitem[{Moorehead et~al.(2001)Moorehead, Simmons and
  Whittaker}]{moorehead2001autonomous}
Moorehead SJ, Simmons R and Whittaker WL (2001) Autonomous exploration using
  multiple sources of information.
\newblock In: \emph{Robotics and Automation, 2001. Proceedings 2001 ICRA. IEEE
  International Conference on}, volume~3. IEEE, pp. 3098--3103.

\bibitem[{Moravec(1996)}]{moravec1996robot}
Moravec H (1996) Robot spatial perceptionby stereoscopic vision and 3d evidence
  grids.
\newblock \emph{Perception} .

\bibitem[{Nelson and Michael(2015)}]{nelson2015information}
Nelson E and Michael N (2015) Information-theoretic occupancy grid compression
  for high-speed information-based exploration.
\newblock In: \emph{Intelligent Robots and Systems (IROS), 2015 IEEE/RSJ
  International Conference on}. IEEE, pp. 4976--4982.

\bibitem[{Principe(2010)}]{principe2010information}
Principe JC (2010) \emph{Information theoretic learning: Renyi's entropy and
  kernel perspectives}.
\newblock Springer Science \& Business Media.

\bibitem[{Rabaey et~al.(2002)Rabaey, Chandrakasan and
  Nikolic}]{rabaey2002digital}
Rabaey JM, Chandrakasan AP and Nikolic B (2002) \emph{Digital integrated
  circuits}, volume~2.
\newblock Prentice hall Englewood Cliffs.

\bibitem[{Reeds and Shepp(1990)}]{reedsshepp}
Reeds J and Shepp L (1990) Optimal paths for a car that goes both forwards and
  backwards.
\newblock \emph{Pacific journal of mathematics} 145(2): 367--393.

\bibitem[{Robinson and Cherry(1967)}]{robinson1967results}
Robinson A and Cherry C (1967) Results of a prototype television bandwidth
  compression scheme.
\newblock \emph{Proceedings of the IEEE} 55(3): 356--364.

\bibitem[{Roth-Tabak and Jain(1989)}]{roth1989building}
Roth-Tabak Y and Jain R (1989) Building an environment model using depth
  information.
\newblock \emph{Computer} 22(6): 85--90.

\bibitem[{Shen et~al.(2012)Shen, Michael and Kumar}]{shen2012stochastic}
Shen S, Michael N and Kumar V (2012) Stochastic differential equation-based
  expl oration algorithm for autonomous indoor 3d exploration with a
  micro-aerial vehicle.
\newblock \emph{The International Journal of Robotics Research} 31(12):
  1431--1444.

\bibitem[{Tabib et~al.(2016)Tabib, Corah, Michael and
  Whittaker}]{Tabib2016Computationally}
Tabib W, Corah M, Michael N and Whittaker R (2016) Computationally efficient
  information-theoretic exploration of pits and caves.
\newblock In: \emph{2016 IEEE/RSJ International Conference on Intelligent
  Robots and Systems (IROS)}. IEEE.

\bibitem[{Thrun et~al.(2005)Thrun, Burgard and Fox}]{thrun2005probabilistic}
Thrun S, Burgard W and Fox D (2005) \emph{Probabilistic robotics}.
\newblock MIT press.

\bibitem[{Visser and Slamet(2008)}]{visser2008balancing}
Visser A and Slamet BA (2008) Balancing the information gain against the
  movement cost for multi-robot frontier exploration.
\newblock In: \emph{European Robotics Symposium 2008}. Springer, pp. 43--52.

\bibitem[{Whelan et~al.(2015)Whelan, Kaess, Johannsson, Fallon, Leonard and
  McDonald}]{whelan2015real}
Whelan T, Kaess M, Johannsson H, Fallon M, Leonard JJ and McDonald J (2015)
  Real-time large-scale dense rgb-d slam with volumetric fusion.
\newblock \emph{The International Journal of Robotics Research} 34(4-5):
  598--626.

\bibitem[{Yamauchi(1997)}]{yamauchi1997frontier}
Yamauchi B (1997) A frontier-based approach for autonomous exploration.
\newblock In: \emph{Computational Intelligence in Robotics and Automation,
  1997. CIRA'97., Proceedings., 1997 IEEE International Symposium on}. IEEE,
  pp. 146--151.

\end{thebibliography}

\appendix
\section{Analytical Solution to the Summation}
\label{sec:appendix_analytic}
This section derives the closed-form solution to the following two terms\begin{equation}
\begin{split}
\label{eqn:AB_original}
A[x, L_u, L_v, t] &= \sum_{j=0}^{L_u - 1}\sum_{k=0}^{L_v - 1} x^j \exp{\left(-\frac{(j-k+t)^2}{2\sigma^2}\right)}\\
B[x, L_u, L_v, t] &= \sum_{j=0}^{L_u - 1}\sum_{k=0}^{L_v - 1} k\cdot x^j \exp{\left(-\frac{-(j-k+t)^2}{2\sigma^2}\right)}.
\end{split}
\end{equation}
As stated before, we have not found an analytic way to evaluate $A[x, L_u, L_v, t]$ and $B[x, L_u, L_v, t]$. Fortunately, if we allow for slight approximation, the closed-form solutions to both terms show up.
Specifically, we approximate the summation $A[x, L_u, L_v, t] = \sum_{j=0}^{L_u - 1}\sum_{k=0}^{L_v - 1} x^{j} \exp{\left(-\frac{(j-k+t)^2}{2\sigma^2}\right)}$ by integration:
\begin{equation}
\label{eqn:approx_xi_exp}
\int_{-\frac{1}{2}}^{L_u - \frac{1}{2}} \int_{-\frac{1}{2}}^{L_v - \frac{1}{2}}  x^j \exp{\left(-\frac{\left(j-k+t\right)^2}{2\sigma^2}\right)} \mathrm{d}j \mathrm{d}k.
\end{equation}

To derive the closed-form solution to this integration, we need the following lemma:
\begin{lemma}
Let $\erfc(\cdot)$ be the complementary error function. Let $0 < x < 1$. We have
\begin{equation}
\begin{split}
    &\theta_{\sigma, x}(a_1, a_2, t) = \int_{a_1}^{\infty} \int_{a_2}^{\infty} x^{j} \exp{\left(-\frac{\left(j - k + t\right)^2}{2\sigma^2}\right)} \mathrm{d}j \mathrm{d}k = \\
    &\frac{1}{\log{x}} \Bigg(\sqrt{\frac{\pi}{2}} x^{-t}\sigma'\Bigg(x^{a_1 + t}\left(-2 + \erfc\left(\frac{a_1 - a_2 + t}{\sqrt{2}\sigma}\right)\right) - \\
    & \mathrm{e}^{\frac{1}{2} \sigma^2\cdot  \log^2{x}} x^{a_2} \erfc\left(\frac{a_1 - a_2 + t-\sigma^2 \log^2{\bar{o}_u}}{\sqrt{2}\sigma}\right)\Bigg)\Bigg) \Bigg).
\end{split}
\end{equation}
In addition, $\theta_{\sigma, x}(a_1, a_2, t)$ can be evaluated in $O(1)$.
\end{lemma}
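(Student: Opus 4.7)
The plan is to reduce the double integral to a one-dimensional integral, then apply integration by parts to obtain a closed form in terms of $\erfc$. The two key tools are the identity $\int_{a}^{\infty} \mathrm{e}^{-(y-c)^2/(2\sigma^2)}\,\mathrm{d}y = \sigma\sqrt{\pi/2}\,\erfc\!\left(\tfrac{a-c}{\sqrt{2}\sigma}\right)$ and the fact that $0<x<1$, which guarantees both that $x^j \to 0$ as $j \to \infty$ (so boundary terms at infinity vanish) and that $\log x < 0$ is well-defined as a real number (so primitives of $x^j$ are $x^j/\log x$).

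First, I would integrate in $k$. With the substitution $u=(k-j-t)/(\sqrt{2}\sigma)$ the inner integral becomes $\sigma\sqrt{\pi/2}\,\erfc\!\left(\tfrac{a_2-j-t}{\sqrt{2}\sigma}\right)$, yielding
\begin{equation*}
\theta_{\sigma,x}(a_1,a_2,t) \;=\; \sigma\sqrt{\tfrac{\pi}{2}} \int_{a_1}^{\infty} x^j\,\erfc\!\left(\tfrac{a_2-j-t}{\sqrt{2}\sigma}\right)\mathrm{d}j.
\end{equation*}
Second, I would apply integration by parts with $u=\erfc(\cdot)$ and $\mathrm{d}v = x^j\,\mathrm{d}j$. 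Using $\tfrac{\mathrm{d}}{\mathrm{d}j}\erfc\!\left(\tfrac{a_2-j-t}{\sqrt{2}\sigma}\right) = \tfrac{\sqrt{2}}{\sqrt{\pi}\sigma}\mathrm{e}^{-(a_2-j-t)^2/(2\sigma^2)}$ and $v = x^j/\log x$, the boundary contribution at $j=\infty$ vanishes (since $x^j \to 0$ and $\erfc\to 2$ stays bounded), while the contribution at $j=a_1$ yields the first term of the claimed formula after applying $\erfc(-z) = 2 - \erfc(z)$ to flip the sign of the argument.

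Third, the only remaining ingredient is the integral $J := \int_{a_1}^{\infty} x^j\,\mathrm{e}^{-(a_2-j-t)^2/(2\sigma^2)}\,\mathrm{d}j$. The main calculation is to complete the square: writing $x^j = \mathrm{e}^{j\log x}$ and combining the exponents, the shifted variable $s = j - (a_2 - t)$ gives
\begin{equation*}
j\log x - \tfrac{(a_2-j-t)^2}{2\sigma^2} \;=\; -\tfrac{(s-\sigma^2\log x)^2}{2\sigma^2} + \tfrac{\sigma^2 (\log x)^2}{2} + (a_2-t)\log x,
\end{equation*}
so another Gaussian substitution produces $J = \sigma\sqrt{\pi/2}\,\mathrm{e}^{\sigma^2(\log x)^2/2}\,x^{a_2-t}\,\erfc\!\left(\tfrac{a_1-a_2+t-\sigma^2\log x}{\sqrt{2}\sigma}\right)$. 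Substituting $J$ back and factoring out $x^{-t}$ from both terms (so that $x^{a_1} = x^{-t}\cdot x^{a_1+t}$ and $x^{a_2-t} = x^{-t}\cdot x^{a_2}$) produces the claimed closed form.

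I expect the main obstacle to be the bookkeeping of the completion of the square and the sign conventions in the $\erfc$ arguments, rather than any conceptual difficulty; the $O(1)$ evaluation claim then follows immediately because the closed form is a fixed finite combination of elementary operations, exponentials, and $\erfc$, each of which is standardly treated as constant-time (e.g., via the usual rational/Chebyshev approximations in math libraries).
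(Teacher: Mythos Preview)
The paper states this lemma in the appendix without proof, so there is no argument to compare against. Your proposal is correct and is the natural route: integrate out $k$ to obtain an $\erfc$, apply integration by parts in $j$ using $v=x^{j}/\log x$ (the boundary term at $j=\infty$ vanishes because $0<x<1$ forces $x^{j}\to 0$ while $\erfc$ stays bounded), and then evaluate the residual Gaussian-times-exponential integral by completing the square. Your computation of $J$ and the subsequent factoring of $x^{-t}$ match the stated formula term by term; the apparent discrepancies ($\sigma'$ in place of $\sigma$, and $\log^{2}\bar{o}_{u}$ in place of $\log x$ inside the second $\erfc$) are typos in the paper's displayed expression, not errors on your part. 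The $O(1)$ claim is immediate from the closed form, exactly as you say.
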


Based on the lemma, the analytic solution is:

\begin{theorem}[Closed-form solution to the approximation]
Eqn.~\eqref{eqn:approx_xi_exp} can be evaluated in $O(1)$ as follows:
\begin{equation}
\begin{split}
&\int_{-\frac{1}{2}}^{L_u - \frac{1}{2}} \int_{-\frac{1}{2}}^{L_v - \frac{1}{2}}x^{j} \exp{\left(-\frac{(j + t - k)^2}{2\sigma^2}\right)} \mathrm{d}j \mathrm{d}k = \\
&\theta_{\sigma,x}(-\frac{1}{2}, -\frac{1}{2}, t) - \theta_{\sigma, x}(L_u - \frac{1}{2}, -\frac{1}{2}, s_u - s_v) - \\
&\theta_{\sigma, x}(- \frac{1}{2}, L_v - \frac{1}{2}, t) + \theta_{\sigma, x}(L_u - \frac{1}{2}, L_v - \frac{1}{2}, t)
\end{split}
\end{equation}
\end{theorem}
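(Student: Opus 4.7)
The plan is to prove this theorem via an inclusion-exclusion decomposition of the rectangular integration domain, reducing the bounded double integral to a signed combination of four $\theta_{\sigma,x}$ evaluations whose closed forms are already supplied by the preceding lemma. First I would verify that the integrand $g(j,k) = x^{j}\exp\bigl(-(j+t-k)^2/(2\sigma^2)\bigr)$ is integrable on each of the relevant unbounded tails: the factor $x^{j}$ with $0<x<1$ gives geometric decay as $j\to\infty$, and the Gaussian factor handles the tail in $k$. With integrability in hand, the standard four-corners identity on rectangles yields, for any $a<b$ and $c<d$,
\begin{equation*}
\int_{a}^{b}\!\!\int_{c}^{d} g\, dj\, dk \;=\; \int_{a}^{\infty}\!\!\int_{c}^{\infty}\! g \;-\; \int_{b}^{\infty}\!\!\int_{c}^{\infty}\! g \;-\; \int_{a}^{\infty}\!\!\int_{d}^{\infty}\! g \;+\; \int_{b}^{\infty}\!\!\int_{d}^{\infty}\! g.
\end{equation*}

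Specializing to $a=c=-\tfrac{1}{2}$, $b=L_u-\tfrac{1}{2}$, and $d=L_v-\tfrac{1}{2}$, each of the four resulting integrals matches the definition of $\theta_{\sigma,x}(a_1,a_2,t)$ verbatim, with the same third argument $t$ inherited from the integrand in each term. Substituting the lemma's closed form for each of the four pieces then produces exactly the signed sum stated in the theorem (the $s_u-s_v$ that appears in the second term of the statement is just another name for $t$, so it introduces no new computation). The $O(1)$ complexity claim follows immediately afterwards: the lemma asserts that every evaluation of $\theta_{\sigma,x}$ is $O(1)$, being a constant-size expression in $\erfc$, exponentials, and powers, and the theorem combines only four such evaluations with a constant number of additions.

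The main obstacle is that essentially all of the analytic content in this line of reasoning lives inside the lemma, not inside this theorem; the theorem itself is, in a precise sense, just a bookkeeping step once the lemma is granted. The only places where I would be careful in a write-up are: (i) justifying the absolute convergence needed to rearrange the four improper integrals, which reduces to the geometric-plus-Gaussian tail bound noted above, and (ii) verifying that the sign pattern in the four-corners identity matches the signs appearing in the theorem statement. Neither step is delicate, so the proof itself should fit in a few lines once the lemma is in hand.
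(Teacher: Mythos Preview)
Your proposal is correct. The paper actually states this theorem without proof, so there is no argument to compare against line by line, but the four-corners inclusion--exclusion you describe is exactly the decomposition encoded in the theorem's right-hand side and is clearly the intended derivation: each corner integral matches the lemma's definition of $\theta_{\sigma,x}(a_1,a_2,t)$, and the $O(1)$ claim follows because the lemma already certifies each $\theta_{\sigma,x}$ evaluation as $O(1)$. Your observations that absolute integrability (from $0<x<1$ and the Gaussian factor) justifies the rearrangement, and that the stray $s_u-s_v$ in the second term is simply $t$, are both right and are the only points that need any care.
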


Although this solution is closed-form and has $O(1)$ time complexity, evaluating it on the fly requires eight evaluation of the complementary error function and tens of expensive operations including exponential and square-root.

Similarly, we approximate $B[x, L_u, L_v, t] = \sum_{j=0}^{L_u - 1}\sum_{k=0}^{L_v - 1} k\cdot x^j \exp{\left(-\frac{\left(j - k + t\right)^2}{2\sigma^2}\right)}$ by 
\begin{equation}
\label{eqn:approx_j_xi_exp}
\int_{-\frac{1}{2}}^{L_u - \frac{1}{2}} \int_{-\frac{1}{2}}^{L_v - \frac{1}{2}} k\cdot x^j \exp{\left(-\frac{\left(j - k + t\right)^2}{2\sigma^2}\right)} \mathrm{d}j \mathrm{d}k.
\end{equation}

We can also show that Equation~\eqref{eqn:approx_j_xi_exp} has a closed-form solution which can be evaluated in $O(1)$. However, the solution is much more complicated than the solution for Equation~\eqref{eqn:approx_xi_exp} and we skip it in this paper. 
\end{document}